\newtheorem{lemma}{Lemma}
\newtheorem{example}{Example}
\newtheorem{theorem}{Theorem}
\newtheorem{definition}{Definition}
\newtheorem{assumption}{Assumption}
\newcommand{\Norm}[1]{\left\|#1\right\|}
\newcommand{\dotprod}[2]{\left\langle#1,#2\right\rangle}
\newcommand{\Kr}{K_{\rho}}
\newcommand{\paren}[1]{\left( #1 \right)}
\newcommand{\bw}{\mathbf{w}}
\newcommand{\bx}{\mathbf{x}}
\newcommand{\by}{\mathbf{y}}
\newcommand{\bz}{\mathbf{z}}
\newcommand{\bn}{\mathbf{n}}
\newcommand{\bg}{\mathbf{g}}
\newcommand{\EE}{\mathbb{E}}
\newcommand{\cO}{\mathcal{O}}
\newcommand{\RR}{\mathbb{R}}
\title{\textbf{Theoretical Analysis on how Learning Rate Warmup Accelerates Convergence}}
\author{%
{Yuxing Liu$^{1}$\thanks{Equal Contribution.}, \quad
Yuze Ge$^{1}$\footnotemark[1], \quad
Rui Pan$^{1}$, \quad Kang An$^{2}$, \quad
Tong Zhang$^{1}$}
\\ \\
{
$^{1}$University of Illinois Urbana-Champaign}
\quad {$^{2}$Rice University } \\
\normalsize\texttt{\{yuxing6,ruip4,tozhang\}@illinois.edu, yzge42@gmail.com, kang.an@rice.edu}
}
\begin{document}

\maketitle

\begin{abstract}
    Learning rate warmup is a popular and practical technique in training large-scale deep neural networks. Despite the huge success in practice, the theoretical advantages of this strategy of gradually increasing the learning rate at the beginning of the training process have not been fully understood. To resolve this gap between theory and practice, we first propose a novel family of generalized smoothness assumptions, and validate its applicability both theoretically and empirically. 
    Under the novel smoothness assumption, we study the convergence properties of gradient descent (GD) in both deterministic and stochastic settings. It is shown that learning rate warmup consistently accelerates GD, and GD with warmup can converge at most $\Theta(T)$ times faster than with a non-increasing learning rate schedule in some specific cases, providing insights into the benefits of this strategy from an optimization theory perspective.
\end{abstract}

\section{Introduction}
Mathematically, training a machine learning model can be formulated as a minimization problem:
\begin{align*}
    \min_{\bw \in \RR^d} \frac{1}{N} \sum_{i=1}^N f_i(\bw) ,
\end{align*}
where first-order optimizers using the gradient information are normally applied to find a solution to the problem. 
Carefully tuning the learning rates (or step sizes) is crucial in this optimization procedure, especially when the problem scale is large. 
A time-varying learning rate schedule is very commonly used both in theory (e.g., for Nesterov accelerated gradient method)~\citep{malitsky2020adaptive,teboulle2023elementary, boyd2004convex} and in practice (e.g., cosine schedule)~\citep{ he2016deep, vaswani2017attention,loshchilov2017sgdr,touvron2023llama}. 

Learning rate warmup is a strategy commonly incorporated in those schedules during the initial phase of training deep neural networks. In this stage, the learning rate, denoted as $\eta$, is set to a value lower than its target or base level. This initially small learning rate is then gradually increased over a number of training iterations until it reaches the intended peak value. A prevalent example of this is the linear warmup strategy~\citep{goyal2017accurate}, which sets a $0$ initial value and increases it linearly to the target learning rate in the initial phase. The warmup strategy has been widely observed to be powerful across many practical tasks~\citep{he2016deep, goyal2017accurate, vaswani2017attention}.

Despite the impressive practical success of learning rate warmup, a rigorous theoretical explanation for why warmup works still remains unclear. 
Various studies have explored or empirically validated explanations for the benefits of learning rate warmup, including limiting the magnitude of weight updates and reducing variance~\citep{gotmare2018closer,liu2020variance,gilmer2022loss,kalra2024warmup,kosson2024analyzing}.
Among them, \cite{gilmer2022loss,kalra2024warmup} elaborate on the intuition that the main advantage of learning rate warmup is that small initial learning rates allow the model to safely go into smoother regions of the loss landscape, characterized by smaller local smoothness (or sharpness), i.e., the largest singular value of the Hessian, in the initial phase of training.
This is beneficial since the applicable learning rate scale at a specific point $\bw$ generally needs to be bounded by $2 / L(\bw)$, where $L(\bw)$ is the local smoothness~\citep{cohen2021gradient}, which implies that first going into a smoother region enables larger learning rates in the following training process, resulting in faster convergence.

The connection between learning rate warmup and local smoothness inspires us to study the benefits of the warmup strategy from an optimization perspective.
To mathematically model the varying local smoothness during training, we propose a novel family of smoothness assumptions that connect local smoothness with the suboptimality gap of the loss function, i.e., $f(\bw) - f^*$. Note that this is a closely relevant but different family of assumptions with existing generalized smoothness assumptions that link the local smoothness with gradient norm~\citep{zhang2020gradient,li2023convex}.
We show that this new family of generalized smoothness assumption is typically weaker than existing generalized smoothness assumptions, and provide examples to show it applicability for analyzing the convergence of neural networks both empirically and theoretically.
Based on this novel family of assumptions, we study the convergence of the standard gradient descent (GD) and stochastic gradient descent (SGD) algorithms. The novel assumption's rigorous characterization of the evolution of local smoothness during the optimization process enables the proof. By comparing algorithms with and without a warmup phase, we find that using warmup shows a consistent gain in accelerating convergence, which can even achieve $\Theta(T)$ times faster convergence speed for GD and $\Theta(\sqrt{T})$ times for SGD.

Our main contributions are summarized as follows:
\begin{enumerate}
    \item We propose a novel family of generalized smoothness assumptions, connecting the local smoothness with the suboptimality gap. We prove that this novel family of assumptions is strictly weaker than the existing generalized $(\rho,K_0,K_\rho)$-smoothness with respect to the gradient norm~\citep{zhang2020gradient,li2023convex} for $\rho < 2$. 
    {Experimental validation on typical deep learning models, along with several neural network examples, demonstrates the applicability of our generalized smoothness assumptions to practical optimization tasks, especially training deep neural networks.}

    \item 
    Based on our generalized smoothness assumptions, we theoretically prove that using a warm-up learning rate schedule can accelerate the convergence of gradient descent (GD) and stochastic gradient descent (SGD) methods, thereby bridging the gap between theory and practice in training neural networks. Specifically, it is shown that under a specific way of warming up learning rates, GD can achieve $\Theta(T)$ times faster convergence rates compared to directly using non-increasing learning rates.
    For SGD, we apply the ABC inequality~\citep{khaled2023better} as the noise assumption, which is general and implies further benefits of doing warmup in accelerating convergence in a noisy setting.

\end{enumerate}

\section{Related Work}
\paragraph{Learning rate warmup.}
Learning rate warmup is a widely employed heuristic for training deep neural networks.
The use of learning rate warmup dates back at least to \cite{he2016deep}, which used a small constant learning rate during the first stage of training. Later, the linear warmup strategy was introduced by \cite{goyal2017accurate}, and soon became popular for training a large range of models, including ResNets~\citep{he2016deep} and transformers~\citep{vaswani2017attention}. 
Empirical evidence showed that learning rate warmup can enhance training stability to allow large learning rates and improve model performance~\citep{gotmare2018a,gilmer2022loss,kalra2024warmup}.

\paragraph{Intuitions for the benefits of warmup.}

In \cite{goyal2017accurate}, the authors proposed that to use a larger batch size, the learning rate should be scaled up proportionally. \cite{l.2018dont, jastrzębski2018three} theoretically studied how the ratio between batch size and learning rate affects the training dynamics of SGD. However, in many cases, the learning rate cannot directly increase proportionally to the batch size in order to maintain training stability. Thus, warmup was introduced by \cite{goyal2017accurate} as a trick for gradually increasing learning rates.
After that, studies on the warmup mechanism appeared. \cite{gotmare2018closer} found that warmup prevents training instability by limiting the updates to deep-layer weights through empirical analysis.
\cite{liu2020variance} specifically studied Adam~\citep{kingma2014adam} and attributed training instability to the large variance caused by the adaptive step size of Adam and viewed warmup as a method of variance reduction. 
Other work suggested that learning rate warmup enables the model to enter 
smoother regions of the loss landscape, leading to a gradual decrease in local smoothness (sharpness)~\citep{gilmer2022loss, kalra2024warmup}. Based on the relation between learning rates and the local smoothness~\citep{nesterov2018lectures,cohen2021gradient}, this enables larger learning rates in the following training process, thereby accelerating the convergence. \cite{wen2024understanding} also provided a similar understanding by proposing an intuitive river-valley interpretation of the neural networks' landscape.

\paragraph{Generalized Smoothness.} The smoothness condition plays a significant role in optimization theory. 
For a twice-differentiable function, the standard $L$-smooth assumption assumes an upper bound $L$ on the largest singular value of the Hessian~\citep{nesterov2018lectures}, where $L$ is a constant.
\cite{zhang2020gradient} was probably the first to generalize the upper bound $L$ to be a linear function of the current gradient norm, i.e., $\Norm{\nabla^2 f(\bw)} \le L(\bw) = L_0 + L_1 \Norm{\nabla f(\bw)}$, which is strictly weaker than the standard smoothness condition and is verified to be valid in some small neural networks. The idea was followed by \cite{zhang2020improved}, which derived finer properties of the generalized smoothness. Further extensions of this generalized smoothness have also been developed since then. \cite{li2023convex} extended the linear function of $\Norm{\nabla f(\bw)}$ to $\Norm{\nabla f(\bw)}^\rho$ with $\rho \ge 1$ and proved that GD {with a constant learning rate} converges if and only if $\rho < 2$. In another direction, \cite{crawshaw2022robustness,liu2024adagrad} developed anisotropic versions of the generalized smoothness assumption.

\paragraph{Convergence under Generalized Smoothness.}
The convergence of SGD under the $L$-smoothness assumption has been extensively studied. For the $(L_0,L_1)$-smoothness, most analyses focused on varying learning rates, such as SGD with clipping~\citep{zhang2020gradient, zhang2020improved, qian2021understanding}, SignSGD~\citep{crawshaw2022robustness}, and normalized SGD~\citep{zhao2021convergence}.
Moreover, their analyses often rely on the bounded noise assumption or the subgaussian noise assumption. \cite{li2023convex} proved the convergence of SGD with constant learning rate under $(\rho,L_0,L_\rho)$-smoothness with $0\leq\rho<2$, by bounding the gradients along the optimization trajectory. Their constant learning rate depends on the intial suboptimality gap, which in turn depends on both the loss function and initialization. 
\citet{tyurin2025toward} proposes a specific adaptive learning rate, under which GD converges for $(\rho, L_0,L_\rho)$-smooth functions for any $\rho>0$. However, the proposed learning rate involves computing an integral, which typically does not have a closed-form expression, and is also not necessarily monotonic, making it less practical and different from our settings.
Note that the lower bound of SGD under the $L$-smoothness and bounded variance conditions is $\Omega(1/T^{1/4})$~\citep{arjevani2023lower}. The above analyses, under generalized smoothness conditions, also achieve the $O(1/T^{1/4})$ bound, though some of them rely on stronger noise assumptions.

\section{A Family of Novel Generalized Smoothness Assumptions}\label{sec:FamilySmoothness}

We first review the existing smoothness assumptions. For a twice continuously differentiable function $f:\RR^d \to \RR$, the standard $L$-smooth assumption assumes that the spectral norm of the Hessian of the loss function is uniformly bounded, i.e.,
$$\Norm{\nabla^2 f(\bw)}\leq L,\quad \forall \bw\in\mathbb{R}^d.$$
Although the $L$-smoothness assumption is widely used in optimization theory, it fails to capture the local smoothness of the loss function at different points, and even some simple and common functions, such as the exponential function, do not satisfy this assumption~\citep{zhang2020gradient}.

To generalize $L$-smoothness, \cite{li2023convex} proposed the $(\rho,L_0,L_\rho)$-smoothness:
$$\Norm{\nabla^2 f(\bw)}\leq L_0+L_\rho\Norm{\nabla f(\bw)}^\rho,\quad\forall\bw\in\mathbb{R}^d.$$
When $\rho=1$, it reduces to the $(L_0,L_1)$-smoothness~\citep{zhang2020gradient}. The $(\rho,L_0,L_\rho)$-smoothness assumes that local smoothness is bounded by an increasing polynomial function of the gradient norm and is considered to be more consistent with the deep neural networks than the 
$L$-smooth assumption based on some empirical verifications~\citep{zhang2020gradient}. We are particularly interested in the 
$0\leq\rho<2$ case, where local smoothness is bounded by a sub-quadratic function of the gradient norm. This is because \cite{li2023convex} 
showed that GD may diverge for $(\rho,L_0,L_\rho)$ functions with $\rho\geq 2$. 

Although $(\rho,L_0,L_\rho)$-smoothness has been empirically validated as an effective assumption for characterizing the loss landscape of deep neural networks, there are still some limitations.
Firstly, there exist simple examples showing that neural networks do not satisfy the $(\rho,L_0,L_\rho)$-smoothness with $0\leq\rho<2$~\citep{patel2022global}. We discuss some examples in Section~\ref{sec:FamilySmoothness_nerualnets} in detail. This implies that, based on the results in \cite{li2023convex}, fundamental first-order optimizers like GD can diverge even under some simple examples, which is inconsistent with real practice.
Moreover, for nonconvex functions, the gradient norm is not necessarily monotonically decreasing during the optimization process of GD, 
{making the $(\rho,L_0,L_\rho)$-smoothness assumption inappropriate to characterize the decreasing trend of the sharpness, especially in the early stages of training neural networks~\citep{kalra2024warmup, gilmer2022loss}.}
These limitations raise a need for developing a novel family of generalized assumptions.

\subsection{A Novel Family of Generalized Smoothness}
We consider the following $(\rho, K_0, K_\rho)$-smoothness, which relates the local smoothness with the function suboptimality gap.
\begin{definition}\label{defi:K_smooth}
    We say a twice differentiable function $f:\mathbb{R}^d\to\mathbb{R}$ is $\left(\rho,K_0,\Kr\right)$ smooth if
    \begin{equation}\label{eq:defi_K_smoothn}
        \Norm{\nabla^2 f(\bw)}\leq K_0+\Kr\left(f(\bw)-f^\star\right)^\rho
    \end{equation}
    for $K_0,\Kr\geq0$ and $\rho>0$,
    where we assume $f^\star=\inf_{\bw \in \RR^d} f(\bw) > -\infty$.
\end{definition}
Note that the lower bound $f^*$ is standard in nonconvex analysis, which should also be satisfied by neural networks.
When $\Kr=0$, our generalized smoothness reduces to the classical $L$-smoothness. It is not hard to see that the $(\rho,K_0,\Kr)$-smoothness is strictly weaker than the $L$-smoothness since exponential functions are $(\rho,K_0,\Kr)$-smooth but not $L$-smooth. Moreover, we can prove that the $(\rho,K_0,\Kr)$-smoothness family is also weaker than the $(\rho,L_0,L_\rho)$-smoothness for $0\leq\rho<2$.

\begin{lemma}\label{lem:L_in_K_smooth}
    If a function $f:\RR^d \to \RR$ is $(\rho, L_0, L_\rho)$-smooth with $0\leq \rho<2$, then it is $(\alpha, K_0, K_\alpha)$-smooth with $\alpha=\frac{\rho}{2-\rho}$.
\end{lemma}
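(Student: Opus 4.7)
The plan is to first bound the gradient norm pointwise by the suboptimality gap, and then substitute that bound into the hypothesis $\Norm{\nabla^2 f(\bw)} \le L_0 + L_\rho \Norm{\nabla f(\bw)}^\rho$. The bridge between the two is the following intermediate inequality, which is the generalized-smoothness analog of the textbook bound $\Norm{\nabla f(\bw)}^2 \le 2L(f(\bw)-f^\star)$ for $L$-smooth functions:
\[
    \Norm{\nabla f(\bw)}^{2-\rho} \le C_1 + C_2\bigl(f(\bw) - f^\star\bigr),
\]
with $C_1, C_2$ depending only on $L_0, L_\rho, \rho$. The exponent $2 - \rho$ is precisely what is needed to turn $\Norm{\nabla f(\bw)}^\rho$ into $(f(\bw)-f^\star)^{\rho/(2-\rho)} = (f(\bw)-f^\star)^\alpha$.

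To establish the intermediate inequality, I would perform a single virtual gradient-descent step from $\bw$ with a safe step size $\eta(\bw) \asymp (L_0 + L_\rho\Norm{\nabla f(\bw)}^\rho)^{-1}$. Because the Hessian bound in the hypothesis is pointwise and depends on the gradient, a descent lemma along the segment $[\bw, \bw - \eta\nabla f(\bw)]$ requires controlling how $\Norm{\nabla f}$ evolves along the segment. I would invoke (or re-derive via a Gr\"onwall-type estimate on $t \mapsto \Norm{\nabla f(\bw - t\nabla f(\bw))}$) the now standard gradient self-bounding lemma under $(\rho, L_0, L_\rho)$-smoothness, which ensures the gradient norm at most doubles over such a safe step (cf.\ \cite{zhang2020improved,li2023convex}). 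With this, a one-step descent gives $f(\bw) - f^\star \gtrsim \Norm{\nabla f(\bw)}^2 / (L_0 + L_\rho\Norm{\nabla f(\bw)}^\rho)$, and splitting into the two regimes $L_0 \gtrless L_\rho\Norm{\nabla f(\bw)}^\rho$ and solving for $\Norm{\nabla f(\bw)}$ in each yields the intermediate inequality (with the constants coming from both regimes combined via $\Delta^{(2-\rho)/2} \le 1 + \Delta$ since $(2-\rho)/2 \le 1$).

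Given the intermediate inequality, the finish is algebraic. Raising both sides to the power $\alpha = \rho/(2-\rho) \ge 0$ and applying $(a+b)^\alpha \le 2^{\max(\alpha-1,0)}(a^\alpha + b^\alpha)$ for $a,b \ge 0$ separates the additive constant from the $(f(\bw)-f^\star)^\alpha$ term, giving $\Norm{\nabla f(\bw)}^\rho \le A_1 + A_2\bigl(f(\bw)-f^\star\bigr)^\alpha$ for explicit $A_1, A_2$. Substituting into the hypothesis,
\[
    \Norm{\nabla^2 f(\bw)} \le L_0 + L_\rho\Norm{\nabla f(\bw)}^\rho \le \bigl(L_0 + L_\rho A_1\bigr) + L_\rho A_2\bigl(f(\bw)-f^\star\bigr)^\alpha,
\]
which is $(\alpha, K_0, \Kr)$-smoothness with $K_0 := L_0 + L_\rho A_1$ and $K_\alpha := L_\rho A_2$.

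The main obstacle is the self-bounding step: the circularity of using a gradient-dependent Hessian bound to control the gradient itself along a step. Everything else is routine manipulation of constants and exponents, including the trivial edge case $\rho = 0$, where $\alpha = 0$ and $(\rho, L_0, L_\rho)$-smoothness already coincides with $L$-smoothness for $L = L_0 + L_\rho$. I would either cite the relevant gradient-bound lemma from \cite{li2023convex} as a black box or reprove it in a short self-contained paragraph.
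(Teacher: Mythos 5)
Your proposal is correct and takes essentially the same route as the paper: the paper also rests on the inequality $f(\bw)-f^\star\geq \Norm{\nabla f(\bw)}^2/\bigl(2L_0+2^{\rho+1}L_\rho\Norm{\nabla f(\bw)}^\rho\bigr)$ (cited as Lemma~3.5 of \cite{li2023convex}, which is exactly your one-step-descent/self-bounding estimate), then splits into the two regimes $2L_0\lessgtr 2^{\rho+1}L_\rho\Norm{\nabla f(\bw)}^\rho$ and substitutes back into the Hessian bound. Your repackaging via $\Norm{\nabla f(\bw)}^{2-\rho}\leq C_1+C_2(f(\bw)-f^\star)$ and the $(a+b)^\alpha$ inequality is only a cosmetic difference in bookkeeping of constants.
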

Based on Lemma~\ref{lem:L_in_K_smooth}, properties of $(\rho,L_0,L_\rho)$-smoothness for $0\le \rho<2$ as well as its applicability to deep neural networks can be inherited by $(\rho,K_0,\Kr)$-smoothness. Moreover, the following simple example shows that $(\rho,K_0,\Kr)$-smoothness is strictly weaker than 
$(\rho,L_0,L_\rho)$-smoothness with $0\leq\rho<2$ and cannot be covered by $\rho\ge 2$.

\begin{example}\label{example:trigonometric}
    The following function
    \begin{align*}
    f(x) = \left\{ \begin{array}{cc}
       2x + x\sin x , & x \in [0,+\infty), \\
       2(\mathrm{e}^x - 1) ,  & x \in (-\infty,0).
    \end{array} \right. 
\end{align*}
    is $(1,K_0,K_1)$-smooth but not $(\rho,L_0,L_\rho)$-smooth for any $\rho>0$.
\end{example}

This example also illustrates that compared to $(\rho,L_0,L_\rho)$-smoothness, $(\rho,K_0,\Kr)$-smoothness is better at capturing the properties of functions with multiple stationary points or local minima, which is a common case in deep neural network training.

\subsection{Generalized Smoothness in Neural Networks}\label{sec:FamilySmoothness_nerualnets}

We have shown that $(\rho,K_0,\Kr)$-smoothness is a more general assumption than $(\rho,L_0,L_\rho)$-smoothness for $0\le \rho<2$. Next, we demonstrate that  $(\rho,K_0,\Kr)$-smoothness is more applicable to deep neural networks. We adopt the two examples in~\cite{patel2022global}, where for binary classification tasks, simple feed forward network and recurrent neural network both fail to satisfy the $(\rho,L_0,L_\rho)$-smoothness with $0\leq\rho<2$, but satisfy the 
$(\rho,K_0,K_\rho)$-smoothness.

\begin{example}[Example 1, \cite{patel2022global}]
    Consider the following simple multi-layer feed forward network for binary classification:
    \begin{align*}
        &z_i=\sigma\left(w_iz_{i-1}\right),\quad i=1,2,3\\
        &\hat{y}=\varphi\left(w_4z_3\right),
    \end{align*}
    where $z_0$ is the input feature, $\sigma$ is the activation function and $\varphi$ is the sigmoid function.
    Given a sample point $(z_0,y)$, we aim to predict $y$.
    Let $f(\bw)$ be the cross entropy loss plus a a ridge penalty.
    Then for some simple distribution, $f(\bw)$ is $(\rho, K_0, K_\rho)$-smooth for $\rho\geq 3$ but not $(\rho, L_0, L_\rho)$-smooth for any $0\leq\rho<2$.
\end{example}

\begin{example}[Example 2, \cite{patel2022global}]
    Consider the following simple recurrent neural network for binary classification:
    \begin{align*}
        &h_i=\sigma\left(w_1 h_{i-1}+w_2 z_i\right),\quad i=0,1,2,3\\
        & \hat{y}=\varphi(w_3h_3),
    \end{align*}
    where $\sigma$ is the activation function and $\varphi$ is the sigmoid function. Given a sample point $(z_0,z_1,z_2,z_3,y)$, 
    we sequentially observe $z_0,\dots,z_3$ and aim to predict $y$.
    Let $f(\bw)$ be the cross entropy loss plus a a ridge penalty.
    Then for some simple distribution, $f(\bw)$ is $(\rho, K_0, K_\rho)$-smooth for $\rho\geq 3$ but not $(\rho, L_0, L_\rho)$-smooth for any $0\leq\rho<2$.
\end{example}

We elaborate on these two examples in detail in Appendix~\ref{apx:examples}. In both cases, the loss function of neural networks either do not satisfy the $(\rho,L_0,L_\rho)$-smoothness or only satisfy the case with $\rho\geq 2$. For the latter, GD {with constant learning rates} cannot guarantee convergence without additional assumptions. In contrast, in Section~\ref{sec:GD}, we show that GD can converge for $(\rho,K_0,K_\rho)$-smooth functions for any $\rho\geq 0$, highlighting the advantage of the $(\rho,K_0,K_\rho)$-smoothness assumption.

\begin{figure}[t]
\vspace{-0.5cm}
    \centering
    \subfigure[ResNet ]{\includegraphics[width=0.45\textwidth]{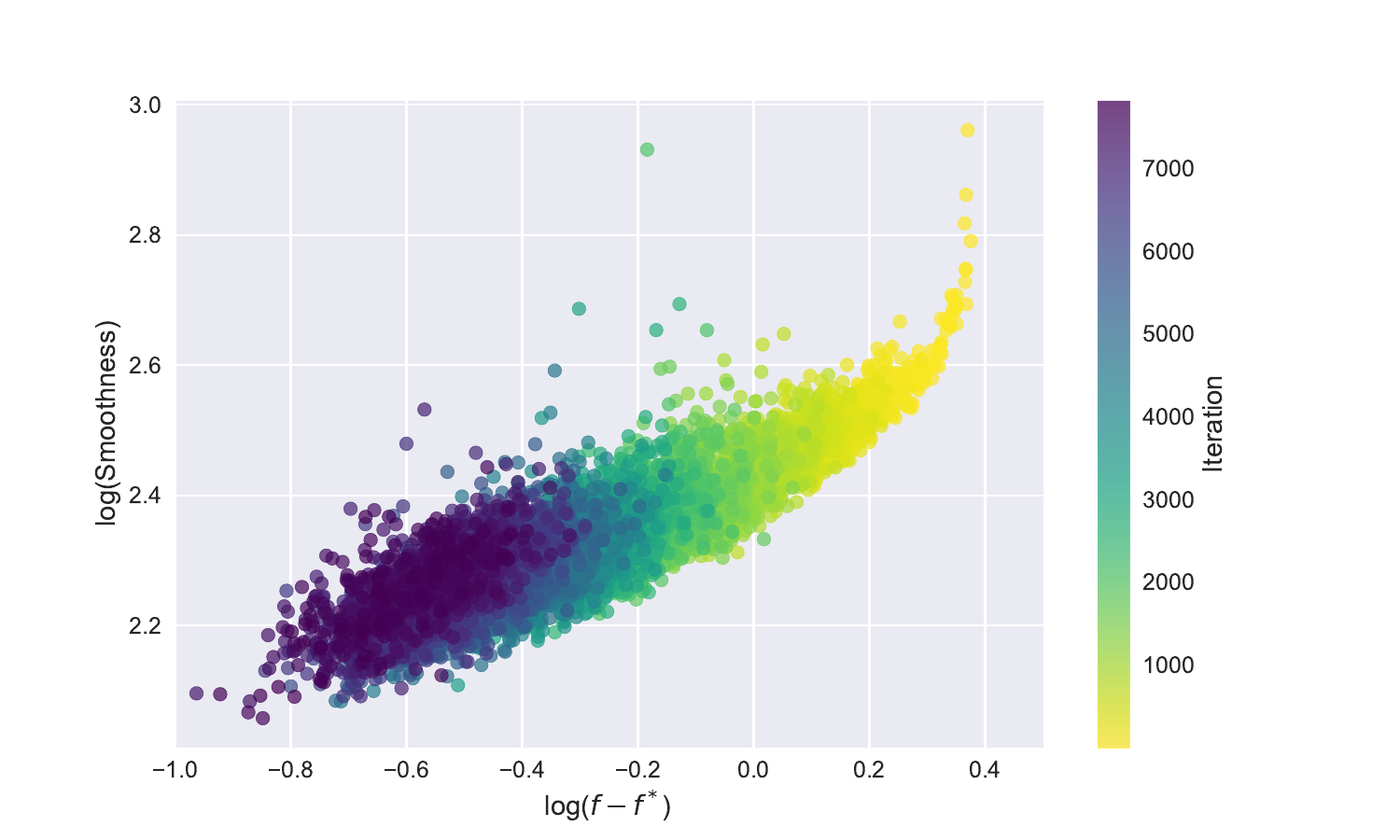}}\hfill
    \subfigure[NanoGPT]{\includegraphics[width=0.45\textwidth]{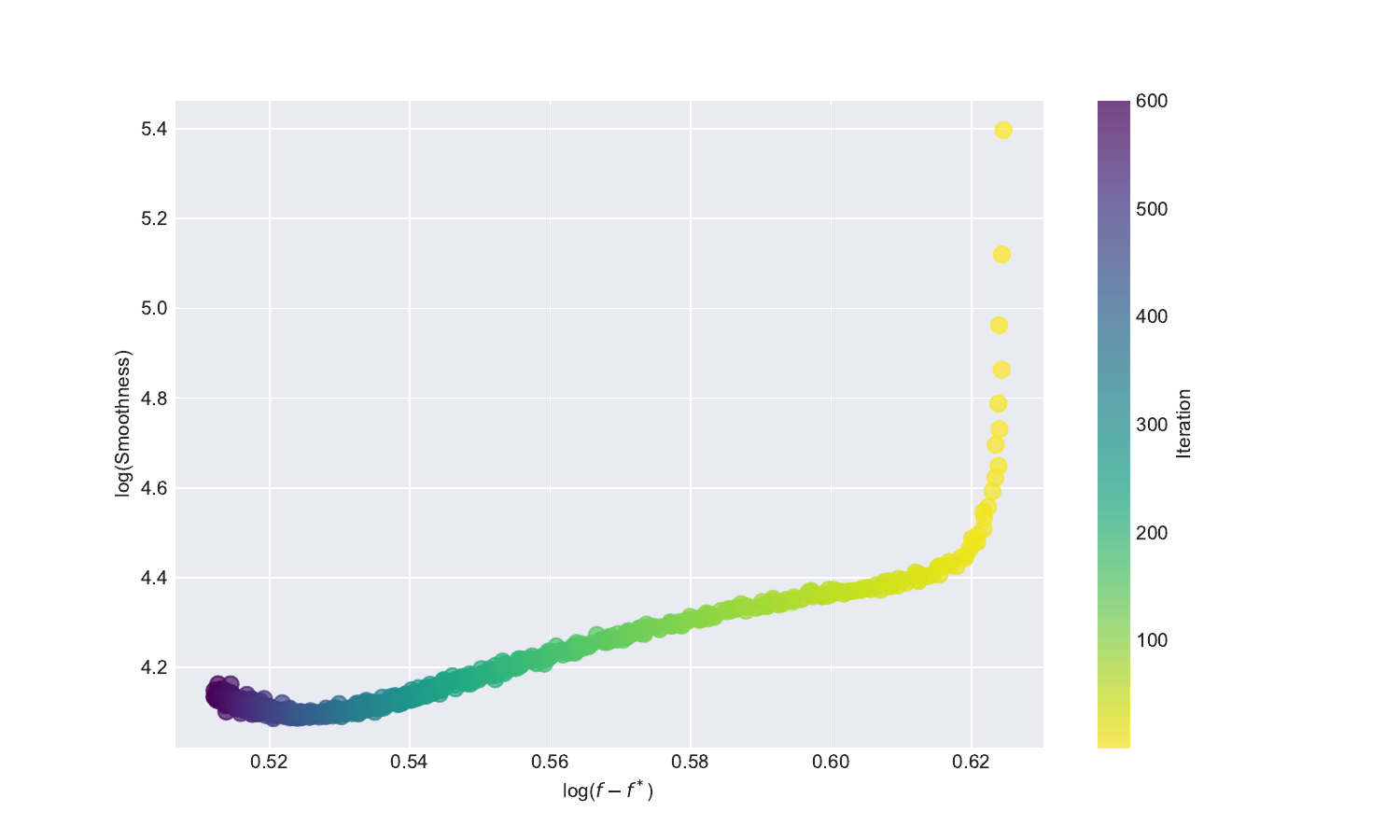}}\hfill
  \vspace{-0.3cm}
  \caption{Local smoothness vs. function suboptimality gap on training (a) ResNet18 on CIFAR-10 (b) NanoGPT on Tiny TinyShakespeare character dataset. Both $x$ and $y$ axes are in log scale and the color bar indicates the iteration number. We use $f^* = 0$ in the plots.}
  \label{fig:assumption-verification}
\end{figure}

\subsection{Empirical Validation of the Assumption}

To empirically investigate the posited relationship between local smoothness and the loss sub-optimality gap within neural networks, we perform numerical experiments. As direct Hessian computation is often intractable, we approximate local smoothness following~\cite{zhang2020gradient,crawshaw2022robustness}. Given consecutive iterates $\bw_t$ and $\bw_{t+1}$, we define the update direction $\mathbf{d}_t \triangleq \bw_{t+1} - \bw_{t}$. The smoothness $\hat{L}(\bw_t)$ is then estimated by:
\begin{align*}
\hat{L}(\bw_t) = \max_{\gamma \in \{\delta_1,\dots,\delta_n\} } \frac{\Norm{\nabla f(\bw_t + \gamma\mathbf{d}_t) - \nabla f(\bw_{t})}_2}{\Norm{\gamma \mathbf{d}_t}_2}
\end{align*}
where the sample points are $\delta_i=i / n$; we use $n=6$, yielding $\gamma \in\{1 / 6,2 / 6,3 / 6,4 / 6,5 / 6, 1\}$.

Our experimental validation includes both Convolutional Neural Networks (CNNs) and Transformers. The CNN configuration involves training a ResNet18 on CIFAR-10 for 20 epochs. The Transformer configuration consists of training a NanoGPT model (6 blocks, 384 embedding dimension, 6 attention heads) on the TinyShakespeare character dataset for 600 steps. Both models are trained with SGD with momentum ($\mathrm{lr}=1e-4$). Both experiments were conducted using a single NVIDIA A100(40GB) PCIE GPU.
As shown in the log-log plots, a polynomial dependence of the local smoothness on the function suboptimality gap is generally clear, showing the applicability of Assumption~\ref{assumption:K_smooth}. Also, as one can observe from the plots, the local smoothness can be extremely large at the beginning of the training process, which also provides evidence for the importance of using small learning rates in the initial phase of training.

\subsection{Properties of the Novel Generalized Smoothness}
Similar to $(\rho,L_0,L_\rho)$-smoothness~\citep{li2023convex}, 
Definition~\ref{defi:K_smooth} indicates that $\nabla f$ is locally Lipschitz continuous. Therefore, by careful analysis through integration, we are able to obtain the following locally Lipschitz continuous property of $\nabla f$.
We first define two constants $C_1, C_2$ which only depend on $K_0,\Kr$ and $\rho$:
$$C_1=\frac{1}{\left(2+\sqrt{2}\right)\sqrt{3^\rho \Kr}}, \quad
    C_2=\frac{1}{2\sqrt{3}+\sqrt{6}}\frac{K_0^{\frac{1}{2\rho}-\frac{1}{2}}}{\Kr^{\frac{1}{2\rho}}}.$$
\begin{lemma}\label{lem:improved_descent_smooth}
    Suppose $f$ is $(\rho,K_0,\Kr)$-smooth. Let $\Delta=f(\bx)-f^\star$, 
    $$L(\Delta):=2K_0+ \Kr\left(2\Delta\right)^\rho \text{ and  } r(\Delta):=\min\left\{C_1\Delta^{-\frac{\rho-1}{2}}, C_2\right\}.$$
    Then for any $\bx,\by\in\RR^d$ satisfying $\Norm{\by-\bx}\leq r(\Delta)$, we have 
    $$\Norm{\nabla f(\by)-\nabla f(\bx)}\leq L(\Delta)\Norm{\by-\bx}$$
    and
    $$f(\by)\leq f(\bx)+\dotprod{\nabla f(\bx)}{\by-\bx}+\frac{L(\Delta)}{2}\Norm{\by-\bx}^2.$$
\end{lemma}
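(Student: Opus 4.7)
The plan is to prove both inequalities via a single line-segment continuity (bootstrap) argument. I would parametrize the segment by $\bw(t) = \bx + t(\by - \bx)$ for $t \in [0,1]$ and let
$$T^\star := \sup\bigl\{ t \in [0,1] : f(\bw(s)) - f^\star \le 2\Delta \text{ for all } s \in [0,t] \bigr\}.$$
Since $f(\bx) - f^\star = \Delta < 2\Delta$ and $f \circ \bw$ is continuous, $T^\star > 0$. The core task is to rule out $T^\star < 1$; once $T^\star = 1$ is established, the lemma follows by evaluating the two integral identities below at $t=1$.

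On $[0,T^\star]$, Definition~\ref{defi:K_smooth} directly gives $\Norm{\nabla^2 f(\bw(s))} \le K_0 + \Kr(2\Delta)^\rho \le L(\Delta)$. Integrating $\nabla^2 f(\bw(s))(\by - \bx)$ along the segment yields $\Norm{\nabla f(\bw(t)) - \nabla f(\bx)} \le L(\Delta)\, t\Norm{\by - \bx}$, and then integrating $\varphi'(t) = \dotprod{\nabla f(\bw(t))}{\by - \bx}$ yields the quadratic upper model
$$f(\bw(t)) - f(\bx) \le \Norm{\nabla f(\bx)}\, t\Norm{\by - \bx} + \tfrac{1}{2}L(\Delta)\, t^2 \Norm{\by - \bx}^2.$$
To close the bootstrap I need to bound $\Norm{\nabla f(\bx)}$. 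Applying the same quadratic upper model along the auxiliary direction $-\nabla f(\bx)/L(\Delta)$, together with $f \ge f^\star$, yields $\Norm{\nabla f(\bx)}^2 \le 2L(\Delta)\Delta$. Feeding this back in gives the scalar inequality
$$f(\bw(t)) - f^\star \le \Delta + \sqrt{2L(\Delta)\Delta}\, \Norm{\by - \bx} + \tfrac{1}{2}L(\Delta)\, \Norm{\by - \bx}^2.$$

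It now remains to show that this right-hand side is strictly below $2\Delta$ whenever $\Norm{\by - \bx} \le r(\Delta)$. Splitting $L(\Delta) \le 2K_0 + \Kr(2\Delta)^\rho$ and using $\sqrt{a+b}\le\sqrt{a}+\sqrt{b}$ decouples two regimes whose scaling in $\Delta$ is different: in the $\Kr$-dominated regime the radius $C_1\Delta^{-(\rho-1)/2}$ is exactly what turns every cross term into a multiple of $\Delta$, while in the $K_0$-dominated regime the constant $C_2$ plays the analogous role. The precise numerical factors $(2+\sqrt{2})\sqrt{3^\rho}$ and $2\sqrt{3}+\sqrt{6}$ are calibrated so that in each case the sum is bounded by, say, $\Delta\bigl(1+\tfrac{1}{2}+\tfrac{1}{4}\bigr) < 2\Delta$, giving a strict inequality that contradicts the maximality of $T^\star$ if $T^\star < 1$. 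Hence $T^\star = 1$, and both claims follow.

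The main obstacle is the mildly self-referential character of the bound on $\Norm{\nabla f(\bx)}$: it uses a Lipschitz-gradient inequality on a ball around $\bx$, which is precisely what the overall argument is trying to prove. I would handle this either by running a short preliminary bootstrap along the direction $-\nabla f(\bx)/\Norm{\nabla f(\bx)}$ (with the associated radius constraint already absorbed into $C_1$ and $C_2$), or by merging the two bootstraps into a single argument that simultaneously controls the segment from $\bx$ toward $\by$ and from $\bx$ along the negative-gradient direction. Everything else is careful but routine integration and constant tracking.
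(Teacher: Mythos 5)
Your skeleton (segment bootstrap, an auxiliary gradient bound, then plugging in the radii) mirrors the paper's argument, which runs the same program via Gr\"onwall's inequality plus a bounded-gradient lemma. But there is a genuine gap in the closing step: your bootstrap invariant ``$f(\bw(s))-f^\star\le 2\Delta$ within radius $r(\Delta)$'' is simply false in the regime where the $C_2$ branch of $r(\Delta)$ is active, i.e.\ when $K_0 > 3^\rho \Kr \Delta^\rho$. The constant $C_2 = \frac{1}{2\sqrt{3}+\sqrt{6}} K_0^{\frac{1}{2\rho}-\frac12}\Kr^{-\frac{1}{2\rho}}$ does not shrink with $\Delta$, so the inequality you need, $\sqrt{2L(\Delta)\Delta}\,r+\tfrac12 L(\Delta) r^2 < \Delta$, cannot hold there no matter how the constants are calibrated: already $\tfrac12 L(\Delta) C_2^2 \ge K_0 C_2^2 \asymp \paren{K_0/\Kr}^{1/\rho}$, which dwarfs $\Delta$ when $\Delta$ is small or $\Kr$ is small. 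A concrete counterexample to the invariant is $f(\bw)=\tfrac{K_0}{2}\Norm{\bw}^2$ with tiny $\Kr$ and $\bx$ near the minimizer: within distance $C_2$ the suboptimality gap rises far above $2\Delta$, even though the lemma's conclusion is of course true because the Hessian is $K_0\le L(\Delta)$ everywhere. So your contradiction argument proves at best a weaker lemma whose radius in the $K_0$-dominated regime scales like $\sqrt{\Delta/K_0}$ rather than $C_2$.

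The fix is exactly the point of the paper's proof: the level to which the function is allowed to rise must not be tied to $\Delta$ alone. The paper takes $m=\max\bigl\{\Delta,\tfrac13\paren{K_0/\Kr}^{1/\rho}\bigr\}$ and shows (via Gr\"onwall, but your continuity argument would serve equally well) that the gap stays below $f(\bx)+m$ within radius $r(\Delta)$; the crucial observation is that even when the gap grows to order $\paren{K_0/\Kr}^{1/\rho}$, the induced smoothness $K_0+\Kr\paren{m+\Delta}^\rho$ is still at most $2K_0\le L(\Delta)$, so the Lipschitz constant in the conclusion is unharmed. With that modified invariant, the two regimes correspond to $m=\Delta$ (giving the $C_1\Delta^{-\frac{\rho-1}{2}}$ radius) and $m=\tfrac13\paren{K_0/\Kr}^{1/\rho}$ (giving the $C_2$ radius), and your constant-tracking goes through. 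Your auxiliary bound $\Norm{\nabla f(\bx)}^2\le 2L(\Delta)\Delta$ is fine and the self-referentiality you flag is handled exactly as you suggest (it is the paper's Lemma on bounded gradients), but on its own it does not rescue the $2\Delta$ threshold.
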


Lemma~\ref{lem:improved_descent_smooth} is useful for our convergence analysis. As long as the consecutive iterates $\|\bw_{t+1}-\bw_t\|$ are small enough, we can obtain a descent lemma and 
proceed with an analysis similar to that used under the $L$-smoothness assumption.

\section{Theory of GD}\label{sec:GD}
In this section, we analyze GD for $(\rho,K_0,\Kr)$-smooth functions:
$$\bw_{t+1}=\bw_t-\eta_t\nabla f(\bw_t).$$
We consider two learning rate settings: constant learning rate and increasing learning rate, i.e., $\eta_t\leq\eta_{t+1},t=0,\dots,T-1$. The increasing learning rate strategy can be viewed as a specific type of learning rate warmup, which will be validated empirically in Section~\ref{sec:validation_schedule_empirical}.
We show that the increasing learning rate leads to a faster convergence rate compared to the constant learning rate.
We first list the assumptions we require for convergence analysis.

\begin{assumption}\label{assumption:lower_bounded}
    We assume $f(\bw_0)-f^\star<\infty$, where $f^\star=\inf_{\bw\in\mathbb{R}^d} f(\bw)$.
\end{assumption}

\begin{assumption}\label{assumption:K_smooth}
    $f(\bw)$ is $(\rho,K_0,\Kr)$-smooth.
\end{assumption}

We use $\Delta_t \triangleq f(\bw_t)-f^\star$ for simplicity in the following analysis.

\subsection{Upper Bounds}\label{sec:GD_upper}

We first present the results under the general nonconvex scheme.

\begin{theorem}\label{thm:GD}
    Suppose Assumptions~\ref{assumption:lower_bounded} and \ref{assumption:K_smooth} hold. $\{\bw_t\}$ is generated by GD.
    Let the learning rate 
    $\eta_t= \frac{1}{4\sqrt{2}+4}\min\left\{\frac{1}{K_0}, \frac{1}{3^\rho \Kr}\Delta_t^{-\rho}\right\}$.
    Then it holds that $\Delta_t\geq\Delta_{t+1}$ for all $t\in [T]$, and 
    \begin{align}\label{eq:thm_GD_warmup}
    \min_{t< T}\Norm{\nabla f(\bw_t)}^2\leq \frac{2\left(f(\bw_0)-f^\star\right)}{\sum_{t=0}^{T-1}\eta_t}=\mathcal{O}\left(\frac{K_0\Delta_0}{T}+\frac{\Kr\Delta_0\sum_{t=0}^{T-1}\Delta_t^\rho}{T^2}\right).
    \end{align}
    Moreover, if we use a constant learning rate $\eta= \frac{1}{4\sqrt{2}+4}\min\left\{\frac{1}{K_0}, \frac{1}{3^\rho\Kr}\Delta_0^{-\rho}\right\}$, then we have 
    $\Delta_t\geq\Delta_{t+1}$ for all $t\in[T]$, and 
    \begin{align}\label{eq:thm_GD_constant}
    \min_{t< T}\|\nabla f(\bw_t)\|^2\leq\frac{2\left(f(\bw_0)-f^\star\right)}{\eta T}=\mathcal{O}\left(\frac{K_0\Delta_0+\Kr\Delta_0^{\rho+1}}{T}\right).
    \end{align}
\end{theorem}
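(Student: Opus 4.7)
The plan is to prove both \eqref{eq:thm_GD_warmup} and \eqref{eq:thm_GD_constant} by an induction on $t$ that simultaneously establishes the monotonicity $\Delta_{t+1}\le \Delta_t$ and the per-step descent $\Delta_{t+1}\le \Delta_t - \tfrac{\eta_t}{2}\|\nabla f(\bw_t)\|^2$. The engine at each step is Lemma~\ref{lem:improved_descent_smooth} applied with $\bx=\bw_t$, $\by=\bw_{t+1}$, whose hypothesis demands the admissibility condition $\|\bw_{t+1}-\bw_t\|=\eta_t\|\nabla f(\bw_t)\|\le r(\Delta_t)$. Once this admissibility is in hand, the lemma yields $\Delta_{t+1}\le \Delta_t - \eta_t\bigl(1-\tfrac{\eta_t L(\Delta_t)}{2}\bigr)\|\nabla f(\bw_t)\|^2$, and the bound $\eta_t L(\Delta_t) \le \tfrac{2+(2/3)^\rho}{4\sqrt{2}+4}<1$ (obtained by splitting the $\min$ in $\eta_t$ against the two terms of $L(\Delta_t)=2K_0+2^\rho K_\rho\Delta_t^\rho$) turns this into the desired per-step descent and simultaneously gives monotonicity.

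Verifying the admissibility is the main technical obstacle. I would handle it by first deriving an a priori bound on $\|\nabla f(\bw_t)\|$ in terms of $\Delta_t$ via a line-search argument with Lemma~\ref{lem:improved_descent_smooth} itself: for any step length $s\le r(\Delta_t)/\|\nabla f(\bw_t)\|$, the quadratic upper bound at $\bw_t - s\nabla f(\bw_t)$ combined with $f(\bw_t - s\nabla f(\bw_t))\ge f^\star$ gives
\begin{equation*}
s\|\nabla f(\bw_t)\|^2\bigl(1-\tfrac{sL(\Delta_t)}{2}\bigr)\le \Delta_t;
\end{equation*}
optimizing over $s$ yields $\|\nabla f(\bw_t)\|^2\le 2L(\Delta_t)\Delta_t$ when $s=1/L(\Delta_t)$ is admissible, and a complementary bound from $s=r(\Delta_t)/\|\nabla f(\bw_t)\|$ otherwise. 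Inserting these bounds together with the definition of $\eta_t$ and the explicit constants $C_1=1/[(2+\sqrt{2})\sqrt{3^\rho K_\rho}]$ and $C_2$ reduces admissibility to numerical inequalities between the prefactors $1/(4\sqrt{2}+4)$, $C_1$, $C_2$; the key identity is $6(2+\sqrt{2})^2\le(4\sqrt{2}+4)^2$, which is why the prefactors in $\eta_t$ and in Lemma~\ref{lem:improved_descent_smooth} are chosen exactly as stated. This constant-chasing across the two branches of $\min$ in $\eta_t$ and the two branches of $\min$ in $r(\Delta_t)$ is the technical heart of the proof.

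Once the induction closes, telescoping $\Delta_{t+1}\le \Delta_t - \tfrac{\eta_t}{2}\|\nabla f(\bw_t)\|^2$ yields $\sum_{t=0}^{T-1}\eta_t\|\nabla f(\bw_t)\|^2\le 2(f(\bw_0)-f^\star)$, so $\min_{t<T}\|\nabla f(\bw_t)\|^2\le 2\Delta_0/\sum_t\eta_t$, which is the first inequality of \eqref{eq:thm_GD_warmup}. To extract the explicit rate, I would lower bound $\eta_t$ using $\min\{a,b\}\ge ab/(a+b)$, obtaining $\eta_t\ge [(4\sqrt{2}+4)(K_0+3^\rho K_\rho\Delta_t^\rho)]^{-1}$, and then apply the AM--HM inequality $\sum_t 1/x_t\ge T^2/\sum_t x_t$ with $x_t=K_0+3^\rho K_\rho\Delta_t^\rho$ to get
\begin{equation*}
\frac{1}{\sum_{t=0}^{T-1}\eta_t}\le \frac{(4\sqrt{2}+4)\bigl(TK_0+3^\rho K_\rho\sum_{t=0}^{T-1}\Delta_t^\rho\bigr)}{T^2},
\end{equation*}
matching the stated $\mathcal{O}$-rate. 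For the constant-learning-rate case \eqref{eq:thm_GD_constant}, the monotonicity $\Delta_t\le\Delta_0$ (from the same induction) ensures that the admissibility check performed at $\Delta_0$ propagates to every $\Delta_t$, and the elementary bound $1/\eta\le (4\sqrt{2}+4)(K_0 + 3^\rho K_\rho \Delta_0^\rho)$ replaces the AM--HM step to yield the result directly.
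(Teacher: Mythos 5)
Your proposal follows essentially the same route as the paper's Appendix C.1 proof: you verify the admissibility condition $\eta_t\Norm{\nabla f(\bw_t)}\le r(\Delta_t)$ via an a priori gradient bound (your line-search argument is precisely how the paper proves its Lemma~\ref{lem:bounded_gradient}, $\Norm{\nabla f(\bw_t)}\le 2\sqrt{K_0\Delta_t+3^\rho\Kr\Delta_t^{\rho+1}}$), then apply Lemma~\ref{lem:improved_descent_smooth} with $\eta_t L(\Delta_t)\le 1$ to get the per-step descent, telescope, and use AM--HM on $\sum_t \eta_t$, handling the constant step size by the same monotonicity induction. The deferred constant-chasing closes exactly as in the paper, so this is the same proof.
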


We can conduct a simple comparison between the two results in Theorem~\ref{thm:GD}.
Since the function gap $\Delta_t$ is monotonically decreasing during the optimization process, the learning rate schedule $\{\eta_t\}$ is monotonically 
increasing, thus can be viewed as a specific adaptive strategy of learning rate warmup.
By $\sum_{t=0}^{T-1} \Delta_t^\rho\leq T\Delta_0^{\rho}$, we know that the convergence rate with learning rate warmup is better than that with a constant learning rate, showing an acceleration effect when $K_\rho$ is significant, i.e., the local smoothness is varying and highly dependent on the suboptimality. 
This can likely happen since $K_0$ can be quite small, as it doesn't need to globally bound the Hessian norm as in the case of $L$-smoothness. 
Moreover, to provide more insights into how significant this gap can be, we further analyze the convex convergence of GD as presented in Theorem~\ref{thm:GD_convex}.
\begin{theorem}\label{thm:GD_convex}
{
    Suppose Assumptions~\ref{assumption:lower_bounded} and \ref{assumption:K_smooth} hold. Further assume that $f$ is convex. Define $\bw^\star=\arg\min_{\bw\in\mathbb{R}^d} f(\bw)$ and $D_0=\|\bw_0-\bw^\star\|$.
    If we use the learning rate schedule $\eta_t=\frac{1}{8\sqrt{2}+8}\min\left\{\frac{1}{K_0}, \frac{1}{3^\rho\Kr}\Delta_t^{-\rho}\right\}$, then we have $\Delta_{t+1}\leq\Delta_t$, and 
    $$
    f(\bw_{T-1})-f(\bw^\star)\leq \mathcal{O}\paren{\frac{D_0^2 K_0}{T}+\frac{\paren{D_0^2\Kr}^{\max \left\{\frac{1}{1-\rho}, 1\right\}}\Delta_0^{\max\left\{\rho-1,0\right\}}}{T^{\max\left\{\frac{1}{1-\rho},0\right\}}}}.$$
    Moreover, if we use the constant learning rate $\eta_t=\eta=\frac{1}{8\sqrt{2}+8}\min\left\{\frac{1}{K_0}, \frac{1}{3^\rho\Kr}\Delta_0^{-\rho}\right\}$, then we have $\Delta_{t+1}\leq\Delta_t$, and
    $$
    f(\bw_{T-1})-f(\bw^\star)\leq \mathcal{O}\paren{\frac{D_0^2K_0}{T}+\frac{D_0^2\Kr\Delta_0^\rho}{T}}.$$
}
\end{theorem}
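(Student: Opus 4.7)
The plan is to derive both parts of the theorem from a single descent-plus-contraction argument and then case-split on the learning rate schedule. First, I would verify by induction on $t$ that the chosen $\eta_t$ satisfies both $\eta_t\leq 1/L(\Delta_t)$ and $\eta_t\|\nabla f(\bw_t)\|\leq r(\Delta_t)$, so that Lemma~\ref{lem:improved_descent_smooth} applies to $(\bw_t,\bw_{t+1})$ and yields the descent inequality
\[
\Delta_{t+1}\leq\Delta_t-\tfrac{\eta_t}{2}\|\nabla f(\bw_t)\|^2,
\]
which already establishes monotonicity $\Delta_{t+1}\leq\Delta_t$. The prefactor $\tfrac{1}{8\sqrt{2}+8}$ in $\eta_t$ is engineered to absorb the constants $C_1,C_2$ in $r(\Delta)$ once one has the self-bounding estimate $\|\nabla f(\bw_t)\|^2\leq 2L(\Delta_t)\Delta_t$, which I would obtain by applying the trust-region quadratic bound of Lemma~\ref{lem:improved_descent_smooth} to a short probe step in direction $-\nabla f(\bw_t)$ and invoking $f(\bw)-\|\nabla f(\bw)\|^2/(2L(\Delta))\geq f^\star$.

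Next I would run the classical convex distance analysis. Expanding $\|\bw_{t+1}-\bw^\star\|^2$, applying convexity $\langle\nabla f(\bw_t),\bw_t-\bw^\star\rangle\geq\Delta_t$ to the cross term and the descent estimate $\eta_t\|\nabla f(\bw_t)\|^2\leq 2(\Delta_t-\Delta_{t+1})$ to the quadratic term, the two collapse into
\[
\|\bw_{t+1}-\bw^\star\|^2\leq\|\bw_t-\bw^\star\|^2-2\eta_t\Delta_{t+1}.
\]
This gives $\|\bw_t-\bw^\star\|\leq D_0$ throughout, and summing yields $\sum_{t=0}^{T-2}\eta_t\Delta_{t+1}\leq D_0^2/2$. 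For the constant learning rate the result is immediate: monotonicity of $\Delta_t$ lower-bounds the left side by $(T-1)\eta\Delta_{T-1}$, and substituting $\eta=\Theta(\min\{1/K_0,\Delta_0^{-\rho}/K_\rho\})$ produces the claimed $\cO(D_0^2(K_0+K_\rho\Delta_0^\rho)/T)$ rate.

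For the adaptive schedule, I would combine the distance-decay recursion with the convexity bound $\|\nabla f(\bw_t)\|\geq\Delta_t/D_0$ to sharpen the descent into
\[
\Delta_{t+1}\leq\Delta_t-\frac{\eta_t\Delta_t^2}{2D_0^2},
\]
and then split on the two arms of the $\min$ in $\eta_t$. In the small-gap regime $\Delta_t\leq(K_0/K_\rho)^{1/\rho}$ one has $\eta_t=\Theta(1/K_0)$, and a standard induction on $1/\Delta_t$ yields the $\cO(K_0D_0^2/T)$ contribution. In the large-gap regime $\eta_t=\Theta(\Delta_t^{-\rho}/K_\rho)$, giving $\Delta_{t+1}\leq\Delta_t-c\Delta_t^{2-\rho}/(K_\rho D_0^2)$; for $\rho<1$ a discrete integration trick on $x\mapsto x^{-(1-\rho)}$ (which is convex, so $\Delta_{t+1}^{-(1-\rho)}-\Delta_t^{-(1-\rho)}\geq(1-\rho)\Delta_t^{-(2-\rho)}(\Delta_t-\Delta_{t+1})$) telescopes to $\Delta_T^{-(1-\rho)}=\Omega(T/(K_\rho D_0^2))$, matching the $\cO((K_\rho D_0^2/T)^{1/(1-\rho)})$ target. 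The edge cases $\rho=1$ (the recursion becomes linear) and $\rho>1$ (the large-gap phase has bounded duration, leaving the $\cO(1)$ overhead $D_0^2K_\rho\Delta_0^{\rho-1}$ that is captured by the exponent $T^0=1$) are handled separately.

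The main obstacle is the adaptive case: managing the regime transition cleanly, verifying that the exponents $\max\{1/(1-\rho),1\}$ and $\max\{\rho-1,0\}$ emerge uniformly across $\rho>0$, and checking that the prefactor $\tfrac{1}{8\sqrt{2}+8}$ simultaneously secures the trust-region hypothesis, the descent coefficient $\tfrac12$, and the telescoping step. The discrete convexity-based integration trick is the technically delicate piece; everything else is careful bookkeeping of constants.
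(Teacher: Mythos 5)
Your proposal is correct, but it reaches the result by a genuinely different route than the paper in two places. First, where the paper proves a cocoercivity property for $(\rho,K_0,\Kr)$-smooth convex functions (Lemmas~\ref{lem:improved_descent_smooth_2} and \ref{lem:cocoercivity}; this is also why the step size carries the extra factor $\tfrac12$ and the trust region $r_t/2$) and then derives $\Norm{\bw_{t+1}-\bw^\star}^2\le\Norm{\bw_t-\bw^\star}^2-2\eta_t\Delta_t$, you combine plain convexity with the already-established descent estimate $\eta_t\Norm{\nabla f(\bw_t)}^2\le 2(\Delta_t-\Delta_{t+1})$ to get the decrement $-2\eta_t\Delta_{t+1}$; this is equally valid, dispenses with the cocoercivity machinery, and only needs $\Norm{\bw_{t+1}-\bw_t}\le r_t$ (your ``self-bounding'' gradient estimate is essentially Lemma~\ref{lem:bounded_gradient}, and the probe-step argument must be done with the same care as there to avoid circularity with the trust-region radius). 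Second, for the warmup schedule the paper finishes in two lines: from $\sum_t\eta_t\Delta_t\le D_0^2/2$ it bounds the minimum summand by $O(D_0^2/T)$ and uses monotonicity of $\Delta_t$ to identify which branch of the $\min$ is attained at $t=T-1$ versus $t=0$, giving the either/or conclusion directly; you instead pass to the per-step recursion $\Delta_{t+1}\le\Delta_t-\eta_t\Delta_t^2/(2D_0^2)$ (legitimate, since your distance recursion gives $\Norm{\bw_t-\bw^\star}\le D_0$) and integrate discretely over two phases, which monotonicity makes contiguous. Your route is more work but yields an anytime bound on $\Delta_{T-1}$ for every $T$, arguably cleaner than the paper's disjunction; the price is that at $\rho=1$ the linear-contraction phase costs a burn-in of order $K_1D_0^2\log(\cdot)$, a logarithm worse than the paper's $K_1D_0^2$, and your constants degrade like $1/\vert 1-\rho\vert$ near $\rho=1$, while the $\rho>1$ telescoping needs the additional (easily verified) control $\Delta_{t+1}\ge\Delta_t/2$. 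The constant-step case and the monotonicity argument via Lemma~\ref{lem:improved_descent_smooth} coincide with the paper's.
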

{
From Theorem~\ref{thm:GD_convex}, to obtain an $\epsilon$-optimal solution $f(\bw)-f(\bw^\star)\leq\epsilon$, the required iteration number is 
$ \mathcal{O}\left(\frac{K_0 D_0^2}{\epsilon}+\frac{\Kr D_0^2 \Delta_0^{\max\{0,\rho-1 \}}}{\epsilon^{\max\{0,1-\rho\}}}\right)$ for the warm-up schedule and
$\mathcal{O}\left(\frac{K_0 D_0^2}{\epsilon}+\frac{K_\rho D_0^2 \Delta_0^\rho}{\epsilon}\right)$ for the constant learning rate. Therefore, we can clearly see that the convergence rate of GD with the specific warm-up schedule is strictly better than that of GD with a constant learning rate if $\Kr>0$. Specifically, if $\rho\geq 1$, the convergence rate of GD with the warmup learning rate schedule is $\mathcal{O}\left(\frac{K_0 D_0^2}{\epsilon}+\Kr D_0^2 \Delta_0^{\rho-1 }\right)$, which implies an acceleration of $\Theta(\Delta_0 T)$ compared to using a constant learning rate.
}

We also come across the following simple but intuitive example to illustrate that the difference in $K_\rho$ terms can be significant.

\begin{example}\label{example:river_valley}
    Consider a specific $2$-dimensional function $f(x,y) = h(x) + g(y)$, with
    \begin{align*}
        h(x) = \left\{ \begin{array}{cc}
            \mathrm{e}^{-\sqrt{K_1}x - 1} - \frac{1}{2}, & x \in (-\infty, -\frac{1}{\sqrt{K_1}}) \\
            \frac{1}{2} K_1 x^2 , & x\in [-\frac{1}{\sqrt{K_1}}, \frac{1}{\sqrt{K_1}}] \\
            \mathrm{e}^{\sqrt{K_1}x - 1} - \frac{1}{2}, & x \in (\frac{1}{\sqrt{K_1}}, +\infty)
        \end{array} \right.
        ,\quad
        g(y) = \frac{1}{2} K_1 y^2.
    \end{align*}
    {Note that $f$ is $(1,K_1, K_1)$-smooth and $f^* = 0$}.
    This function is a simple construction that approximates the river-valley loss landscape presented in \cite{wen2024understanding}, which is believed to capture the properties of neural networks. The landscape is very sharp along the $x$-axis and mild along the $y$-axis, where the $y$-axis can be interpreted as the river. 
    Consider the initialization $x_0 = \frac{\log(\Delta_0)}{\sqrt{K_1}}$ with $\Delta_0 > \mathrm{e}$ and $y_0 > 1$. 
    Then GD with warmup can converge $\tilde{\Theta}(\Delta_0)$ times faster than using constant learning rates. 
    A detailed explanation for this can be found in Appendix~\ref{apx:proof_example_river_valley}. 
    This gap also highlights the importance of warmup when the training doesn't have a good initialization ($\Delta_0$ is large).
\end{example}

Therefore, we can provide a theoretical explanation for the empirical advantages of learning rate warmup~\citep{kalra2024warmup, gilmer2022loss} based on the theorems. 
At the beginning of training, 
the initialization may be poor, leading to a large function gap $\Delta_0$ and high local smoothness $\Norm{\nabla^2 f(\bw_0)}\leq K_0+K_1\Delta_0^\rho$. Thus, the initial learning rate of a constant (or non-increasing) learning rate schedule must be sufficiently small ($\mathcal{O}(\Kr^{-1}\Delta_0^{-\rho})$), to prevent oscillation or divergence. As training progresses, the function gap $\Delta_t$ decreases, leading to a reduction in $\Norm{\nabla^2 f(\bw_t)}$, which in turn allows a larger learning rate. Learning rate warmup accelerates this process, getting the model into regions of the loss landscape with lower local smoothness more quickly. Moreover, it enables the use of larger learning rates after entering the smooth regions, which is often denoted as the target learning rate in the warm-up strategy.
Our theory shows that this acceleration can be significant.

\subsection{Empirical Validation of the Theoretical Schedule}\label{sec:validation_schedule_empirical}

In this section, we provide some empirical evidence to support our claim that the theoretically derived schedule $\{ \eta_t \}$ in Theorem~\ref{thm:GD} is a valid representation of the warmup schedules. 

\paragraph{Synthetic Experiment.} We do an empirical validation of the simple synthetic river-valley minimization problem described in Example~\ref{example:river_valley}. In the specific experimental setup, we set $\Delta_0 = 1000$ and $K_1 = 10$, $x_0 = \frac{\log(\Delta_0)}{\sqrt{K_1}}$ and $y_0 = 2$. We consider 3 schedules for comparison: constant, our theoretical warmup described in Theorem~\ref{thm:GD}, and linear warmup. To ensure a fair comparison, we carefully tune the learning rate scale for all schedules, i.e., we find an optimal constant to multiply the learning rate to achieve the fastest convergence (without leading to divergence). The results are shown in Figure~\ref{fig:synthetic-comparison}, where we can observe that the theoretical warmup schedule and linear warmup schedule achieve a similar significant acceleration in loss convergence compared to the constant schedule. The learning rate dynamics figure also shows that both warmup schedules enable a much larger stable learning rate compared to the constant schedule without warmup.

\begin{figure}[t]
\vspace{-0.5cm}
    \centering
    \subfigure[loss convergence]{\includegraphics[width=0.45\textwidth]{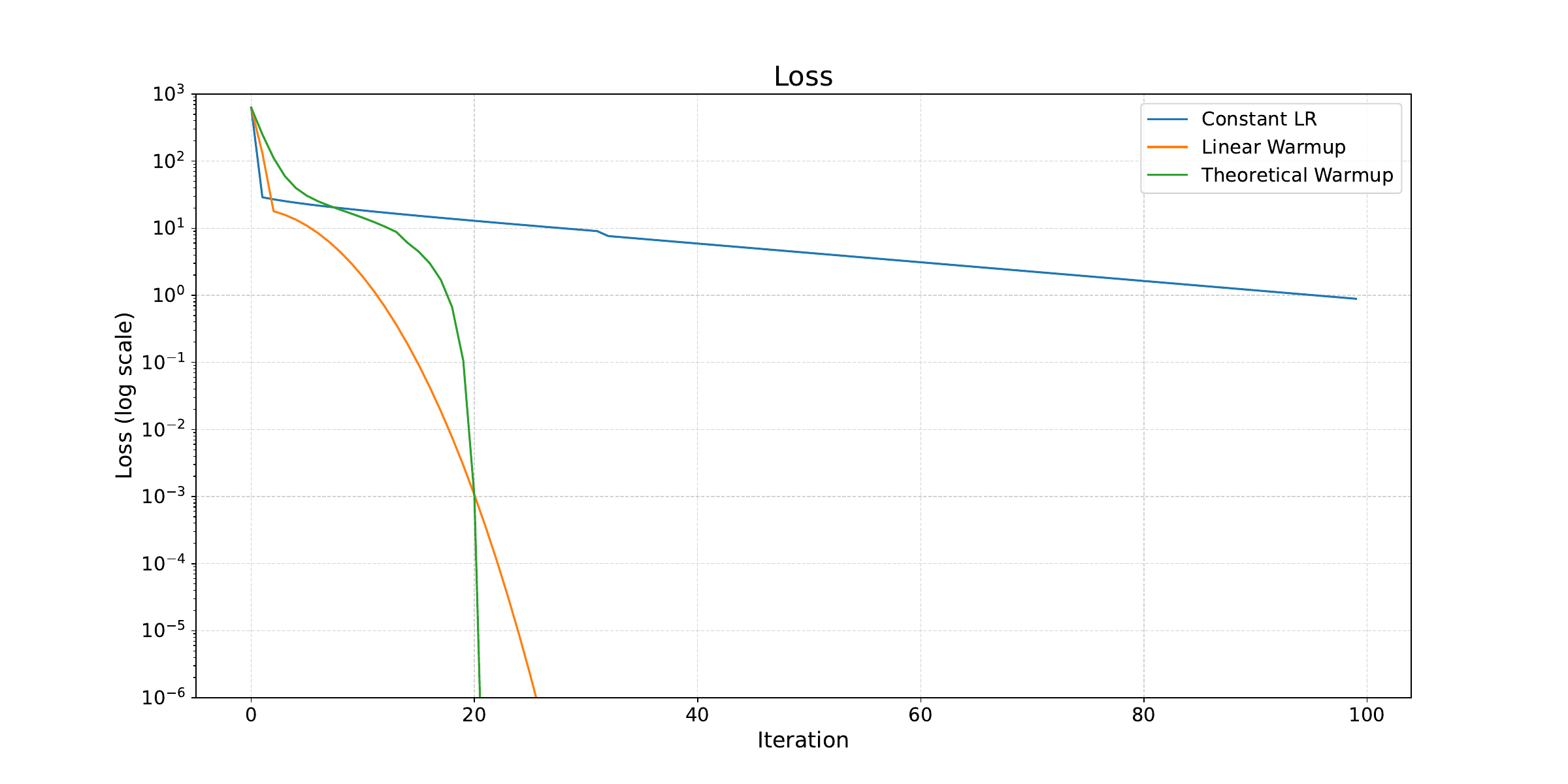}}\hfill
    \subfigure[lr shcedule in the training]{\includegraphics[width=0.45\textwidth]{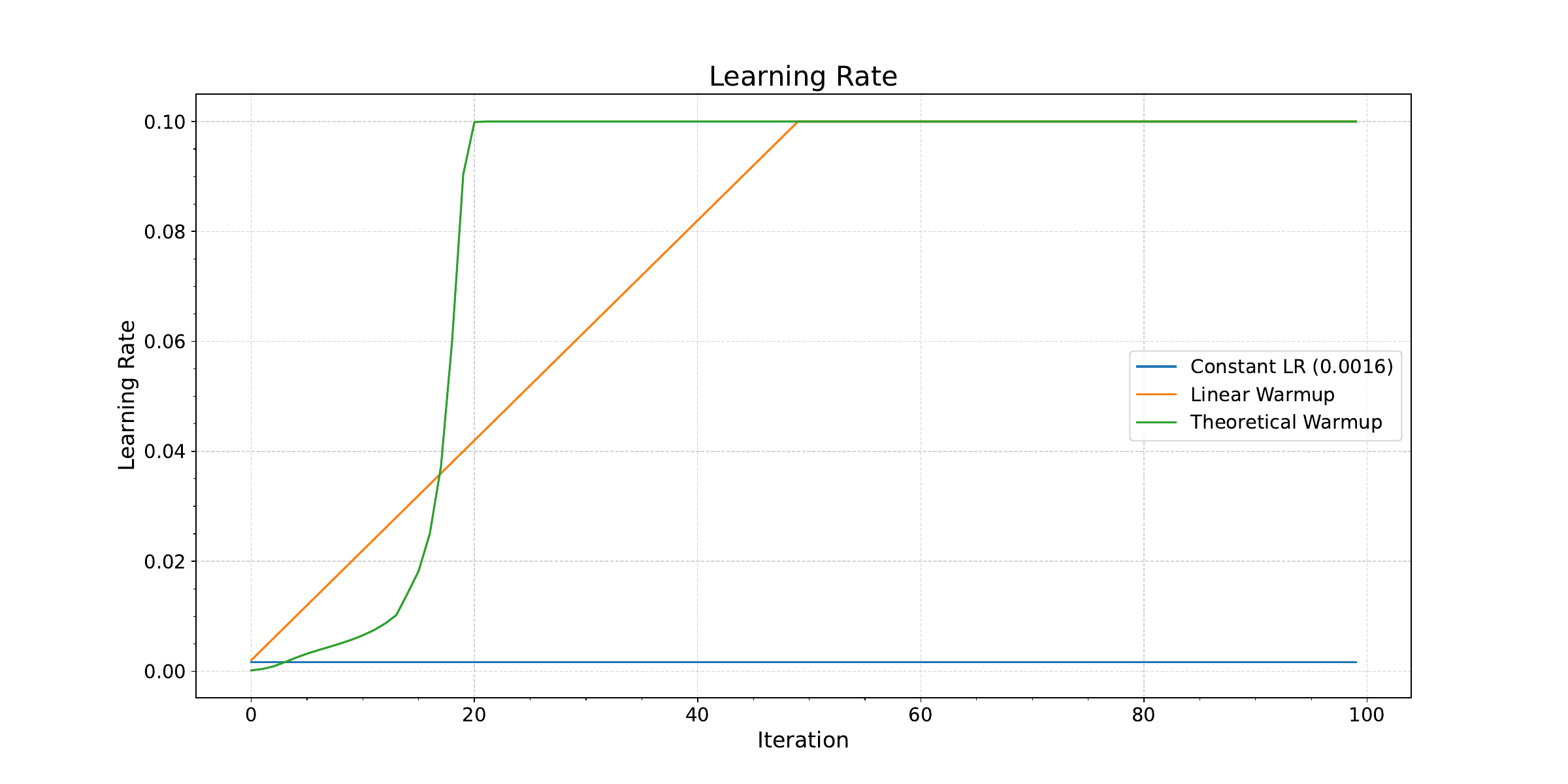}}\hfill
  \vspace{-0.3cm}
  \caption{An empirical experiment based on the synthetic problem setting in Example~\ref{example:river_valley}. The loss convergence curves are on the left side, and the learning rate dynamics are on the right side.}
  \label{fig:synthetic-comparison}
\end{figure}

\paragraph{ResNet on CIFAR.} We train a ResNet18 on CIFAR-10 for 100 epochs with SGD (without momentum), using the linear warmup schedule, no warmup (constant) schedule, and our theoretical schedule $\eta_t= \frac{1}{4\sqrt{2}+4}\min\left\{\frac{1}{K_0}, \frac{1}{3^\rho \Kr}\Delta_t^{-\rho}\right\}$. Following common practice, we set the first $10$ epochs as the warm-up phase, and we do cosine decay after this initial phase. We set $\rho=1$ for the theoretical schedule, and tune $K_0 \in\{1,4,8,16\}$ and $K_\rho \in\{1 / 4,1 / 2,1,4\}$. For the constant and linear warmup schedules, we set the target learning rate to be the same as the target learning rate of the theoretical schedule, i.e., $\frac{1}{4\sqrt{2}+4} \frac{1}{K_0}$, to ensure a fair comparison.

As displayed in Figure~\ref{fig:lr-schedule-comparison}, we can observe that the theoretical schedule increases similarly to the linear warmup schedule, but is steeper in the first place, making a more concave curve. Also, since we use mini-batch gradients instead of full gradients, the loss is not monotonically decreasing, so there is some small oscillation before the schedule reaches a plateau. This learning rate schedule shows a valid warmup phase and achieves the plateau faster than the linear warmup schedule.
Moreover, we list the performance in Table~\ref{tab:accuracy_results_resnet}, showing that the theoretical schedule achieves even better performance than linear warmup, outperforming the constant schedule. Therefore, the theoretical schedule employed in Theorem~\ref{thm:GD} can be considered a valid representative of the warmup schedules, and thus, our theory built on this schedule does present the benefits of doing warmup.

\begin{table}[h]
    \centering
    \begin{tabular}{lccc}
    \toprule
    & Theoretical Warmup & Linear Warmup & No Warmup (Constant) \\
    \midrule
    Test Epoch Accuracy & $0.8589 \pm 0.0023$ & $0.8577 \pm 0.0023$ & $0.8562 \pm 0.0019$ \\
    \bottomrule
    \end{tabular}
    \caption{The test epoch accuracy after $100$ epochs of training with different warm-up schedules. The results are reported as mean $\pm$ standard deviation over $6$ independent runs.}
    \label{tab:accuracy_results_resnet}
\end{table}

\begin{figure}[t]
\vspace{-0.5cm}
    \centering
    \subfigure[lr schedule in warmup period]{\includegraphics[width=0.45\textwidth]{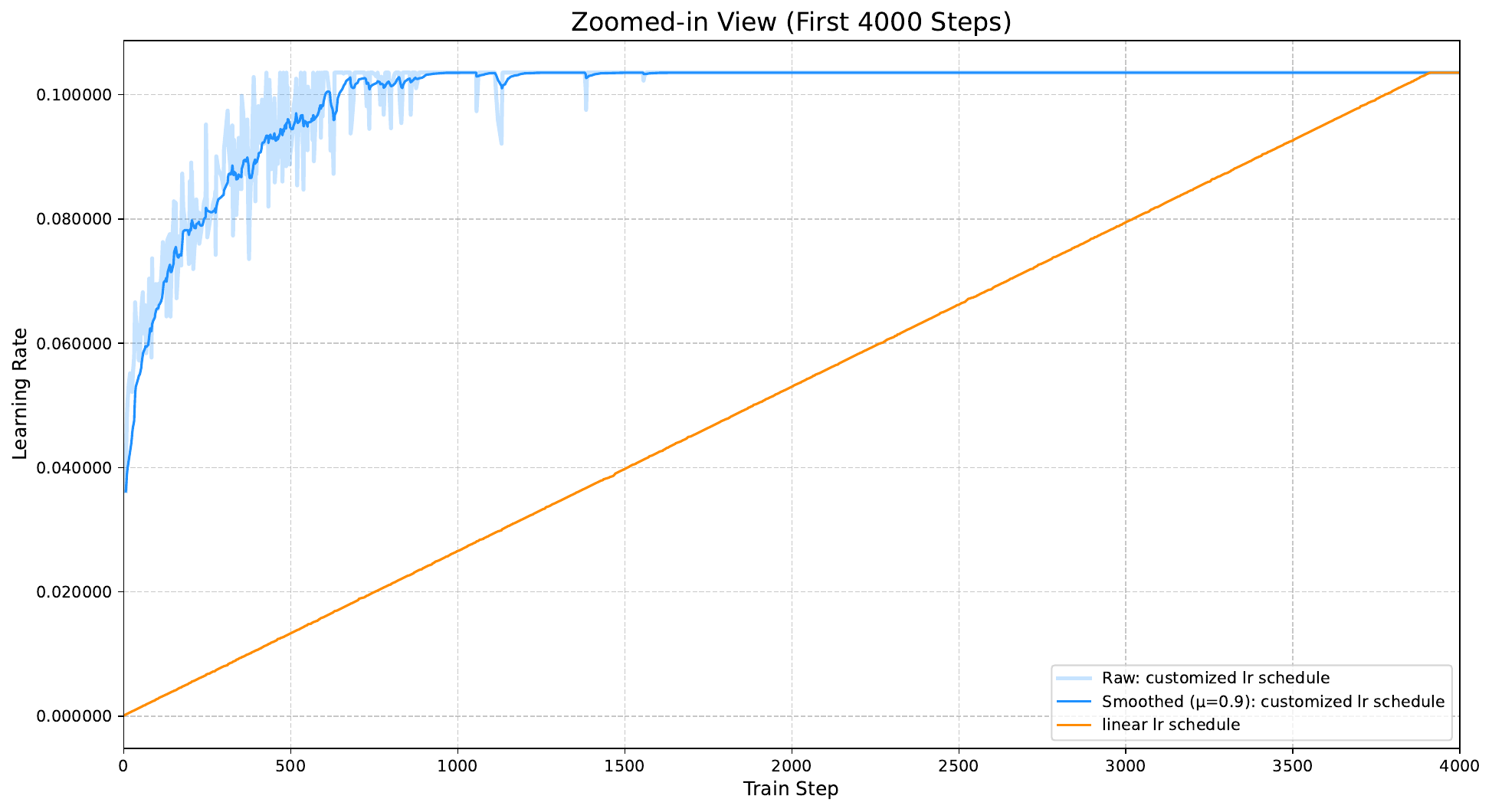}}\hfill
    \subfigure[lr shcedule in the whole training]{\includegraphics[width=0.45\textwidth]{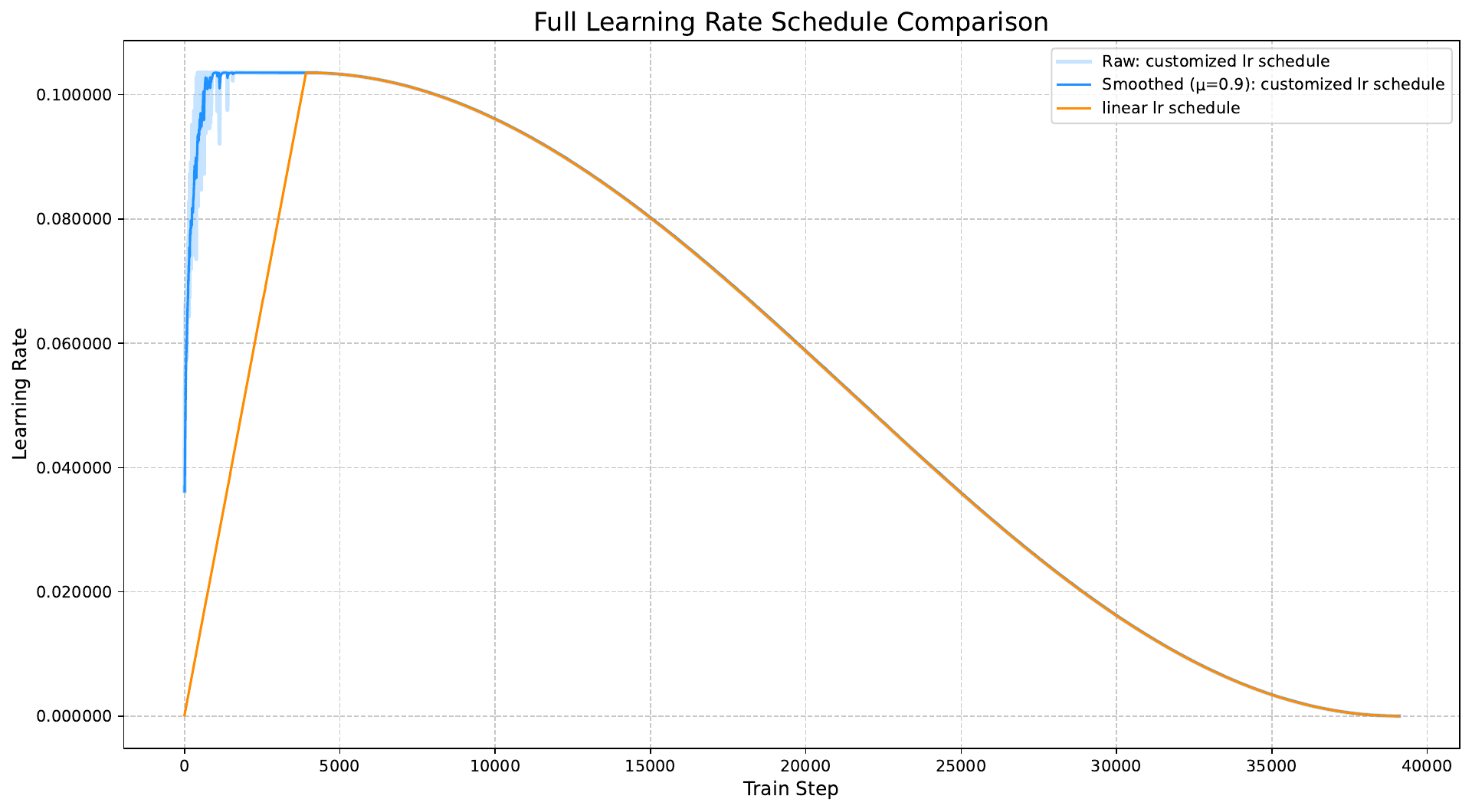}}\hfill
  \vspace{-0.3cm}
  \caption{A comparison between warmup learning rate schedules in ResNet training. The blue line is the theoretical warmup schedule derived in Theorem~\ref{thm:GD}, and the yellow line is the standard linear warmup. We do smoothing for the blue line in the plot to make it clearer.}
  \label{fig:lr-schedule-comparison}
\end{figure}

\subsection{Lower Bound of GD}
In this section, we consider the lower bound of GD with non-increasing learning rate schedules under the special case of $\rho=1$.

\begin{theorem}\label{thm:lower_bound_gd}
    Given $K_1,\epsilon$ as the desired accuracy, $\Delta$ as the initial loss gap, for GD with any non-increasing learning rate sequence $\{\eta_t\}$, there exists a function $f:\RR \to \RR$ that is $(1,\epsilon \sqrt{K_1},4\pi K_1)$-smooth, lower bounded by $f^*$ and $f(\bw_0)-f^\star\leq8\Delta$, such that GD needs at least
    \begin{align*}
        \Omega \left( \frac{K_1 \Delta^2}{\epsilon^2} \right)
    \end{align*}
    iterations to achieve $\Norm{\nabla f(\bw)} \le \epsilon$.
\end{theorem}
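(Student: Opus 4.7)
The plan is to construct, for each non-increasing schedule $\{\eta_t\}$, a one-dimensional hard function $f:\mathbb{R}\to\mathbb{R}$ in the class $(1,\epsilon\sqrt{K_1},4\pi K_1)$ with $f(\bw_0)-f^\star\le 8\Delta$, for which GD needs $\Omega(K_1\Delta^2/\epsilon^2)$ iterations to produce a point with $\Norm{\nabla f(\bw)}\le\epsilon$. The tension I would engineer is that the Hessian at $\bw_0$ saturates the generalized smoothness bound, namely is $\Theta(K_1\Delta)$, forcing the non-increasing schedule to use tiny step sizes throughout, while the unique route to a near-stationary point runs through a long, nearly-linear "valley" of length $\Theta(\Delta/\epsilon)$ on which the gradient is bounded below by $\epsilon$ from below.

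Construction sketch. Put $f^\star=0$. On a valley interval $[0,R]$ with $R=\Theta(\Delta/\epsilon)$ take
\[
f(x) \;=\; 2\epsilon x \;+\; \frac{\epsilon}{4\sqrt{K_1}}\sin\bigl(2\sqrt{K_1}\,x\bigr),
\]
so that $f'(x)\in[\tfrac{3}{2}\epsilon,\tfrac{5}{2}\epsilon]$ and $|f''(x)|\le \epsilon\sqrt{K_1}=K_0$; importantly $|f'(x)|>\epsilon$ throughout, so the target $\Norm{\nabla f}\le\epsilon$ cannot be met while the iterate is inside the valley. To the right of the valley, glue on a steep quadratic bowl $f(x) = f(R) + 2\epsilon(x-R) + \tfrac{1}{2}\,K_1\Delta\,(x-R)^2$, smoothed in a window of width $\Theta(1/\sqrt{K_1})$ so that $f\in C^2$. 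To the left of the valley, glue on a smoothed nonnegative minimum plateau realizing $f^\star=0$. Choose $\bw_0$ inside the bowl so that $f(\bw_0)\in(\Delta,8\Delta]$ and $|\nabla f(\bw_0)|=\Theta(\Delta\sqrt{K_1})$. A direct check verifies $\Norm{\nabla^2 f(\bw)}\le \epsilon\sqrt{K_1}+4\pi K_1 (f(\bw)-f^\star)$ globally: in the valley the $K_0$ term handles the Hessian, while in the bowl and its transition zone $f-f^\star=\Theta(\Delta)$ so that the $4\pi K_1(f-f^\star)$ term dominates, and the $4\pi$ slack absorbs the short transition.

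Lower bound argument. I would case-split on $\eta_0$. If $\eta_0\le c/(K_1\Delta)$ for a small constant $c$, Lemma~\ref{lem:improved_descent_smooth} yields monotone descent, so GD contracts inside the bowl in $O(\log)$ steps and then enters the valley. Inside the valley each step moves the iterate leftward by at most $\tfrac{5}{2}\epsilon\eta_t \le \tfrac{5}{2}\epsilon\eta_0$, and since $|\nabla f|>\epsilon$ throughout, the gradient condition is not met until the iterate exits past $x=0$, requiring at least $R/(\tfrac{5}{2}\epsilon\eta_0)=\Omega(\Delta/(\epsilon^2\eta_0))=\Omega(K_1\Delta^2/\epsilon^2)$ further steps. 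If instead $\eta_0>c/(K_1\Delta)$, I would rescale the construction to the given $\eta_0$: either enlarge the valley (keeping the $8\Delta$ loss budget via the bowl depth) so that the first step still lands inside the valley, after which non-increasingness locks $\eta_t\le\eta_0$ and the preceding analysis applies, or periodize the valley so that no single step can leap to a near-stationary region. In either case the iteration count is $\Omega(K_1\Delta^2/\epsilon^2)$.

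Main obstacles. The most technical part is the smooth patching of bowl, valley, and left-side plateau while keeping the sharp constants $\epsilon\sqrt{K_1}$ and $4\pi K_1$ in the smoothness bound; the factor $4\pi$ in the statement strongly suggests that a sinusoidal transition of width $\Theta(1/\sqrt{K_1})$ is the natural choice and accounts for the precise constant. The aggressive-$\eta_0$ regime is the second delicate point, because a single large step could in principle overshoot the valley to a zero-gradient region; the fix is to exploit the $\forall\{\eta_t\}\,\exists f$ quantifier order and tailor the valley length (or periodize it) in terms of $\eta_0$, so that no schedule can evade the slow valley traversal.
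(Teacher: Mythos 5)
Your small-step regime is sound and mirrors the paper's Case 1: when every $\eta_t\le c/(K_1\Delta)$, a shallow valley of slope $\Theta(\epsilon)$ and length $\Theta(\Delta/\epsilon)$ forces $\Omega(K_1\Delta^2/\epsilon^2)$ iterations, and your sinusoidal-perturbation valley with $|f''|\le\epsilon\sqrt{K_1}$ is a legitimate variant of the paper's piecewise construction. The genuine gap is the large/intermediate learning rate regime, which is precisely where the paper's technical novelty lies. Your two proposed fixes do not close it. ``Enlarge the valley'' is blocked by the loss budget: since the valley slope is at least $\tfrac{3}{2}\epsilon$ and $f\ge f^\star$, its length is capped at $R\le \tfrac{16\Delta}{3\epsilon}$, independent of $\eta_0$. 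Consequently, for a \emph{constant} schedule $\eta_t\equiv\eta_0$ with, say, $\eta_0\approx 1/(\epsilon\sqrt{K_1})$ (well above $c/(K_1\Delta)$ in the interesting regime $\epsilon\ll\sqrt{K_1}\Delta$), each valley step advances by $\Theta(\epsilon\eta_0)=\Theta(1/\sqrt{K_1})$, so the valley is traversed and the left near-stationary region reached in only $O(\sqrt{K_1}\Delta/\epsilon)$ iterations, which is smaller than $K_1\Delta^2/\epsilon^2$ by a factor $\epsilon/(\sqrt{K_1}\Delta)$. Your bowl does not rescue this: nothing forces a non-increasing schedule to ever drop below the stability threshold of the bowl, and the theorem must defeat \emph{every} such schedule, not only those that are stable at $\bw_0$. ``Periodize the valley'' is the right instinct but is not an argument: with a fixed periodic profile you must show that every landing point of every large step has gradient $>\epsilon$ and function value within the $8\Delta$ budget, and a fixed period cannot do this against all step sizes.

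What the paper does instead, and what your sketch is missing, is a construction that is adaptive to the \emph{entire} sequence $\{\eta_t\}$, not just $\eta_0$: it sets a threshold $2/(K_1\Delta)$, uses monotonicity to reduce to a single crossing time $\tau$, and on the large-step phase builds one sine piece per iteration on $(x_{t+1},x_t]$ with $x_{t+1}=x_t-\eta_t\sqrt{K_1}\Delta$, with amplitude/phase chosen so that value, first and second derivative match at every junction and the gradient at every iterate is \emph{exactly} $\sqrt{K_1}\Delta$ while $f=\Theta(\Delta)$ (so the $(1,\cdot,4\pi K_1)$ bound holds). This stalls GD with zero progress during the large-step phase; a quartic bridge then hands the iterate to the slow valley exactly when the schedule drops below the threshold, and the always-large case (paper's Case 2, where GD never reaches an $\epsilon$-stationary point) is handled by an infinite chain plus an exponential tail enforcing $f^\star=0$. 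Without an analogue of this schedule-adaptive stalling gadget, your proposal proves the bound only for schedules that are uniformly $O(1/(K_1\Delta))$, which is strictly weaker than the theorem.
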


The lower bound matches the upper bound result for constant learning rates in Theorem~\ref{thm:GD}, implying the tightness of the bounds. 
The proof of Theorem~\ref{thm:lower_bound_gd} is presented in Appendix~\ref{apx:proof_gd_lower_bound}. The specific lower bound construction is based on the use of trigonometric functions, which satisfy the $(1, K_0, K_1)$-smoothness but cannot be adopted by the $(\rho,L_0,L_\rho)$-smoothness assumptions for any $\rho>0$ as noted in Example~\ref{example:trigonometric}.

Note that compared to the lower bound for $(1,L_0,L_1)$-smoothness presented in \cite{zhang2020gradient}, 
Theorem~\ref{thm:lower_bound_gd} is more general since it allows general non-increasing learning rate schedules rather than only constant learning rates. This requires novel construction and proof techniques and may be of interest for future study on lower bounds.
Also, Theorem~\ref{thm:lower_bound_gd} does not have additional logarithmic terms in the lower bound, which serves as evidence for the fact that $(1,K_0,K_1)$-smoothness is strictly weaker than $(1,L_0,L_1)$-smoothness and more difficult to optimize.

\section{Theory of SGD}\label{sec:SGD}
In this section, we analyze SGD for $(\rho,K_0,\Kr)$-smooth functions:
$$\bw_{t+1}=\bw_t-\eta_t\bg_t,$$
where $\bg_t=\nabla f(\bw_t,\xi_t)$. We first make the following assumptions on the noise.

\begin{assumption}\label{assumption:a.s.bounded}
    $\EE_\xi \nabla f(\bw,\xi)=\nabla f(\bw)$ and $\Norm{\nabla f(\bw,\xi)-\nabla f(\bw)}\leq \sigma$ for some $\sigma>0$ and all $\bw\in\mathbb{R}^d$, with probability $1$.
\end{assumption}

\begin{assumption}\label{assumption:a.s.ABC}
    $\EE_\xi \nabla f(\bw,\xi)=\nabla f(\bw)$ and $\Norm{\nabla f(\bw,\xi)-\nabla f(\bw)}^2\leq A\paren{f(\bw)-f^\star}+B\Norm{\nabla f(\bw)}^2+ \sigma^2$ for some $A\geq0,B\geq 0, \sigma>0$ and all $\bw\in\mathbb{R}^d$, with probability $1$.
\end{assumption}

Assumption~\ref{assumption:a.s.bounded} is commonly used in stochastic optimization, especially under generalized smoothness settings~\citep{zhang2020improved, zhang2020gradient, li2023convergence, crawshaw2022robustness}, for example $(L_0,L_1)$-smoothness.

Assuption~\ref{assumption:a.s.ABC} was originally introduced by~\cite{khaled2023better} in the expectation form, and is a more general noise assumption compared with the bounded variance assumption or the relaxed growth condition~\citep{bottou2018optimization}.
It covers a wide range of randomness sources, such as subsampling~\citep{gower2019sgd} and gradient compression~\citep{alistarh2017qsgd, khirirat2018distributed}, which may not be captured by the bounded variance assumption. We adopt this assumption in our theoretical analysis to consider more general settings, and also explore how learning rate warmup can help with convergence when the noise term is hard to handle.
Also note that the assumptions we considered here can be relaxed to the sub-gaussian noise assumptions, with all the theorems still valid up to some logarithmic terms, and we include the corresponding details in Appendix~\ref{apx:subgaussian}.

\subsection{Bounded Noise}\label{sec:SGD_bounded}

In this section, we use the following notations 
\begin{align*}
    r_t\triangleq \min\left\{C_1\Delta_t^{-\frac{\rho-1}{2}},C_2\right\}, \quad L_t \triangleq 2K_0+\Kr\paren{2\Delta_t}^\rho, \quad \text{and} \quad G_t\triangleq\sqrt{K_0\Delta_t+\Kr3^\rho\Delta_t^{\rho+1}}
\end{align*}
to simplify the increasing learning rates in the theorems. 
For Theorem~\ref{thm:SGD_constant_a.s.bounded}, we simply replace $\Delta_t$ in $r_t,L_t,G_t$ with $4\Delta_0$ to obtain $r,L,G$, and use them to simplify the constant learning rate.

\begin{theorem}\label{thm:SGD_adaptive_a.s.bounded}
    Suppose Assumptions~\ref{assumption:lower_bounded}, \ref{assumption:K_smooth} and \ref{assumption:a.s.bounded} hold with $\rho\geq 1$. $\{\bw_t\}$ is generated by SGD.
    Let 
    $\eta_t=\min \left\{\frac{1}{8(\sqrt{2}+1) K_0}, \frac{1}{8(\sqrt{2}+1) K_\rho\left(3 \Delta_t\right)^\rho}, \frac{r_t}{2 \sigma}, \sqrt{\frac{\Delta_0}{\sigma^2 T L_t}}, \frac{\Delta_0}{2 \sigma G_t \sqrt{T \log \frac{1}{\delta}}}\right\}.$
    Then with probability at least $1-\delta$,
    it holds that $\Delta_t\leq 4\Delta_0,\forall t\in[T]$ and 
    \begin{align*}
        \min_{t<T}\Norm{\nabla f(\bw_t)}^2&\leq \mathcal{O}\left(\frac{\Delta_0}{T}\paren{K_0+\Kr\Delta_{avg,\rho}}+\sigma\frac{\Delta_0}{T}\paren{C_1^{-1}\Delta_{avg,\frac{\rho-1}{2}}+C_2^{-1}}\right)\\
        &+ \mathcal{O}\paren{\sigma\sqrt{\frac{\Delta_0\log\frac{1}{\delta}}{T}}\paren{K_0+\Kr\Delta_{avg,\rho}}^{1/2}},
    \end{align*}
    where $\Delta_{avg,\rho}=\sum_{t=0}^{T-1}\Delta_t^\rho/T$.
    \end{theorem}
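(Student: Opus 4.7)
The plan is a high-probability induction that maintains the boundedness $\Delta_t\leq 4\Delta_0$ for every $t\leq T$, combined with a telescoping of the local descent inequality from Lemma~\ref{lem:improved_descent_smooth}. Working under the inductive hypothesis $\Delta_s\leq 4\Delta_0$ for $s\leq t$, the first two entries of the minimum defining $\eta_s$ make $\eta_sL_s\leq\tfrac12$; the third entry gives $\eta_s\sigma\leq r_s/2$; and — using the a~priori bound $\|\nabla f(\bw_s)\|\lesssim G_s$ that follows from Lemma~\ref{lem:improved_descent_smooth} applied along the negative gradient direction with step size in the admissible range — the remaining entries together force $\eta_s\|\bg_s\|\leq r_s$ almost surely. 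This lets me invoke Lemma~\ref{lem:improved_descent_smooth} to obtain
\begin{align*}
\Delta_{s+1}\leq\Delta_s-\eta_s\dotprod{\nabla f(\bw_s)}{\bg_s}+\tfrac{\eta_s^2L_s}{2}\|\bg_s\|^2.
\end{align*}

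Next I would write $\bg_s=\nabla f(\bw_s)+\bn_s$, absorb the cross terms using $\eta_sL_s\leq\tfrac12$, and telescope to get
\begin{align*}
\sum_{s=0}^{t}\tfrac{\eta_s}{2}\|\nabla f(\bw_s)\|^2\leq\Delta_0-\sum_{s=0}^{t}\eta_s\dotprod{\nabla f(\bw_s)}{\bn_s}+\sum_{s=0}^{t}\eta_s^2L_s\sigma^2.
\end{align*}
The fourth entry of the learning-rate minimum makes each variance term at most $\Delta_0/T$, so the whole variance sum is bounded by $\Delta_0$. The fifth entry, together with $\|\nabla f(\bw_s)\|\lesssim G_s$, bounds each martingale increment by $\Delta_0/\sqrt{T\log(1/\delta)}$, so Azuma-Hoeffding — applied to the martingale stopped at $\tau=\inf\{t:\Delta_t>4\Delta_0\}$, which removes the circularity between the induction hypothesis and the concentration — yields $|\sum_{s\leq t}\eta_s\dotprod{\nabla f(\bw_s)}{\bn_s}|\leq\Delta_0$ uniformly in $t\leq T$ with probability at least $1-\delta$. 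Combining gives $\Delta_{t+1}\leq 3\Delta_0<4\Delta_0$, which closes the induction, and on the same event $\sum_{s<T}\tfrac{\eta_s}{2}\|\nabla f(\bw_s)\|^2\leq 3\Delta_0$.

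For the final rate, I would use Cauchy-Schwarz to pass from $\min_{s<T}\|\nabla f(\bw_s)\|^2\leq 6\Delta_0/\sum_{s<T}\eta_s$ to $6\Delta_0\sum_{s<T}(1/\eta_s)/T^2$. Since $1/\eta_s$ is the maximum over five quantities, $\sum_{s<T}(1/\eta_s)$ splits into five additive contributions; bounding $\sum_s\Delta_s^\rho=T\Delta_{\mathrm{avg},\rho}$, $\sum_s\sqrt{L_s}\leq\sqrt{T\sum_sL_s}$, $\sum_sG_s\leq\sqrt{T\sum_sG_s^2}$, and using $G_s^2\asymp\Delta_sL_s$ with $\Delta_s\leq 4\Delta_0$, reproduces precisely the five contributions in the stated bound — in particular the factor $(K_0+K_\rho\Delta_{\mathrm{avg},\rho})^{1/2}$ emerges from the fifth term via $\sqrt{T\sum_sG_s^2}$. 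The main obstacle I anticipate is the very first verification: confirming that the five-way minimum for $\eta_t$ truly keeps $\eta_t\|\bg_t\|\leq r_t$ requires an a~priori pointwise bound on $\|\nabla f(\bw_t)\|$ derived from the \emph{local} Lipschitz property, and that bound itself depends on the induction hypothesis — so the stopping-time formalization must be arranged carefully to avoid circular reasoning while still yielding a clean union bound over $t\leq T$.
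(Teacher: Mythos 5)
Your proposal is correct and follows essentially the same route as the paper: a stopping time $\tau=\inf\{t:\Delta_t>4\Delta_0\}$, the local descent lemma with the step-size constraints guaranteeing $\|\bw_{t+1}-\bw_t\|\le r_t$ and $\eta_tL_t\le\tfrac12$, Azuma--Hoeffding on the stopped martingale, and then the HM--AM (Cauchy--Schwarz) split of $\sum_t 1/\eta_t$ into the five contributions using $\Delta_t\le4\Delta_0$. One small simplification over your worry: since $\eta_t$ adapts to $\Delta_t$ and the gradient bound $\Norm{\nabla f(\bw_t)}\le 2G_t$ holds pointwise from the smoothness assumption alone, the verification $\eta_t\Norm{\bg_t}\le r_t$ is deterministic and needs no induction hypothesis, so no circularity actually arises.
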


\begin{theorem}\label{thm:SGD_constant_a.s.bounded}
    Under the same assumptions as Theorem~\ref{thm:SGD_constant_a.s.bounded}, if we use a constant learning rate 
    $\eta_t \equiv \eta=\min \left\{\frac{1}{8(\sqrt{2}+1) K_0}, \frac{1}{8(\sqrt{2}+1) K_\rho\left(12 \Delta_0\right)^\rho}, \frac{r}{2 \sigma}, \sqrt{\frac{\Delta_0}{\sigma^2 T L}}, \frac{\Delta_0}{\sigma G \sqrt{2 T \log \frac{1}{\delta}}}, \frac{\Delta_0}{\sigma \alpha}\right\}$, 
    where $\alpha=\left(G+L C_2\right)\left(1+\sqrt{2 \log \frac{1}{\delta}}\right)$.
    Then with probability at least $1-\delta$, it holds that 
    $\Delta_t\leq 4\Delta_0,\forall t\in[T]$ and 
    \begin{align*}
        \min_{t<T}\Norm{\nabla f(\bw_t)}^2&\leq \mathcal{O}\left(\frac{\Delta_0}{T}\paren{K_0+\Kr\Delta_{0}^\rho}+\sigma\frac{\Delta_0}{T}\paren{C_1^{-1}\Delta_{0}^{\frac{\rho-1}{2}}+C_2^{-1}}\right)\\
        &+ \mathcal{O}\paren{\sigma\sqrt{\frac{\Delta_0\log\frac{1}{\delta}}{T}}\paren{K_0+\Kr\Delta_{0}^{\rho}}^{1/2}+\sigma\frac{\sqrt{\log\frac{1}{\delta}}}{T}C_2\paren{K_0+\Kr\Delta_0^\rho}}.
    \end{align*}
\end{theorem}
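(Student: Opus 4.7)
The plan is to combine the local descent lemma (Lemma~\ref{lem:improved_descent_smooth}) with a martingale concentration argument, carried out inductively to maintain the invariant $\Delta_t \leq 4\Delta_0$ throughout the entire trajectory. Because the learning rate is now \emph{constant} rather than adaptive, this invariant is no longer automatic from the one-step analysis and must be established probabilistically; this is the principal new difficulty relative to Theorem~\ref{thm:SGD_adaptive_a.s.bounded}.

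First I would set up the induction: suppose $\Delta_s \leq 4\Delta_0$ for all $s \leq t$. Under this hypothesis, $L_s \leq L$ and $r_s \geq r$ by monotonicity of $L(\cdot)$ and $r(\cdot)$, and $\|\nabla f(\bw_s)\|^2 \leq 2 L_s \Delta_s \leq G^2$ from a standard gradient-norm bound derived via Lemma~\ref{lem:improved_descent_smooth}. Hence $\|\bw_{s+1}-\bw_s\| = \eta\|\bg_s\| \leq \eta(G+\sigma) \leq r$, using $\eta \leq r/(2\sigma)$ together with the bounds $\eta \leq 1/[8(\sqrt{2}+1)K_0]$ and $\eta \leq 1/[8(\sqrt{2}+1)K_\rho(12\Delta_0)^\rho]$, so the descent lemma applies and gives
\begin{equation*}
\Delta_{s+1} \leq \Delta_s - \eta\langle \nabla f(\bw_s), \bg_s\rangle + \tfrac{\eta^2 L}{2}\|\bg_s\|^2.
\end{equation*}
Writing $\bg_s = \nabla f(\bw_s) + \bn_s$ with $\|\bn_s\|\leq \sigma$, expanding, and using $\eta L \leq 1/2$ (which follows from the two conditions above) yields a one-step descent of the form
\begin{equation*}
\Delta_{s+1} \leq \Delta_s - \tfrac{\eta}{2}\|\nabla f(\bw_s)\|^2 - \eta(1-\eta L)\langle\nabla f(\bw_s),\bn_s\rangle + \tfrac{\eta^2 L\sigma^2}{2}.
\end{equation*}

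Next I would telescope this over $s=0,\ldots,t$ and isolate the martingale $M_t := \sum_{s\leq t}\eta(1-\eta L)\langle \nabla f(\bw_s),\bn_s\rangle$. Under the induction hypothesis its increments are uniformly bounded by $\eta\sigma G$, so Freedman's inequality, applied to the stopping time $\tau=\inf\{s:\Delta_s>4\Delta_0\}$ rather than via a naive union bound, yields with probability at least $1-\delta$
\begin{equation*}
\max_{s\leq T} |M_s| \lesssim \eta\sigma G\sqrt{T\log(1/\delta)}.
\end{equation*}
Combined with the variance drift $\eta^2 L\sigma^2 T/2$, the remaining step-size conditions $\eta \leq \sqrt{\Delta_0/(\sigma^2 T L)}$ and $\eta \leq \Delta_0/(\sigma G\sqrt{2T\log(1/\delta)})$ bound each contribution by a constant multiple of $\Delta_0$, closing the induction with $\Delta_{s+1}\leq 4\Delta_0$. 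The extra condition $\eta \leq \Delta_0/(\sigma\alpha)$ with $\alpha=(G+LC_2)(1+\sqrt{2\log(1/\delta)})$ is what is required to guarantee, also probabilistically, that the single-step jump $\|\bw_{s+1}-\bw_s\|\leq r$ is never violated after the noise concentration is accounted for; this is precisely the term that does not appear in the adaptive version, where $\eta_t$ could locally absorb the excess.

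With the invariant secured, summing the one-step descent gives $\min_{t<T}\|\nabla f(\bw_t)\|^2 \leq (2/(\eta T))(\Delta_0 + \max_t|M_t| + \eta^2 L\sigma^2 T/2)$, and substituting the specific $\eta$ produces the stated bound term by term: $\eta\lesssim 1/L$ yields $\Delta_0(K_0+K_\rho\Delta_0^\rho)/T$; $\eta\lesssim r/\sigma$ yields $\sigma\Delta_0(C_1^{-1}\Delta_0^{(\rho-1)/2}+C_2^{-1})/T$; the variance-balanced choice $\eta\lesssim\sqrt{\Delta_0/(\sigma^2 TL)}$ yields $\sigma\sqrt{\Delta_0\log(1/\delta)(K_0+K_\rho\Delta_0^\rho)/T}$; and the $\alpha$-constraint yields the remaining $\sigma C_2(K_0+K_\rho\Delta_0^\rho)\sqrt{\log(1/\delta)}/T$ term. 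The main obstacle is the uniform high-probability control of $\Delta_t$: because the learning rate cannot shrink in reaction to an unlucky noise realization, a single step that exceeded $4\Delta_0$ would also exit the Lipschitz ball of radius $r$ and invalidate Lemma~\ref{lem:improved_descent_smooth}, breaking the entire recursion. The delicate part is therefore choosing each piece of $\eta$ so that a single application of Freedman's inequality at the stopping time $\tau$, together with the deterministic drift, absorbs all failure probability into $\delta$ without paying an extra $\log T$.
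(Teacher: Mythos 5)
Your proposal follows essentially the same route as the paper's proof: a stopping time $\tau=\min\{\inf\{t:\Delta_t>4\Delta_0\},T\}$, the local descent inequality from Lemma~\ref{lem:improved_descent_smooth} applied for $t<\tau$ because $\eta(\Norm{\nabla f(\bw_t)}+\sigma)\le r$, a martingale concentration bound (the paper uses Azuma--Hoeffding on an indicator-stopped martingale difference sequence, you invoke Freedman at the stopping time; these play the same role here), and finally the term-by-term conversion of $\Delta_0/(\eta T)$ into the stated rate via the reciprocals of the constraints on $\eta$. That skeleton is correct, and your accounting of which constraint produces which term in the bound is right up to a minor swap (the $\sigma\sqrt{\Delta_0\log\frac{1}{\delta}/T}$ term comes from the constraint $\eta\le\Delta_0/(\sigma G\sqrt{2T\log\frac{1}{\delta}})$ rather than from $\eta\le\sqrt{\Delta_0/(\sigma^2TL)}$, which gives the log-free analogue).

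The one point you get wrong is the role of the constraint $\eta\le\Delta_0/(\sigma\alpha)$. Under Assumption~\ref{assumption:a.s.bounded} the noise is bounded almost surely, so once $\Delta_t\le 4\Delta_0$ the jump condition $\Norm{\bw_{t+1}-\bw_t}\le\eta(\Norm{\nabla f(\bw_t)}+\sigma)\le r$ holds deterministically from the first three constraints; no concentration is needed to keep the iterate inside the radius-$r$ ball, so $\alpha$ is not there for that purpose. In the paper's proof the $\alpha$ term arises because the martingale is summed up to $t=\tau$ (through the indicator $\mathbf{1}_{\tau\ge t}$), which leaves a boundary term $\eta\Norm{\nabla f(\bw_\tau)}\Norm{\bn_\tau}\left(1+\sqrt{2\log\frac{1}{\delta}}\right)$ at the stopping time, where $\Delta_\tau$ may already exceed $4\Delta_0$ and hence $\Norm{\nabla f(\bw_\tau)}$ may exceed $G$; this is controlled by propagating the gradient bound one step, $\Norm{\nabla f(\bw_\tau)}\le\Norm{\nabla f(\bw_{\tau-1})}+L\Norm{\bw_\tau-\bw_{\tau-1}}\le G+LC_2$, and the constraint $\eta\sigma\alpha\le\Delta_0$ is exactly what absorbs that term into the invariant $\Delta_\tau\le 4\Delta_0$. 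In your formulation the martingale runs only over $s<\tau$, so this boundary term never appears and $\alpha$ is not actually needed to close the induction; it enters only through the prescribed $\eta$ and yields the final $\sigma C_2(K_0+\Kr\Delta_0^\rho)\sqrt{\log\frac{1}{\delta}}/T$ term, so your argument still delivers the stated result. But your stated explanation of why $\alpha$ appears (and why it is absent from the adaptive-rate theorem) is not the actual mechanism, and if you wrote the full proof with the martingale including the $t=\tau$ term you would need the $G+LC_2$ propagation step above to make the increments controllable.
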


We can see that in the stochastic settings, both constant learning rates and the learning rate schedule adapted to $\Delta_t$ achieve the $\cO(1/\sqrt{T})$ convergence rate with high probability, which matches the optimal rate~\citep{arjevani2023lower}.
However, it is not hard to see that
to ensure convergence, we take the adaptive learning rate $\eta_t=g(\Delta_t)$ and the constant learning rate $\eta = g(4\Delta_0)$ in the same pattern, where $g$ is a monotonically decreasing function. Therefore, we have $\eta_t\geq\eta$ and $\Delta_{avg,\rho} \leq \paren{4\Delta_0}^\rho$, and the terms introduced by $K_\rho$ in the convergence rates are also correspondingly related to $\Delta_{avg,\rho}$ and $\paren{4\Delta_0}^\rho$, indicating that the convergence rate in Theorem~\ref{thm:SGD_adaptive_a.s.bounded} is better than that in Theorem~\ref{thm:SGD_constant_a.s.bounded}. 
We also note that $\Delta_{avg,\rho}$ can be significantly smaller than $\paren{4\Delta_0}^\rho$, which is reflected in the following convex example. 

\begin{example}
    Consider the case that $f$ is convex and noise is dominant in the convergence, then $\Delta_t$ is generally in the order of 
    $\mathcal{O}(1/\sqrt{t})$ following a similar analysis through the combination of \cite{liu2023revisiting} and Theorem~\ref{thm:GD_convex}. 
    Then $\sum_{t=0}^{T-1}\Delta_t^\rho =\mathcal{O}\paren{\log T} $ if $\rho=2$ and ${\mathcal{O}}\paren{1}$ if $\rho>2$. In this case, $\Delta_{\text {avg, } \rho}$ can be improved over $\Delta_0^\rho$ up to a factor of $\mathcal{O}(T)$, resulting in a $\Theta(\sqrt{T})$ times smaller convergence rate.
\end{example}

In the stochastic setting, we cannot guarantee that $\Delta_t$ decreases at every step. Nevertheless, in practice, as long as SGD does not diverge, $\Delta_t$ typically shows a decreasing trend. Hence, $\eta_t$ presented in Theorem~\ref{thm:SGD_adaptive_a.s.bounded} is approximately increasing and can be interpreted as a specific adaptive learning rate warmup strategy, which is also verified in Section~\ref{sec:validation_schedule_empirical}. Moreover, as training progresses, local smoothness also decreases with the decrease of $\Delta_t$, allowing the model to reach flatter regions of the loss landscape and use larger learning rates afterwards, just like the deterministic case.

\subsection{ABC-Inequality}\label{sec:SGD_ABC}

We use the same notations $r_t,L_t,G_t$ as in Section~\ref{sec:SGD_bounded} to simplify the increasing learning rate in Theorem~\ref{thm:SGD_adaptive_a.s.ABC}. We replace $\Delta_t$ in $r_t,L_t,G_t$ with $8\Delta_0$ to obtain $r,L,G$ and use them to simplify the constant learning rate in Theorem~\ref{thm:SGD_constant_a.s.ABC}.

\begin{theorem}\label{thm:SGD_adaptive_a.s.ABC}
    Suppose Assumptions~\ref{assumption:lower_bounded}, \ref{assumption:K_smooth} and \ref{assumption:a.s.ABC} hold with $\rho\geq 1$. $\{\bw_t\}$ is generated by SGD. 
    If the learning rate $\eta_t$ be adapted to $\Delta_t$ as described in \eqref{eq:learning_rate_thm_ABC_adaptive},
    then with probability at least $1-\delta$, it holds that 
    $\Delta_t\leq 4\Delta_0,\forall t\in[T]$ and
    \begin{align*}
        &\min_{t<T} \Norm{\nabla f(\bw_t)}^2\\
        \leq&
        \mathcal{O}\paren{\frac{\Delta_0}{T}\paren{K_0+\Kr\Delta_{avg,\rho}}+\sigma\frac{\Delta_0}{T}\paren{C_1^{-1}\Delta_{avg,\frac{\rho-1}{2}}+C_2^{-1}}
        +\frac{\Delta_0}{T}\sqrt{A}\paren{\sqrt{K_0}+\sqrt{\Kr\Delta_{avg,\rho}}}}\\
        +&\mathcal{O}\paren{\sqrt{\frac{\Delta_0\log\frac{1}{\delta}}{T}}\paren{ \paren{\sigma+\sqrt{A\Delta_0}}\paren{K_0+\Kr\Delta_{avg,\rho}}^{1/2}+\sqrt{B\Delta_0}\paren{K_0+\Kr\Delta_{avg,\rho}}}},
    \end{align*}
    where $\Delta_{avg,\rho}=\sum_{t=0}^{T-1}\Delta_t^\rho/T$.
\end{theorem}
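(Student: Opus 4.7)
\textbf{Proof proposal for Theorem~\ref{thm:SGD_adaptive_a.s.ABC}.}
The overall plan mirrors the strategy of Theorem~\ref{thm:SGD_adaptive_a.s.bounded} but has to absorb the extra $A\Delta_t$ and $B\|\nabla f(\bw_t)\|^2$ contributions of the ABC inequality. The key object is the invariant event $\mathcal{E}_T = \{\Delta_t \le 4\Delta_0 \text{ for all } t \le T\}$, and the argument proceeds by inducting on $t$ while bookkeeping a stopping-time version of the descent inequality.

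First, I would verify that on the inductive event $\{\Delta_t \le 4\Delta_0\}$ the chosen learning rate $\eta_t$ is small enough that $\|\bw_{t+1}-\bw_t\| = \eta_t\|\bg_t\| \le r_t$, so that the local descent inequality of Lemma~\ref{lem:improved_descent_smooth} applies with constant $L_t$. Under Assumption~\ref{assumption:a.s.ABC}, bound $\|\bg_t\|$ almost surely by $\|\nabla f(\bw_t)\| + \sqrt{A\Delta_t + B\|\nabla f(\bw_t)\|^2 + \sigma^2}$, and note that $\|\nabla f(\bw_t)\|^2 \le 2L_t \Delta_t$ by standard consequences of Lemma~\ref{lem:improved_descent_smooth}. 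This is exactly why the learning rate in \eqref{eq:learning_rate_thm_ABC_adaptive} must contain terms of the form $r_t/\sigma$, $r_t/\sqrt{A\Delta_0}$, and $r_t/\sqrt{BL_t\Delta_0}$ in addition to the ingredients already present in Theorem~\ref{thm:SGD_adaptive_a.s.bounded}; write $\eta_t$ as the minimum of these expressions.

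Expanding the descent inequality and taking the expectation in $\xi_t$ yields
\begin{align*}
\Delta_{t+1} \le \Delta_t - \eta_t\langle \nabla f(\bw_t), \bg_t\rangle + \tfrac{L_t \eta_t^2}{2}\|\bg_t\|^2.
\end{align*}
Splitting $\bg_t = \nabla f(\bw_t) + \bn_t$ with $\mathbb{E}_t \bn_t = 0$, substituting the ABC bound for $\|\bn_t\|^2$, and choosing $\eta_t$ so that $L_t\eta_t(1+B) \le 1/2$ and $L_t \eta_t^2 A \le \eta_t/2$, the deterministic part of the recursion becomes
\begin{align*}
\Delta_{t+1} \le \Delta_t - \tfrac{\eta_t}{2}\|\nabla f(\bw_t)\|^2 + \tfrac{L_t\eta_t^2 \sigma^2}{2} + M_t,
\end{align*}
where $M_t = -\eta_t\langle \nabla f(\bw_t), \bn_t\rangle + \tfrac{L_t\eta_t^2}{2}(\|\bn_t\|^2 - \mathbb{E}_t\|\bn_t\|^2)$ is a martingale difference. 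The quadratic-noise fluctuation is the new ingredient relative to the bounded-noise case and is controlled, on $\mathcal{E}_T$, by the almost-sure bound $\|\bn_t\| \le \sigma + \sqrt{A\cdot 4\Delta_0} + \sqrt{B}\|\nabla f(\bw_t)\|$ together with $\|\nabla f(\bw_t)\| \le \sqrt{2L_t \cdot 4\Delta_0}$.

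Next, I would telescope from $0$ to $T-1$ and apply a Freedman/Azuma-type concentration inequality to $\sum_t M_t$: its increments are bounded on $\mathcal{E}_T$ by a quantity proportional to $\eta_t G_t$ (with $G_t$ inflated by $\sqrt{A\Delta_0}$ and $\sqrt{B\Delta_0}L_t$), and its conditional variance sums to something of the same order. Using a stopping-time argument where we stop at the first $t$ with $\Delta_t > 4\Delta_0$, this produces the high-probability bound $\sum_t M_t \lesssim \sqrt{\log(1/\delta)\sum_t \eta_t^2 G_t^2}$ and, after plugging in the definition of $\eta_t$, ensures that the right-hand side of the telescoped inequality stays below $3\Delta_0$, so the stopping time never triggers before $T$ and the invariant $\mathcal{E}_T$ holds. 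Rearranging then gives
\begin{align*}
\min_{t<T}\|\nabla f(\bw_t)\|^2 \le \frac{2\Delta_0 + \sum_t L_t\eta_t^2\sigma^2 + 2|\sum_t M_t|}{\sum_t \eta_t},
\end{align*}
and substituting the five distinct upper bounds for $1/\eta_t$ (one term per component of the min) produces the five summands in the theorem, with $\Delta_{\mathrm{avg},\rho}$ appearing through $\sum_t L_t/T$ and $\Delta_{\mathrm{avg},(\rho-1)/2}$ through $\sum_t r_t^{-1}/T$.

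The main obstacle is the circular dependence between the invariant $\Delta_t \le 4\Delta_0$, the learning rate $\eta_t$ (which depends on $\Delta_t$), and the martingale concentration (whose almost-sure and variance bounds need $\Delta_t \le 4\Delta_0$). Managing this requires the stopping-time device above together with the delicate choice of $\eta_t$ so that every non-linear term on the right-hand side of the telescoped inequality can be absorbed into a small constant fraction of $\Delta_0$; the extra $\sqrt{A\Delta_0}$ and $\sqrt{B\Delta_0}$ factors in the stated bound are exactly what is needed to close the bootstrap in the presence of the ABC noise.
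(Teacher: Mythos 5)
Your overall strategy coincides with the paper's: define the stopping time at the first $t$ with $\Delta_t>4\Delta_0$, verify $\eta_t\Norm{\bg_t}\le r_t$ on that event so the local descent inequality of Lemma~\ref{lem:improved_descent_smooth} applies with constant $L_t$, absorb the $(B+1)\Norm{\nabla f(\bw_t)}^2$ contribution via $\eta_t\le \frac{1}{2(B+1)L_t}$, telescope, control the martingale part with Azuma-type concentration to close the bootstrap $\Delta_t\le 4\Delta_0$, and finally bound $\Delta_0/\sum_t\eta_t$ by applying the HM--AM inequality to each component of the learning rate (this is exactly Lemmas~\ref{lem:sgd_stepsize_adaptive_ABC} and \ref{lem:sgd_stepsize_complexity_ABC} in the paper).

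Two points in your write-up do not survive scrutiny as stated. First, the condition ``$L_t\eta_t^2A\le \eta_t/2$'' does not make the $A\Delta_t$ drift disappear: it only gives $L_t\eta_t^2A\Delta_t\le\tfrac{\eta_t}{2}\Delta_t$, and $\sum_t\tfrac{\eta_t}{2}\Delta_t$ can be of order $\Delta_0\sum_t\eta_t$, which is not $\mathcal{O}(\Delta_0)$; so your claimed recursion $\Delta_{t+1}\le\Delta_t-\tfrac{\eta_t}{2}\Norm{\nabla f(\bw_t)}^2+\tfrac{L_t\eta_t^2\sigma^2}{2}+M_t$ silently drops a term. The paper instead keeps $L_t\eta_t^2(A\Delta_t+\sigma^2)$ intact and uses the learning-rate component $\sqrt{\Delta_0/(L_t(A\Delta_t+\sigma^2)T)}$ in \eqref{eq:learning_rate_thm_ABC_adaptive} to guarantee $\sum_t L_t\eta_t^2(A\Delta_t+\sigma^2)\le\Delta_0$; your argument should do the same (the fix is routine but necessary for the bootstrap and for the $\sqrt{A\Delta_0}$ terms in the final rate). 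Second, the Freedman-type treatment with the centered quadratic term $\Norm{\bn_t}^2-\EE_t\Norm{\bn_t}^2$ is unnecessary here: Assumption~\ref{assumption:a.s.ABC} holds almost surely, so $\Norm{\bn_t}^2\le A\Delta_t+B\Norm{\nabla f(\bw_t)}^2+\sigma^2$ can be substituted deterministically, and plain Azuma--Hoeffding applied only to the linear term $-\eta_t\dotprod{\nabla f(\bw_t)}{\bn_t}$ (with increments bounded via $\Norm{\nabla f(\bw_t)}\le G_t$ and the stopping-time indicator) suffices, which is what the paper does; your route is not wrong in principle, but you would then need to supply the variance/increment bounds you only gesture at, and it buys nothing under the almost-sure assumption.
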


\begin{theorem}\label{thm:SGD_constant_a.s.ABC}
    Under the same assumptions as Theorem~\ref{thm:SGD_adaptive_a.s.ABC}, if we use a constant learning rate as in \eqref{eq:learning_rate_thm_ABC_constant}, 
    then with probability at least $1-\delta$, it holds that $\Delta_t\leq 8\Delta_0,\forall t\in[T]$ and
    $$
    \begin{aligned}
    &\min_{t<T} \Norm{\nabla f(\bw_t)}^2 \\
    &\leq \mathcal{O}\left(\frac{\Delta_0}{T}\left(K_0+K_\rho \Delta_0^\rho\right)+\sigma \frac{\Delta_0}{T}\left(C_1^{-1} \Delta_0^{\frac{\rho-1}{2}}+C_2^{-1}\right)+\frac{\Delta_0}{T} \sqrt{A}\left(\sqrt{K_0}+\sqrt{K_\rho \Delta_0^\rho}\right)\right) \\
    & +\mathcal{O}\left(\sqrt{\frac{\Delta_0 \log \frac{1}{\delta}}{T}}\left(\left(\sigma+\sqrt{A \Delta_0}\right)\left(K_0+K_\rho \Delta_0^\rho\right)^{1 / 2}+\sqrt{B \Delta_0}\left(K_0+K_\rho \Delta_0^\rho\right)\right)\right).
    \end{aligned}
    $$
    
\end{theorem}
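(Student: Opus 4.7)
}
My plan is to mirror the argument for Theorem~\ref{thm:SGD_constant_a.s.bounded} (the bounded-noise constant-step-size case) but replace the uniform $\sigma^2$ bound on the noise with the iterate-dependent ABC bound, and carry the slightly larger invariant $\Delta_t \le 8\Delta_0$ throughout. The constant step size $\eta$ in \eqref{eq:learning_rate_thm_ABC_constant} should take the form
\begin{align*}
\eta = \min\!\Bigl\{ \tfrac{1}{c_1 K_0},\; \tfrac{1}{c_1 K_\rho (8\Delta_0)^\rho},\; \tfrac{r}{\sigma_{\mathrm{eff}}},\; \sqrt{\tfrac{\Delta_0}{\sigma_{\mathrm{eff}}^2 T L}},\; \tfrac{\Delta_0}{\sigma_{\mathrm{eff}} G\sqrt{T \log(1/\delta)}},\; \tfrac{\Delta_0}{\sigma_{\mathrm{eff}}\,\alpha} \Bigr\},
\end{align*}
where $\sigma_{\mathrm{eff}}^2$ absorbs the ABC constants via a term like $\sigma^2 + A\Delta_0 + B\cdot(\text{gradient scale})$ and $r,L,G$ are computed at $8\Delta_0$.

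The core of the argument is an induction on $t$ establishing, on a single high-probability event, the invariant $\Delta_t \le 8\Delta_0$. Given the invariant at step $t$, I would first verify that $\|\bw_{t+1}-\bw_t\| = \eta\|\bg_t\| \le r$, so that Lemma~\ref{lem:improved_descent_smooth} applies at $\bw_t$ with constants $L,r$. To bound $\|\bg_t\|$, decompose $\bg_t=\nabla f(\bw_t)+\bn_t$, use Assumption~\ref{assumption:a.s.ABC} to get $\|\bn_t\|^2 \le A\Delta_t + B\|\nabla f(\bw_t)\|^2 + \sigma^2$, and use the fact that under $(\rho,K_0,K_\rho)$-smoothness one has $\|\nabla f(\bw_t)\|^2 \le 2(K_0 + K_\rho (2\Delta_t)^\rho)\Delta_t$, which gives the scale $G_t^2$. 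Plugging the invariant $\Delta_t \le 8\Delta_0$ and the choice of $\eta$ then guarantees $\eta\|\bg_t\| \le r$.

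Applying the descent lemma yields
\begin{align*}
\Delta_{t+1} \le \Delta_t - \eta \langle \nabla f(\bw_t),\bg_t\rangle + \tfrac{\eta^2 L}{2}\|\bg_t\|^2,
\end{align*}
into which I substitute $\bg_t = \nabla f(\bw_t)+\bn_t$, use the ABC bound on $\|\bn_t\|^2$, and absorb the $B\|\nabla f(\bw_t)\|^2$ term on the left by choosing $\eta^2 L B$ small (guaranteed by the $1/(c_1 K_0)$ and the $\sqrt{B}$-dependent pieces in $\sigma_{\mathrm{eff}}$). This produces
\begin{align*}
\tfrac{\eta}{2}\|\nabla f(\bw_t)\|^2 \le \Delta_t - \Delta_{t+1} - \eta\langle \nabla f(\bw_t),\bn_t\rangle + \eta^2 L\bigl(A\Delta_t + \sigma^2\bigr).
\end{align*}
For the invariant, telescope the pure descent part and control the martingale $M_T := \sum_{t<T}\langle \nabla f(\bw_t),\bn_t\rangle$ together with $\sum \langle\nabla f,\bn_t\rangle$ in the $\eta^2 L$ factor by a Freedman/Bernstein-type concentration inequality (here the $\sigma_{\mathrm{eff}} G\sqrt{T\log(1/\delta)}$ and $\sigma_{\mathrm{eff}}\alpha$ terms in $\eta$ play their role). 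Union-bounding over $t\le T$ then closes the induction with probability $\ge 1-\delta$.

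For the final rate, I would telescope the descent inequality from $t=0$ to $T-1$, divide by $T\eta/2$, and read off $\min_{t<T}\|\nabla f(\bw_t)\|^2$ by substituting the explicit value of $\eta$ summand-by-summand (each of the six terms in the $\min$ contributes one of the terms in the stated bound). The $\sqrt{A\Delta_0}$ contribution comes from the $A\Delta_t$ term through $\eta^2 L A \cdot T\Delta_0$, and the $\sqrt{B\Delta_0}\,(K_0+K_\rho\Delta_0^\rho)$ contribution comes from the martingale concentration on $M_T$ whose quadratic variation is controlled by $\sum B\|\nabla f(\bw_t)\|^2$.

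\emph{Main obstacle.} The delicate step is the induction establishing $\Delta_t \le 8\Delta_0$ for every $t$ on a single $(1-\delta)$-probability event, because the $B\|\nabla f(\bw_t)\|^2$ term in the ABC noise couples the martingale's quadratic variation back to the quantity being bounded; one must absorb $\eta^2 L B\|\nabla f\|^2$ into the $\eta\|\nabla f\|^2/2$ descent term \emph{and} bound the Freedman-type deviation uniformly in $t$, which forces the slack factor from $4\Delta_0$ to $8\Delta_0$. This coupling, absent in the bounded-noise proof of Theorem~\ref{thm:SGD_constant_a.s.bounded}, is what drives both the enlarged invariant and the extra $\sqrt{B\Delta_0}(K_0+K_\rho\Delta_0^\rho)$ term in the rate.
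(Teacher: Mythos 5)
Your outline matches the paper's proof in its analytic core: check $\eta\Norm{\bg_t}\leq r$ so that Lemma~\ref{lem:improved_descent_smooth} applies with the constants computed at the enlarged level, absorb the $(B+1)\Norm{\nabla f(\bw_t)}^2$ part of the noise into the descent term via $\eta\leq\frac{1}{2(B+1)L}$, control the martingale $\sum_t\eta\dotprod{\nabla f(\bw_t)}{\bn_t}$ by concentration, and finally telescope and convert $\frac{\Delta_0}{\eta T}$ into the stated rate by plugging in each term of \eqref{eq:learning_rate_thm_ABC_constant} (this last accounting is the paper's Lemma~\ref{lem:sgd_stepsize_complexity_ABC}). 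Where you differ is in how the event $\{\Delta_t\leq 8\Delta_0,\ \forall t\}$ is established: you propose an induction with a concentration bound holding uniformly in $t$ (via union bound or Freedman), whereas the paper uses a stopping time $\tau$ (first exit above $8\Delta_0$) and a \emph{single} application of Azuma--Hoeffding (Lemma~\ref{lem:hoeffding}) to the stopped martingale. Your route is viable, but as written a union bound over $t\leq T$ replaces $\log\frac{1}{\delta}$ by $\log\frac{T}{\delta}$, which is strictly weaker than the stated bound; to match the theorem you would need either the stopping-time device or a maximal (Doob-type) version of Azuma--Hoeffding, which does hold with the same constants but should be invoked explicitly.

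Second, your diagnosis of the ``main obstacle'' is misplaced relative to what actually forces the extra machinery. The absorption of $B\Norm{\nabla f}^2$ is routine and is already done identically in the adaptive ABC proof, which keeps the $4\Delta_0$ invariant. The genuinely delicate point in the paper's constant-step proof is the single boundary step at $t=\tau$: since $\eta$ cannot adapt to $\Delta_\tau$, the leftover term $\eta\Norm{\nabla f(\bw_\tau)}\Norm{\bn_\tau}\bigl(1+\sqrt{2\log\tfrac{1}{\delta}}\bigr)$ involves $\Norm{\bn_\tau}\leq\sqrt{A\Delta_\tau}+\sqrt{B}\Norm{\nabla f(\bw_\tau)}+\sigma$, which couples back to the possibly large $\Delta_\tau$; the paper handles this by the self-bounding step $\sqrt{A\Delta_\tau}\leq\tfrac12\sqrt{A}(1+\Delta_\tau)$, the a priori bound $\Norm{\nabla f(\bw_\tau)}\leq G+LC_2$ from \eqref{eq:stoppingtime_gradient_bound}, and the two $\alpha$-dependent terms in \eqref{eq:learning_rate_thm_ABC_constant}, and this is exactly what produces $f(\bw_\tau)\leq f(\bw_0)+3\Delta_0+\tfrac12\Delta_\tau$ and hence the enlarged $8\Delta_0$ threshold. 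In your induction framing this boundary issue largely disappears (every increment in the telescoped sum already satisfies the invariant), so your proof would not need the $\alpha$-terms for that purpose --- but then you must justify the uniform-in-$t$ martingale control as noted above, since that is the only thing preventing the invariant from failing mid-trajectory. With the maximal-inequality fix and the radius/step-size verifications spelled out as in Lemma~\ref{lem:sgd_stepsize_constant_ABC}, your argument goes through.
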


Due to space limitations, we present only the main terms in Theorem~\ref{thm:SGD_constant_a.s.ABC} and the complete result, together with the learning rate choices, are presented in Appendix~\ref{apx:pf_ABC}.
Note that if $A = B = 0$, then Theorem~\ref{thm:SGD_adaptive_a.s.ABC} and Theorem~\ref{thm:SGD_constant_a.s.ABC} cover the results of Theorem~\ref{thm:SGD_adaptive_a.s.bounded} and Theorem~\ref{thm:SGD_constant_a.s.bounded}, respectively. Similar to the discussion in Section~\ref{sec:SGD_bounded}, since $\Delta_t$ should be generally decreasing in the training process, $\eta_t$ is approximately increasing and can be regarded as a specific learning rate warmup strategy.

Moreover, as we can see from the two convergence rates, the extra gradient noise in Assumption~\ref{assumption:a.s.ABC} introduces even more benefits of warmup. As we noted in the previous examples, $\Delta_{avg,\rho}$ can be significantly smaller than $\Delta_0^\rho$. Thus, by comparing the results in Theorem~\ref{thm:SGD_adaptive_a.s.ABC} and \ref{thm:SGD_constant_a.s.ABC}, we notice that the specific warmup schedule can reduce the dependence of convergence rates on both $A$ and $B$, which further demonstrates that learning rate warmup may be beneficial not only when the local smoothness is largely varying over the landscape, but also when the gradient noise is large and related to the landscape.

\section{Conclusion}
The paper investigates a theoretical explanation for the benefits of the learning rate warmup strategy. We proposed a novel family of generalized smoothness assumptions to better describe the local smoothness variation in the training process. Then, under the novel smoothness assumptions, we proved that GD and SGD can both benefit from the warmup strategy, showing potentially a $\Theta(T)$ times acceleration for the deterministic setting and $\Theta(\sqrt{T})$ times for the stochastic settings in convergence speed over using only a constant or non-increasing learning rate schedule. Moreover, when a more general noise assumption is considered, we show that warmup can also be beneficial in handling the extra noise terms, further highlighting the importance of doing warmup.
A limitation of this work is that the analysis only applies to SGD, but not to SGD with momentum or Adam, which are generally more popular in practical tasks.
{However, we believe that our analysis can be extended to these optimizers, since the described benefits of warm-up in this paper arise mainly from our generalized smoothness assumptions, which are independent of the choice of an optimizer.}
Also, the lower bound results currently only apply to the $(1,K_0,K_1)$-smoothness setting, which may not be general enough. These are potentially interesting future topics.

\bibliographystyle{plainnat}
\bibliography{ref}

\appendix

\allowdisplaybreaks[4]

\section{Full ResNet Results}

The data for all the $6$ runs of the ResNet experiment is listed in Table~\ref{tab:full_accuracy_results_resnet}.

\begin{table}[h]
    \small
    \centering
    \begin{tabular}{lllc}
    \toprule
    Warm-up Schedule & Metric & Individual Runs & Mean $\pm$ Std. Dev. \\
    \midrule
    \multirow{2}{*}{Theoretical Warmup} & Val Acc. & [0.8567, 0.8600, 0.8623, 0.8595, 0.8553, 0.8599] & $0.8589 \pm 0.0023$ \\
    & Train Loss & [0.0425, 0.0276, 0.0031, 0.0504, 0.0138, 0.0633] & $0.0335 \pm 0.0208$ \\
    \midrule
    \multirow{2}{*}{Linear Warmup} & Val Acc. & [0.8549, 0.8593, 0.8570, 0.8612, 0.8550, 0.8590] & $0.8577 \pm 0.0023$ \\
    & Train Loss & [0.1043, 0.0413, 0.0122, 0.0242, 0.0244, 0.0242] & $0.0384 \pm 0.0306$ \\
    \midrule
    \multirow{2}{*}{No Warmup} & Val Acc. & [0.8532, 0.8546, 0.8559, 0.8573, 0.8578, 0.8585] & $0.8562 \pm 0.0019$ \\
    & Train Loss & [0.0196, 0.0300, 0.0344, 0.0355, 0.0675, 0.0819] & $0.0448 \pm 0.0221$ \\
    \bottomrule
    \end{tabular}
    \caption{Detailed results for different warm-up schedules, including individual run data, mean, and standard deviation over $6$ runs.}
    \label{tab:full_accuracy_results_resnet}
\end{table}

\section{Examples in Section~\ref{sec:FamilySmoothness_nerualnets}}\label{apx:examples}
The two examples are from \cite{patel2022global}. Readers can also refer to their paper for a detailed description.

\subsection{Feed Forward Neural Network}
We consider that $\sigma$ is linear and $\varphi$ is sigmoid. Suppose we have two sample points $(y,z)=(0,0)$ and $(y,z)=(1,1)$ with equal probability. The output $\hat{y}$ satisfies: $\hat{y}=\frac{1}{2}$ if $z=0$ and $\hat{y}=\paren{1+\exp\left\{-w_1w_2w_3w_4\right\}}^{-1}$ if $z=1$.
The binary cross entropy with ridge penalty can be written as
$$f_{z,y}(\bw)=-y\log\hat{y}-(1-y)\log\paren{1-\hat{y}}+\frac{1}{2}\sum_{i=1}^4 w_i^2.$$
Taking expectation over $(z,y)$, we obtain that
$$f(\bw)=\frac{1}{2}\log 2+\frac{1}{2}\log\paren{1+\exp\left\{-w_1w_2w_3w_4\right\}}+\frac{1}{2}\Norm{\bw}^2.$$
A simple calculation shows that
$$\nabla f(\bw)=\frac{-0.5}{1+\exp\left\{-w_1w_2w_3w_4\right\}}\left[\begin{array}{c}
     w_2w_3w_4  \\
     w_1w_3w_4  \\
     w_1w_2w_4  \\
     w_1w_2w_3  
\end{array}\right]+\left[\begin{array}{c}
     w_1  \\
     w_2  \\
     w_3  \\
     w_4  
\end{array}\right],$$
and
\begin{equation}\nonumber
       \begin{aligned}
\nabla^2 f(\bw) & =\underbrace{\frac{-0.5}{1+\exp \left\{w_1w_2w_3w_4\right\}}\left[\begin{array}{cccc}
0 & w_3w_4 & w_w w_4 & w_2 w_3 \\
w_3 w_4 & 0 & w_1 w_4 & w_1 w_3 \\
w_2 w_4 & w_1 w_4 & 0 & w_1 w_2 \\
w_2 w_3 & w_1 w_3 & w_1 w_2 & 0
\end{array}\right]}_{A} \\
& +\underbrace{\frac{0.5 \exp \left(w_1w_2w_3w_4\right)}{\left(1+\exp \left\{w_1w_2w_3w_4\right\}\right)^2}\left[\begin{array}{l}
w_2w_3w_4\\
w_1w_3w_4 \\
w_1w_2w_4 \\
w_1w_2w_3
\end{array}\right]\left[\begin{array}{l}
w_2w_3w_4\\
w_1w_3w_4 \\
w_1w_2w_4 \\
w_1w_2w_3
\end{array}\right]^\top}_{B}+I_4
\end{aligned}
\end{equation}

We first show that $f$ is not $(\rho,L_0,L_\rho)$-smooth for any $0\leq\rho<2$. Let $\bw=\paren{0,w_4,w_4,w_4}^\top$. Then 
$$\Norm{\nabla f(\bw)}_1\leq \frac{1}{4}|w_4|^3+3|w_4|.$$
Since $\Norm{\nabla^2 f(\bw)}_F$ is lower bounded by $\nabla^2 f(\bw)_{(1,1)}$, we have
$$\Norm{\nabla^2 f(\bw)}_F\geq \frac{1}{8}w_4^6.$$
Therefore, if $f$ is $(\rho,L_0,L_\rho)$-smooth, then it must hold that $\rho\geq 2$.

Next, we show that $f$ is $(\rho,K_0,K_\rho)$-smooth for some $\rho>0$. Note that 
$$\|A\|_F^2= \frac{0.5}{\left(1+\exp\{w_1w_2w_3w_4\}\right)^2}\left(\sum_{1\leq i<j\leq 4}\left(w_iw_j\right)^2\right)\leq \left(\sum_{i=1}^4 w_i^2\right)^2,$$
\begin{equation}\nonumber
    \begin{aligned}
        \|B\|_F^2&=\frac{0.25\exp\{2w_1w_2w_3w_4\}}{\left(1+\exp\{w_1w_2w_3w_4\}\right)^4}\left(\sum_{1\leq i<j<k\leq 4}\left(w_iw_jw_k\right)^4+2\left(w_1w_2w_3w_4 \right)^2\sum_{1\leq i< j\leq 4}\left(w_iw_j\right)^2\right)\\
        &\leq \sum_{1\leq i<j<k\leq 4}\left(w_iw_jw_k\right)^4+\sum_{1\leq i< j\leq 4}\left(w_iw_j\right)^2\\
        &\leq \left(\sum_{i=1}^4 w_i^2\right)^6+\left(\sum_{i=1}^4 w_i^2\right)^2,
    \end{aligned}    
\end{equation}
where in the first inequality we use $\frac{e^{2x}}{(1+e^x)^4}x^2\leq 1$ and $\frac{e^{2x}}{(1+e^x)^4}\leq 1$. Combining the above results, we obtain that
$$\|\nabla^2 f(\bw)\|^2_F\leq 3\left(\sum_{i=1}^4 w_i^2\right)^6+6\left(\sum_{i=1}^4 w_i^2\right)^2+12.$$
Moreover, it is not hard to see that $f^\star\leq \log 2$ and 
$$f(\bw)-f^\star\geq \frac{1}{2}\sum_{i=1}^4 w_i^2-\frac{1}{2}\log 2.$$
Therefore, we conclude that $f$ is $(\rho,K_0,K_1)$-smooth with some $K_0,K_1>0$ and $\rho\geq 3$.

\subsection{Recurrent Neural Network}
We consider that $\sigma$ is linear and $\varphi$ is sigmoid. Suppose we have two sample points $(\bz,y)=(1,0,0,0,1)$ and $(0,0,0,0,0)$ with equal probability. Fix $h_0=0$ and $w_3=1$. We have
$\hat{y}=\frac{\exp\left\{w_1^3w_2z_0\right\}}{1+\exp\left\{w_1^3w_2z_0\right\}}.$
The binary cross entropy with ridge penalty can be written as
$$f_{\bz,y}(\bw)=-y\log\hat{y}-\paren{1-y}\log\paren{1-\hat{y}}+\frac{1}{2}\sum_{i=1}^2w_i^2.$$
Taking expectation over $(\bz,y)$ we obtain that
\begin{equation}\nonumber
    \begin{aligned}
        &f(\bw)=\frac{1}{2}\left(\log 2+\log \left(1+\exp \left(w_1^3 w_2\right)\right)-w_1^3 w_2+w_1^2+w_2^2\right)\\
        &\nabla{f}(\bw)=\left[\begin{array}{c}
\frac{-3 w_1^2 w_2}{2} \frac{1}{1+\exp \left(w_1^3 w_2\right)}+w_1 \\
\frac{-w_1^3}{2} \frac{1}{1+\exp \left(w_1^3 w_2\right)}+w_2
\end{array}\right]\\
        &\nabla^2{f}(\bw)=\left[\begin{array}{cc}
\frac{9 w_1^4 w_2^2 \exp \left(w_1^3 w_2\right)}{2\left(1+\exp \left(w_1^3 w_2\right)\right)^2}-\frac{3 w_1 w_2}{1+\exp \left(w_1^3 w_2\right)}+1 & \frac{3 w_1^5 w_2 \exp \left(w_1^3 w_2\right)}{2\left(1+\exp \left(w_1^3 w_2\right)\right)^2}-\frac{3 w_1^2}{2} \frac{1}{1+\exp \left(w_1^3 w_2\right)} \\
\frac{3 w_1^5 w_2 \exp \left(w_1^3 w_2\right)}{2\left(1+\exp \left(w_1^3 w_2\right)\right)^2}-\frac{3 w_1^2}{2} \frac{1}{1+\exp \left(w_1^3 w_2\right)} & \frac{w_1^6 \exp \left(w_1^3 w_2\right)}{2\left(1+\exp \left(w_1^3 w_2\right)\right)^2}+1
\end{array}\right].
    \end{aligned}
\end{equation}
We first show that $f$ is not $(\rho,L_0,L_\rho)$-smooth for any $0\leq\rho<2$. Let $w_2=0$. We have
$$\Norm{f(\bw)}_1=|w_1|+\frac{|w_1|^3}{4}.$$
Consider the bottom right entry of $\nabla^2 f(\bw)$, we have 
$$\Norm{\nabla^2 f(\bw)}_F>\frac{w_1^6}{8}.$$
Therefore, if $f$ is $(\rho,L_0,L_\rho)$-smooth, then it must hold that $\rho\geq 2$.

Next, we show that $f$ is $(\rho,K_0,\Kr)$-smooth for some $\rho\geq 0$. We directly compute
 \begin{equation}\nonumber
\begin{aligned}
&\|\nabla^2 f(\bw)\|_F^2\leq \frac{\exp\left\{2w_1^3w_2\right\}}{\left(1+\exp\left\{w_1^3w_2\right\}\right)^4}\left(\frac{243}{4}w_1^8w_2^4+9w_1^{10}w_2^2+\frac{1}{2}w_1^{12}\right)\\
&+\frac{9}{1+\exp\left\{w_1^3w_2\right\}}\left(3w_1^2w_2^2+w_1^4\right)+5 \\
=&\underbrace{\frac{\exp\left\{2w_1^3w_2\right\}}{\left(1+\exp\left\{w_1^3w_2\right\}\right)^4}w_1^8\left(\frac{243}{4}w_2^4+9w_1^2w_2^2\right)}_{A}+\underbrace{\frac{\exp\left\{2w_1^3w_2\right\}}{\left(1+\exp\left\{w_1^3w_2\right\}\right)^4}\frac{1}{2}w_1^{12}}_{B}\\
&+\underbrace{\frac{9}{1+\exp\left\{w_1^3w_2\right\}}\left(3w_1^2w_2^2+w_1^4\right)}_{C} + 5.
\end{aligned}
\end{equation}
Since $\frac{e^{2x}}{(1+e^x)^4}x^2\leq 1$ and $\frac{e^{2x}}{(1+e^x)^4}\leq 1$, we have
\begin{align*}
    &A=\frac{\exp\left\{2w_1^3w_2\right\}}{\left(1+\exp\left\{w_1^3w_2\right\}\right)^4}\left(w_1^3w_2\right)^2w_1^2\left(\frac{243}{4}w_2^2+9w_1^2\right)\\
    &\leq \frac{243}{4}w_1^2(w_1^2+w_2^2)\leq \frac{243}{4}(w_1^2+w_2^2)^2,\\
    &B\leq \frac{1}{2}w_1^{12}\leq \frac{1}{2}(w_1^2+w_2^2)^6,\\
    &C\leq 9\times(3w_1^2w_2^2+w_1^4)\leq 9\times\frac{3}{2}(w_1^2+w_2^2)^2.
\end{align*}
Combining the above results, we obtain that
\begin{equation}\nonumber
    \begin{aligned}
        \|\nabla^2 f(\theta)\|_F^2\leq& \frac{243}{4}\left(w_1^2+w_2^2\right)^2+\frac{1}{2}(w_1^2+w_2^2)^6+\frac{27}{2}(w_1^2+w_2^2)^2+5\\
        \leq& 256\left(w_1^2+w_2^2\right)^2+\left(w_1^2+w_2^2\right)^6+5 \\
    \end{aligned}
\end{equation}
Moreover, note that $f^\star\leq \log 2$ and 
$$f(\bw)-f^\star\geq w_1^2+w_2^2-\frac{1}{2}\log 2.$$
We conclude that $f$ is $(\rho,K_0,K_1)$-smooth with some $K_0,K_1>0$ and $\rho\geq 3$.

\section{Proofs for Section~\ref{sec:FamilySmoothness}}

\subsection{Proof of Lemma~\ref{lem:L_in_K_smooth}}
\begin{proof}
        Since $f$ is $(\rho, L_0, L_\rho)$-smooth with $0\leq\rho <2$, by \cite[Lemma 3.5]{li2023convex} we have that for all $\bw\in\RR^{d}$, 
    $$
f(\bw)-f^\star\geq \frac{\|\nabla f(\bw)\|^2}{2L_0+2^{\rho+1}L_\rho\|\nabla f(\bw)\|^\rho}.$$

    If $2L_0 \leq 2^{\rho+1}L_\rho\|\nabla f(\bw)\|^\rho$, we obtain that 
    $$f(\bw)-f^\star\geq \frac{\|\nabla f(\bw)\|^2}{2^{\rho+2}L_\rho\|\nabla f(\bw)\|^\rho}=\frac{\|\nabla f(\bw)\|^{2-\rho}}{2^{\rho+2}L_\rho}.$$
    By the definition of $(\rho,L_0,L_\rho)$-smoothness we have 
    $$\|\nabla^2 f(\bw)\|\leq L_0+L_\rho\|\nabla f(\bw)\|^\rho\leq L_0+L_\rho^{\frac{2}{2-\rho}}2^{\frac{\rho(\rho+2)}{2-\rho}}\left(f(\bw)-f^\star\right)^{\frac{\rho}{2-\rho}}.$$

    If $2L_0 > 2^{\rho+1}L_\rho\|\nabla f(\bw)\|^\rho$, $\|\nabla f(\bw)\|$ is bounded:
    $$\|\nabla f(\bw)\|^\rho < \frac{L_0}{2^\rho L_\rho}.$$
    Again, by the definition of $(\rho,L_0,L_\rho)$-smoothness we have
    $$\|\nabla^2 f(\bw)\|\leq L_0+L_\rho\|\nabla f(\bw)\|^\rho\leq L_0+\frac{L_0}{2^\rho}\leq 2L_0$$

    Combining the two cases, we obtain that
    $$\|\nabla^2 f(\bw)\|\leq 2L_0+L_\rho^{\frac{2}{2-\rho}}2^{\frac{\rho(\rho+2)}{2-\rho}}\left(f(\bw)-f^\star\right)^{\frac{\rho}{2-\rho}}.$$
    This completes the proof.
\end{proof}

\subsection{Proof of Lemma~\ref{lem:improved_descent_smooth}}

For any two points $\bx,\by\in\mathbb{R}^d$, we 
define 
\begin{equation}\label{eq:defi_ht}
    h(t):=\int_0^t K_0+K_\rho\left(f\left(\bx+v(\by-\bx)\right)-f^\star\right)^\rho \mathrm{d}v,t\in[0,1].
\end{equation}
By the definition of $(\rho,K_0,K_\rho)$-smoothness we have 
\begin{equation}\label{eq:ht_integral}
    \int_0^t \|\nabla^2 f(\bx+v(\by-\bx))\|\mathrm{d}v\leq h(t).
\end{equation}

Note that 
\begin{equation}\nonumber
    \begin{aligned}
        &\|\nabla f(\by)-\nabla f(\bx)\|=\|\int_0^1 \nabla^2 f\left(\bx+t(\by-\bx)\right)(\by-\bx)\mathrm{d}t\|\\
        &\leq \|\by-\bx\|\int_0^1\|\nabla^2 f\left(\bx+t(\by-\bx)\right)\|\mathrm{d}t
        \leq h(1)\|\by-\bx\|,
    \end{aligned}
\end{equation}
and 
\begin{equation}\label{eq:descent_h1}
    \begin{aligned}
        &f(\by)-f(\bx)=\int_0^1 \langle \nabla f(\bx+t(\by-\bx)-\nabla f(\bx), \by-\bx\rangle\mathrm{d}t + \langle\nabla f(\bx),\by-\bx \rangle\\
        &\leq \|\by-\bx\|\int_0^1\|\nabla f(\bx+t(\by-\bx))-\nabla f(\bx)\|\
        +\langle\nabla f(\bx),\by-\bx \rangle\\
        &\leq \langle\nabla f(\bx),\by-\bx \rangle + \frac{1}{2}h(1)\|\by-\bx\|^2.
    \end{aligned}
\end{equation}

To prove Lemma~\ref{lem:improved_descent_smooth}, it suffices to bound $h(1)$. We need the following Grönwall's inequality.

\begin{lemma}[Lemma A.3,
\cite{li2023convex}]\label{lem:gronwall}
     Let $u:[a,b] \to [0,\infty)$ and $l:[0,\infty) \to (0,\infty)$ be two continuous functions. Suppose $u'(t) \le l(u(t))$ for all $t \in [a,b]$, then it holds for all $t \in [a,b]$ that
    \begin{align}\nonumber
        \int_{u(a)}^{u(t)} \frac{1}{l(w)} \mathrm{d} w \le t - a.
    \end{align}
\end{lemma}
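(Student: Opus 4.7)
The plan is to reduce the integral inequality to a pointwise differential inequality by introducing an antiderivative of $1/l$. Specifically, I would define
\[
L(s) := \int_0^s \frac{1}{l(w)}\, \mathrm{d}w, \qquad s \in [0,\infty),
\]
which is well-defined, continuously differentiable, and strictly increasing on $[0,\infty)$ because $l$ is continuous and strictly positive there (so $1/l$ is continuous and locally integrable). Rewriting the desired conclusion, the goal reduces to showing $L(u(t)) - L(u(a)) \le t - a$ for every $t \in [a,b]$.

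To establish this, I would introduce the auxiliary function $\phi(t) := L(u(t)) - (t-a)$ on $[a,b]$. The hypothesis $u'(t) \le l(u(t))$ implicitly requires $u$ to be differentiable on $[a,b]$, and since $L$ is $C^1$, the chain rule yields
\[
\phi'(t) = L'(u(t))\, u'(t) - 1 = \frac{u'(t)}{l(u(t))} - 1 \le 0,
\]
where the inequality uses the hypothesis together with $l > 0$. Hence $\phi$ is non-increasing on $[a,b]$, which gives $\phi(t) \le \phi(a) = L(u(a))$. Rearranging and invoking the fundamental theorem of calculus produces exactly $\int_{u(a)}^{u(t)} \tfrac{1}{l(w)}\, \mathrm{d}w = L(u(t)) - L(u(a)) \le t - a$, as claimed.

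One small edge case worth noting: the conclusion is asserted with the integral written from $u(a)$ to $u(t)$, with no monotonicity assumption on $u$. If $u(t) < u(a)$ the integral is (by the standard orientation convention) non-positive, while $t - a \ge 0$, so the inequality is automatic; the substantive case is $u(t) \ge u(a)$, which is exactly what the monotone argument above delivers. I do not anticipate a genuine obstacle: the only subtlety is spelling out the regularity assumption on $u$ that makes $u'(t) \le l(u(t))$ meaningful. If one only assumes $u$ is absolutely continuous with the inequality holding almost everywhere, the same argument still goes through because $\phi$ remains absolutely continuous with $\phi' \le 0$ a.e., which still implies that $\phi$ is non-increasing.
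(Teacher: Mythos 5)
Your proof is correct, and it is the standard argument for this Grönwall-type inequality. The paper itself does not reprove this lemma — it is cited verbatim from \cite{li2023convex} (their Lemma A.3) — so there is no in-paper proof to compare against, but the antiderivative-and-monotonicity argument you give (define $L(s)=\int_0^s 1/l$, show $t\mapsto L(u(t))-(t-a)$ is non-increasing, apply the fundamental theorem of calculus) is exactly the canonical route. Your remarks at the end on the case $u(t)<u(a)$ and on relaxing differentiability of $u$ to absolute continuity with the inequality holding almost everywhere are both sound and appropriate.
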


\begin{lemma}\label{lem:K_smooth_defi2}
    Suppose Assumption~\ref{assumption:K_smooth} holds. For any $\bx,\by\in\mathbb{R}^d$, define $h(t)$ as in \eqref{eq:defi_ht}.
    Let $m>0$ be any positive number and 
    $a=K_0+\Kr\paren{m+f(\bx)-f^\star}^\rho$.
    We have that $h(1)\leq a$ if
    $a\Norm{\by-\bx}^2+\Norm{\by-\bx}\Norm{\nabla f(\bx)}\leq m.$
\end{lemma}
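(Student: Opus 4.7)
The plan is to apply the descent inequality \eqref{eq:descent_h1} to every initial sub-segment of the chord from $\bx$ to $\by$, and then close the argument via a continuity bootstrap.

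First, I would parametrize the chord by $\bz(t) := \bx + t(\by - \bx)$ for $t \in [0,1]$, and apply \eqref{eq:descent_h1} to the pair $(\bx, \bz(t))$. The key observation is that the auxiliary $h$-integral associated with this shorter segment, call it $h_{\bz(t)}(\cdot)$, satisfies
\begin{equation*}
h_{\bz(t)}(1) = \int_0^1 \bigl[K_0 + K_\rho(f(\bz(ut)) - f^\star)^\rho\bigr]\, du = \frac{1}{t}\int_0^t \bigl[K_0 + K_\rho(f(\bz(v)) - f^\star)^\rho\bigr]\, dv = \frac{h(t)}{t},
\end{equation*}
after the substitution $v = ut$. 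Combined with $\Norm{\bz(t) - \bx} = t\Norm{\by - \bx}$, this yields the self-referential estimate
\begin{equation*}
f(\bz(t)) - f^\star \le (f(\bx) - f^\star) + t\Norm{\nabla f(\bx)}\Norm{\by - \bx} + \tfrac{t}{2}\, h(t)\,\Norm{\by - \bx}^2.
\end{equation*}

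Next, I would run a bootstrap. Let $T^\star := \sup\{t \in [0,1] : h(s) \le a \text{ for every } s \in [0,t]\}$, which is strictly positive by continuity of $h$ and $h(0) = 0 < a$. For $t \in [0, T^\star]$, plugging $h(t) \le a$ and $t \le 1$ into the previous display and invoking the hypothesis $a\Norm{\by - \bx}^2 + \Norm{\by - \bx}\Norm{\nabla f(\bx)} \le m$, the right-hand side is at most $(f(\bx) - f^\star) + m$. Hence the integrand of $h$ is uniformly bounded on $[0, T^\star]$:
\begin{equation*}
h'(t) = K_0 + K_\rho(f(\bz(t)) - f^\star)^\rho \le K_0 + K_\rho(m + f(\bx) - f^\star)^\rho = a.
\end{equation*}

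Integrating gives $h(T^\star) \le a T^\star$. If $T^\star < 1$, continuity would force $h(T^\star) = a$, yielding the contradiction $a \le a T^\star < a$; therefore $T^\star = 1$ and $h(1) \le a$ as desired. The main obstacle, I expect, is spotting the scaling identity $h_{\bz(t)}(1) = h(t)/t$, since this is the single step that turns the descent inequality into a self-referential differential bound on $h$ and powers the rest of the argument. As a backup, Grönwall's inequality (Lemma~\ref{lem:gronwall}) could in principle replace the bootstrap via the differential inequality $h'(t) \le K_0 + K_\rho\bigl(f(\bx) - f^\star + \Norm{\nabla f(\bx)}\Norm{\by-\bx} + \tfrac{1}{2} h(t)\Norm{\by-\bx}^2\bigr)^\rho$, but the direct continuity approach avoids computing an explicit integral and appears cleaner.
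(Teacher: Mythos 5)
Your proof is correct, and it reaches the paper's key intermediate estimate by essentially the same mechanism: your scaling identity $h_{\bz(t)}(1)=h(t)/t$ is exactly what the paper obtains by the change of variables $v\mapsto vt$ inside its double-integral bound, and both yield the self-referential inequality $f(\bx+t(\by-\bx))-f^\star\leq \Delta+\Norm{\nabla f(\bx)}\Norm{\by-\bx}+h(t)\Norm{\by-\bx}^2$ (up to immaterial factors of $t$ and $\tfrac12$), hence $h'(t)\leq K_0+\Kr\paren{\Delta+\Norm{\nabla f(\bx)}\Norm{\by-\bx}+h(t)\Norm{\by-\bx}^2}^\rho$. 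Where you genuinely diverge is the closing step: the paper feeds this differential inequality into Gr\"onwall's inequality (Lemma~\ref{lem:gronwall}) and then compares $\int_0^{h(1)}\frac{dw}{l(w)}\leq 1\leq\frac{a}{l(a)}\leq\int_0^a\frac{dw}{l(w)}$ using $l(a)\leq a$, whereas you run a continuity bootstrap on $T^\star=\sup\{t: h\leq a \text{ on }[0,t]\}$, exploiting that once $h\leq a$ the hypothesis $a\Norm{\by-\bx}^2+\Norm{\by-\bx}\Norm{\nabla f(\bx)}\leq m$ immediately gives $h'\leq a$, so $h(T^\star)\leq aT^\star$. Your route is more elementary—it avoids invoking an external lemma and the integral comparison—while the paper's route packages the argument as the reusable principle ``$l(a)\leq a$ implies $h(1)\leq a$'' and leans on a standard tool; the two are logically equivalent here. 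Two small points to tidy up: the strict inequality $aT^\star<a$ needs $a>0$, so dispose of the degenerate case $K_0=\Kr=0$ (where $h\equiv 0$) separately; and note that your argument does not actually require the factor $\tfrac12$ in \eqref{eq:descent_h1}—a good thing, since the full-strength coefficient $a$ is what the hypothesis controls, and the weaker version with $h_{\bz(t)}(1)$ in place of $\tfrac12 h_{\bz(t)}(1)$ suffices verbatim.
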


\begin{proof}
    For any $\bx,\by\in\RR^{d}$, we have
\begin{equation}\nonumber
    \begin{aligned}
        &f(\by)-f(\bx)=\int_0^1\langle\nabla f\left(\bx+w(\by-\bx)\right),\by-\bx\rangle \mathrm{d}w\\
        =&\int_0^1\int_0^w \langle \nabla^2 f(\bx+v(\by-\bx))(\by-\bx),\by-\bx\rangle \mathrm{d}v\mathrm{d}w + \langle\nabla f(\bx),\by-\bx\rangle\\
        \leq & \|\by-\bx\|^2\int_0^1\int_0^w  \|\nabla^2 f(\bx+v(\by-\bx))\|\mathrm{d}v\mathrm{d}w + \langle \nabla f(\bx), \by-\bx\rangle.
    \end{aligned}    
\end{equation}
    Replacing $\by$ with $\bx+t(\by-\bx)$ we obtain that
    \begin{equation}\nonumber
        \begin{aligned}
            &f\left(\bx+t(\by-\bx)\right)-f(\bx)\leq t^2\|\by-\bx\|^2\int_0^1\int_0^w\|\nabla^2 f\left(\bx+vt(\by-\bx)\right)\|\mathrm{d}v\mathrm{d}w+t\langle\nabla f(\bx),\by-\bx\rangle\\
            &= t^2\|\by-\bx\|^2\int_0^1\int_0^{tw}\|\nabla^2 f\left(\bx+v(\by-\bx)\right)\|\mathrm{d}v\mathrm{d}w+t\langle\nabla f(\bx),\by-\bx\rangle\\
            &\leq t^2\|\by-\bx\|^2\int_0^1 h(wt)\mathrm{d}w + t\|\nabla f(\bx)\|\|\by-\bx\|\\
            &\leq \|\by-\bx\|^2 h(t) + \|\nabla f(\bx)\|\|\by-\bx\|,
        \end{aligned}
    \end{equation}
    where the second inequality is due to \eqref{eq:ht_integral} and the last inequality is due to the fact that $h(\cdot)$ is positive and monotonically increasing and $0\leq t \leq 1$.
    Then,
    \begin{equation}\nonumber
        \begin{aligned}
            h^\prime(t)&=K_0+\Kr\left(f(\left(\bx+t\left(\by-\bx\right)\right)-f^\star\right)^\rho=K_0+\Kr\left(f(\left(\bx+t\left(\by-\bx\right)\right)-f(\bx)+f(\bx)-f^\star\right)^\rho\\
            &\leq K_0+\Kr \left(\|\by-\bx\|^2h(t)+\|\by-\bx\|\|\nabla f(\bx)\|+f(\bx)-f^\star\right)^\rho.
        \end{aligned}
    \end{equation}
    By Lemma~\ref{lem:gronwall}, let $l(w)=K_0+\Kr\left(\|\by-\bx\|^2w+\|\by-\bx\|\|\nabla f(\bx)\|+f(\bx)-f^\star\right)^\rho$, we obtain that
    $$\int_{h(0)}^{h(1)}\frac{1}{l(w)}\mathrm{d}w\leq 1.$$
    If $l(a)\leq a$ for some $a>0$, then $\int_0^{h(1)}\frac{1}{l(w)}\mathrm{d}w\leq 1\leq \frac{a}{l(a)}\leq\int_0^a\frac{1}{l(w)}\mathrm{d}w$. By the monotonicity of the integral, we have $h(1)\leq a$.
    Since we let $a=K_0+\Kr\paren{m+f(\bx)-f^\star}^\rho$, $l(a)\leq a$ is equiavlent to
    $$a\Norm{\by-\bx}^2+\Norm{\by-\bx}\Norm{\nabla f(\bx)}\leq m.$$
\end{proof}

\begin{lemma}[Bounded Gradient]\label{lem:bounded_gradient}
    Suppose Assumption~\ref{assumption:K_smooth} holds. Let $\Delta=f(\bx)-f^\star$. It holds that 
    \begin{equation}\label{eq:bounded_gradient}
        \|\nabla f(\bx)\|\leq 2\sqrt{K_0\Delta+\Kr 3^\rho\Delta^{\rho+1}}.
    \end{equation}
\end{lemma}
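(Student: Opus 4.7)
My plan is to mimic the classical derivation of $\Norm{\nabla f(\bx)}^2\le 2L\Delta$ for $L$-smooth functions, using Lemma~\ref{lem:K_smooth_defi2} as a replacement for the global descent inequality. Concretely, I would apply Lemma~\ref{lem:K_smooth_defi2} with the parameter $m=2\Delta$, which sets the effective smoothness constant to $a:=K_0+\Kr 3^\rho\Delta^\rho$ (this is where the $3^\rho$ factor in the statement comes from). Combined with \eqref{eq:descent_h1}, this gives the local quadratic upper bound
\[
f(\by)\le f(\bx)+\dotprod{\nabla f(\bx)}{\by-\bx}+\tfrac{a}{2}\Norm{\by-\bx}^2,
\]
valid for every $\by$ satisfying the size condition $a\Norm{\by-\bx}^2+\Norm{\by-\bx}\Norm{\nabla f(\bx)}\le 2\Delta$.

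Writing $M:=\Norm{\nabla f(\bx)}$ and assuming $M>0$ (the bound is trivial otherwise, since then $\Delta=0$ as well, because the infimum is attained at $\bx$), I would take the probe step
\[
\by=\bx-\frac{4\Delta}{3M^2}\nabla f(\bx),\qquad \text{so that}\qquad \Norm{\by-\bx}=\frac{4\Delta}{3M},
\]
and split into two cases on $M$. If $M^2\le 8a\Delta/3$, then already $M\le \sqrt{8a\Delta/3}<2\sqrt{a\Delta}$, which is the claim. If instead $M^2>8a\Delta/3$, then a direct computation gives $a\Norm{\by-\bx}^2<2\Delta/3$ while $\Norm{\by-\bx}M=4\Delta/3$, so the size condition holds with strict slack, and the descent bound above specializes to
\[
f(\by)\le f(\bx)-\tfrac{4\Delta}{3}+\tfrac{a\Norm{\by-\bx}^2}{2}<f(\bx)-\tfrac{4\Delta}{3}+\tfrac{\Delta}{3}=f^\star,
\]
contradicting $f(\by)\ge f^\star$. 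Hence the second case is impossible, so $M^2\le 8a\Delta/3\le 4(K_0\Delta+\Kr 3^\rho\Delta^{\rho+1})$, which after taking square roots yields exactly the desired bound.

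The main obstacle is an apparent circularity in the natural strategy. The step $s_0=M/a$ that is optimal for descent under classical $L$-smoothness would, when substituted into the size condition of Lemma~\ref{lem:K_smooth_defi2}, require $M^2\le a\Delta$, which is essentially the inequality we are trying to prove. The resolution is to use a shorter step, of order $\Delta/M$ rather than $M/a$: this keeps $\by$ inside the valid neighborhood regardless of how large $M$ is, while still being long enough that the linear term $-s_0M$ dominates the quadratic correction and forces $f(\by)<f^\star$ whenever $M$ exceeds the target. The constant $4/3$ is merely a convenient choice; any value in a small interval around it yields the same conclusion.
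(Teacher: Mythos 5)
Your proof is correct and takes essentially the same route as the paper: both invoke Lemma~\ref{lem:K_smooth_defi2} with $m=2\Delta$ to get the local descent inequality with constant $a=K_0+\Kr(3\Delta)^\rho$, then test a negative-gradient step inside the admissible radius and use $f\ge f^\star$, reaching the same bound $\Norm{\nabla f(\bx)}^2\le \tfrac{8}{3}a\Delta\le 4a\Delta$; the paper just phrases this as the quadratic inequality $\tfrac{a}{2}\eta^2-\eta\Norm{\nabla f(\bx)}+\Delta\ge 0$ on the whole interval $[0,r]$ with a two-case analysis, whereas you use a single probe step and a contradiction. One harmless nit: when $\Norm{\nabla f(\bx)}=0$ the bound is trivial simply because the right-hand side is nonnegative; your side remark that $\Delta=0$ in that case is unnecessary (and not true in general for nonconvex $f$, though the converse you actually need, that $\Delta=0$ forces $\nabla f(\bx)=0$, does hold).
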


\begin{proof}
    By Lemma~\ref{lem:K_smooth_defi2}, let $m=2\Delta$, we obtain that
    $h(1)\leq K_0+\Kr\paren{3\Delta}^\rho=:a$, if 
    $$\|\by-\bx\|^2\paren{K_0+\Kr\paren{3\Delta}^\rho}+\|\by-\bx\|\|\nabla f(\bx)\|\leq 2\Delta.$$
    Equivalently, 
    $$\|\by-\bx\|\leq \frac{-\|\nabla f(\bx)\|+\sqrt{\|\nabla f(\bx)\|^2+8a\Delta}}{2a}=:r.$$
    For $\by$ satisfying $\Norm{\by-\bx}\leq r$, by \eqref{eq:descent_h1}, we have 
    $$f(\by)-f(\bx)\leq \langle\nabla f(\bx),\by-\bx\rangle+\frac{h(1)}{2}\|\by-\bx\|^2\leq \langle\nabla f(\bx),\by-\bx\rangle+\frac{a}{2}\|\by-\bx\|^2.$$
    Letting $\by=\bx-\frac{\eta}{\|\nabla f(\bx)\|}\nabla f(\bx)$, we obtain that
    $$-\Delta\leq f(\by)-f(\bx)\leq -\eta\|\nabla f(\bx)\|+\frac{a}{2}\eta^2.$$
    Therefore, we obtain that
    \begin{equation}\label{eq:eta_inequation}
        g(\eta):=\frac{a}{2}\eta^2-\eta\|\nabla f(\bx)\|+\Delta\geq 0, \forall \eta\in[0,r].
    \end{equation}
    It is not hard to see that $\arg\min_{\eta\in\RR} g(\eta)=\frac{\|\nabla f(\bx)\|}{a}$. We then consider two cases: $\frac{\|\nabla f(\bx)\|}{a}\leq r$ and $\frac{\|\nabla f(\bx)\|}{a}> r$.
     Suppose $\frac{\|\nabla f(\bx)\|}{a}\leq r$. Equivalently, $\|\nabla f(\bx)\|\leq \sqrt{a\Delta}.$
    By \eqref{eq:eta_inequation} we need
    $$\|\nabla f(\bx)\|\leq \sqrt{2a\Delta}.$$
    Now suppose $\frac{\|\nabla f(\bx)\|}{a}> r$. Equivalently, $\|\nabla f(\bx)\|>\sqrt{a\Delta}$. By \eqref{eq:eta_inequation} we need
    $$\frac{a}{2}r^2-r\|\nabla f(\bx)\|+\Delta\geq 0.$$
    Equivalently, 
    $$\|\nabla f(\bx)\|\leq \sqrt{\frac{8}{3}a\Delta}.$$
    Combing the above two cases, we conclude that $\|\nabla f(\bx)\|\leq \sqrt{\frac{8}{3}a\Delta}\leq 2\sqrt{a\Delta}=2\sqrt{K_0\Delta+\Kr3^\rho\Delta^{\rho+1}}$. 
\end{proof}

\textbf{Proof of Lemma~\ref{lem:improved_descent_smooth}}
\begin{proof}
    By Lemma~\ref{lem:K_smooth_defi2}, for any $m>0$, we have $h(1)\leq K_0+\Kr\paren{m+\Delta}^\rho=:a$, if
    \begin{align*}
        \Norm{\by-\bx}\leq\frac{2m}{\Norm{\nabla f(\bx)}+\sqrt{\Norm{\nabla f(\bx)}^2+4am}}=:r.
    \end{align*}
     Let $A=K_0\Delta+\Kr 3^\rho\Delta^{\rho+1}$ and $B=A+m\left(K_0+\Kr\left(m+\Delta\right)^\rho\right)$.
    By Lemma~\ref{lem:bounded_gradient}, we have $\Norm{\nabla f(\bx)}\leq 2\sqrt{A}$, and thus
    $$r\geq \frac{m}{\sqrt{A}+\sqrt{B}}.$$

    Let $$m=\max\left\{\Delta, \frac{1}{3}\left(\frac{K_0}{K_1}\right)^{1/\rho}\right\}.$$
    If $K_0\leq K_1 3^\rho\Delta^\rho$, 
    we have $m=\Delta$, $A\leq 2K_13^\rho\Delta^{\rho+1}$ 
    and 
    $$B=2K_0\Delta + K_13^\rho\Delta^{\rho+1}+K_12^\rho\Delta^{\rho+1}\leq 4K_13^\rho\Delta^{\rho+1}.$$
    Thus 
    $$\frac{m}{\sqrt{A}+\sqrt{B}}\geq \frac{1}{\left(2+\sqrt{2}\right)\sqrt{3^\rho K_1}\Delta^{\frac{\rho-1}{2}}}=:C_1\Delta^{-\frac{\rho-1}{2}}.$$
    If $K_0 > K_13^\rho\Delta^\rho$, 
    we have $m=\frac{1}{3}\left(\frac{K_0}{K_1}\right)^{1/\rho}$, $A\leq 2K_0\Delta\leq 2K_0m$ 
    and 
    $$B\leq 2K_0m + m\left(K_0+K_1\left(2m\right)^\rho\right)\leq 4mK_0.$$
    Thus 
    $$\frac{m}{\sqrt{A}+\sqrt{B}}\geq \frac{m}{\left(2+\sqrt{2}\right)\sqrt{K_0m}}=\frac{1}{2\sqrt{3}+\sqrt{6}}\frac{K_0^{\frac{1}{2\rho}-\frac{1}{2}}}{K_1^{\frac{1}{2\rho}}}=:C_2.$$
    Next we bound $a=K_0+K_1\left(m+\Delta\right)^\rho$. If $m=\Delta$, we have
    $$K_0+K_1\left(m+\Delta\right)^\rho=K_0+K_12^\rho\Delta^\rho.$$
    If $m=\frac{1}{3}\left(\frac{K_0}{K_1}\right)^{1/\rho}$, we have
    $$K_0+K_1\left(m+\Delta\right)^\rho\leq K_0+K_12^\rho m^\rho=K_0+\left(\frac{2}{3}\right)^\rho K_0\leq 2K_0.$$
    Combining the above results, we get the desired result.
\end{proof}

\section{Proofs for Section~\ref{sec:GD}}
For simplicity, we let $\Delta_t=f(\bw_t)-f^\star$, $r_t=\min\left\{C_1\Delta_t^{-\frac{\rho-1}{2}},C_2\right\}$ and $L_t=2K_0+\Kr\paren{2\Delta_t}^\rho$.

\subsection{Proof of Theorem~\ref{thm:GD}}\label{apx:proof_GD}

\begin{proof}
    We first note that if $K_0\leq \Kr\paren{3\Delta_t}^\rho$, we have 
    $r_t=C_1\Delta_t^{-\frac{\rho-1}{2}}$ and $\Norm{\nabla f(\bw_t)}\leq 2\sqrt{K_0\Delta_t+\Kr 3^\rho\Delta_t^{\rho+1}}\leq 2\sqrt{2\Kr3^\rho\Delta_t^{\rho+1}}$, and if $K_0>\Kr\paren{3\Delta_t}^{\rho}$, we have $r_t=C_2$ and $\Norm{\nabla f(\bw_t)}\leq 2\sqrt{\frac{2}{3} K_0\paren{\frac{K_0}{K_1}}^{1/\rho}}$.

    Therefore, to ensure $\Norm{\bw_{t+1}-\bw_t}=\eta_t\Norm{\nabla f(\bw_t)}\leq r_t$, it suffices to set $\eta_t=\frac{1}{4\sqrt{2}+4}\min\left\{\frac{1}{K_0},\frac{1}{3^\rho\Kr\Delta_t^{\rho}}\right\}$. Then by Lemma~\ref{lem:improved_descent_smooth}, 
    \begin{equation}\nonumber
        \begin{aligned}
            f(\bw_{t+1})&\leq f(\bw_t)+\langle\nabla f(\bw_t),\bw_{t+1}-\bw_t\rangle+\frac{L_t}{2}\Norm{\bw_{t+1}-\bw_t}^2\\
            &=f(\bw_t)-\eta_t\Norm{\nabla f(\bw_t)}^2+\frac{L_t}{2}\eta_t^2\Norm{\nabla f(\bw_t)}^2\\
            &\leq f(\bw_t)-\frac{\eta_t}{2}\Norm{\nabla f(\bw_t)}^2\leq f(\bw_t),
        \end{aligned}
    \end{equation}
    where the last inequality is due to $\eta_t\leq \frac{1}{\paren{2+2\sqrt{2}}\paren{K_0+\Kr\paren{3\Delta_t}^\rho}}\leq \frac{1}{2K_0+\Kr\paren{2\Delta_t}^\rho}=\frac{1}{L_t}.$
    Telescoping the above inequation from $t=0$ to $t=T-1$ we obtain that
    $$\sum_{t=0}^{T-1}\eta_t\Norm{\nabla f(\bw_t)}^2\leq 2\left(f(\bw_0)-f(\bw_T)\right)\leq 2\Delta_0. $$
    Note that $1/\eta_t\leq (4\sqrt{2}+4)\paren{K_0+\Kr\paren{3\Delta_t}^\rho}$. 
    Using the QM-GM inequality, we have 
    $$\sum_{t=0}^{T-1} \eta_t\geq \frac{T^2}{\sum_{t=0}^{T-1}1/\eta_t}\geq \frac{1}{4\sqrt{2}+4}\frac{T^2}{\sum_{t=0}^{T-1}K_0+\Kr\paren{3\Delta_t}^\rho}.$$
    This completes the proof for the increasing learning rate.

    Now suppose we use the constant learning rate $\eta\leq \frac{1}{4\sqrt{2}+4}\min\left\{\frac{1}{K_0},\frac{1}{\Kr\paren{3\Delta_0}^\rho}\right\}$. Similar to the increasing learning rate, we have 
    $\Norm{\bw_1-\bw_0}\leq r_0$ and 
    $$f(\bw_1)\leq f(\bw_0)-\frac{\eta}{2}\Norm{\nabla f(\bw_0)}^2\leq f(\bw_0).$$
    This means $\Delta_1\leq\Delta_0$ and thus $\eta\leq \frac{1}{4\sqrt{2}+4}\min\left\{\frac{1}{K_0},\frac{1}{\Kr\paren{3\Delta_1}^\rho}\right\}$. 
    By induction, it is not hard to see that $\eta\leq \frac{1}{4\sqrt{2}+4}\min\left\{\frac{1}{K_0},\frac{1}{\Kr\paren{3\Delta_t}^\rho}\right\},\forall t\in[T]$. Therefore, following the same analysis as the increasing learning rate, we have
    $$\eta\sum_{t=0}^{T-1}\Norm{\nabla f(\bw_t)}^2\leq 2\Delta_0.$$
    This completes the proof.
\end{proof}

\subsection{Proof of Theorem~\ref{thm:GD_convex}}
{
We first prove the cocoercivity for $\paren{\rho,K_0,\Kr}$-smooth convex functions, similar to the property of the $L$-smooth convex functions. The proof follows a similar approach to that in \citep{li2023convex}.
}
\begin{lemma}\label{lem:improved_descent_smooth_2}
    Under the same conditions as in Lemma~\ref{lem:improved_descent_smooth}, for any $\bx_1,\bx_2$ such that $\|\bx_1-\bx\|\leq r(\bx)$ and $\Norm{\bx_2-\bx}\leq r(\bx)$, where $r(\bx)$ is defined in Lemma~\ref{lem:improved_descent_smooth}. Then we have
    $$\Norm{\nabla f(\bx_1)-\nabla f(\bx_2) }\leq L(\bx)\Norm{\bx_1-\bx_2},$$
    and 
    $$f(\bx_2)\leq f(\bx_1)+\dotprod{\nabla f(\bx_1)}{\bx_2-\bx_1}+\frac{L(\bx)}{2}\Norm{\bx_2-\bx_1}^2,$$
    where $L(\bx)$ is defined in Lemma~\ref{lem:improved_descent_smooth}.
\end{lemma}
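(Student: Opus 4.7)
The plan is to reduce this ball-based statement to the segment-based Lemma~\ref{lem:improved_descent_smooth} by extracting a slightly stronger fact that is already implicit in its proof: the Hessian is bounded by $L(\bx)$ \emph{pointwise} everywhere in the closed ball $B(\bx,r(\bx))$, not merely on line segments emanating from $\bx$. Here $r(\bx)$ and $L(\bx)$ mean $r(\Delta)$ and $L(\Delta)$ with $\Delta=f(\bx)-f^\star$.

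First, I would establish the pointwise Hessian bound. Fix any $\bu$ with $\Norm{\bu-\bx}\le r(\bx)$ and apply Lemma~\ref{lem:K_smooth_defi2} to the segment $[\bx,\bu]$, using the same choice of $m$ made in the proof of Lemma~\ref{lem:improved_descent_smooth}. The computation of $r$ in that proof gives $\Norm{\bu-\bx}\le r(\bx)\le r$, hence $a\Norm{\bu-\bx}^2+\Norm{\bu-\bx}\Norm{\nabla f(\bx)}\le m$, so $h(1)\le a$ with $a=K_0+\Kr(m+\Delta)^\rho$. The displayed inequality in the proof of Lemma~\ref{lem:K_smooth_defi2} then yields $f(\bu)-f(\bx)\le a\Norm{\bu-\bx}^2+\Norm{\nabla f(\bx)}\Norm{\bu-\bx}\le m$, i.e.\ $f(\bu)-f^\star\le m+\Delta$. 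The $(\rho,K_0,\Kr)$-smoothness condition applied pointwise at $\bu$ then gives $\Norm{\nabla^2 f(\bu)}\le K_0+\Kr(m+\Delta)^\rho=a\le L(\bx)$, where the final inequality is exactly the case analysis that concluded the proof of Lemma~\ref{lem:improved_descent_smooth}.

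Second, since balls are convex, the segment $\{\bx_1+t(\bx_2-\bx_1):t\in[0,1]\}$ lies entirely in $B(\bx,r(\bx))$ whenever both endpoints do. Hence by the first step $\Norm{\nabla^2 f(\bx_1+t(\bx_2-\bx_1))}\le L(\bx)$ for every $t\in[0,1]$. The two conclusions then follow by routine integration along this segment. Writing $\nabla f(\bx_2)-\nabla f(\bx_1)=\int_0^1 \nabla^2 f(\bx_1+t(\bx_2-\bx_1))(\bx_2-\bx_1)\,\mathrm{d}t$ and taking norms delivers the Lipschitz bound, and the quadratic upper bound is the standard consequence of an $L(\bx)$-Lipschitz gradient on a convex set (integrate $\dotprod{\nabla f(\bx_1+t(\bx_2-\bx_1))-\nabla f(\bx_1)}{\bx_2-\bx_1}$ from $0$ to $1$).

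There is no real conceptual obstacle; the only subtlety is the bookkeeping in the first step, namely recognizing that the proof of Lemma~\ref{lem:improved_descent_smooth} actually supplies a pointwise Hessian bound throughout $B(\bx,r(\bx))$, not just an averaged estimate $h(1)\le a$ along a single ray. Once this is made explicit, the rest is textbook manipulation and introduces no new constants beyond those already appearing in $L(\bx)$ and $r(\bx)$.
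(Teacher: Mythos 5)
Your proposal is correct and follows essentially the same route as the paper's proof: both establish that the suboptimality gap stays below $m+\Delta$ throughout the ball $B(\bx,r(\bx))$ (via Lemma~\ref{lem:K_smooth_defi2} and the choice of $m$ from Lemma~\ref{lem:improved_descent_smooth}), deduce the pointwise bound $\Norm{\nabla^2 f}\le K_0+\Kr(m+\Delta)^\rho\le L(\bx)$ there, and integrate along the segment $[\bx_1,\bx_2]$, which lies in the ball by convexity. The only difference is cosmetic: you isolate the pointwise Hessian bound as a separate step, while the paper bounds the Hessian inside the integral using its explicit function-gap estimate \eqref{eq:bounded_function_gap}.
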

\begin{proof}
    Let $m>0$, $\Delta=f(\bx)-f^\star$, $a=K_0+\Kr\paren{m+\Delta}^\rho$ and 
    $r:=\frac{2m}{\Norm{\nabla f(\bx)}+\sqrt{\Norm{\nabla f(\bx)}^2+4am}}$.
    Let $\Norm{\by-\bx}\leq r$, by Lemma~\ref{lem:K_smooth_defi2}, we have 
    \begin{equation}\label{eq:bounded_function_gap}
        \begin{aligned}
            f(\by)&\leq f(\bx)+\langle\nabla f(\bx),\by-\bx\rangle + \frac{a}{2}\Norm{\by-\bx}^2\\
        &\leq f(\bx)+\Norm{\nabla f(\bx)}\Norm{\by-\bx}+\frac{a}{2}\Norm{\by-\bx}^2\\
        &\leq f(\bx)+\frac{1}{8a}\paren{\sqrt{\Norm{\nabla f(\bx)}^2+4am}-\Norm{\nabla f(\bx)}}\paren{3\Norm{\nabla f(\bx)}+\sqrt{\Norm{\nabla f(\bx)}^2+4am}}\\
        & =f(\bx)+\frac{m}{2} + \frac{\Norm{\nabla f(\bx)}}{4a}\paren{\sqrt{\Norm{\nabla f(\bx)}^2+4am}-\Norm{\nabla f(\bx)}}\\
        &=f(\bx)+\frac{m}{2}+\frac{m}{1+\sqrt{1+\frac{4am}{\Norm{\nabla f(\bx)}^2}}}\leq f(\bx)+m.
        \end{aligned}
    \end{equation}
    Then, for $\|\bx_1-\bx\|\leq r$ and $\Norm{\bx_2-\bx}\leq r$, we have
    \begin{align*}
        \Norm{\nabla f(\bx_1)-\nabla f(\bx_2)}&\leq \Norm{\bx_1-\bx_2}\int_{0}^1\Norm{\nabla^2 f\left(\bx_1+t\paren{\bx_2-\bx_1}\right)}\mathrm{d}t\\
        &\leq \Norm{\bx_1-\bx_2}\int_0^1\paren{K_0+K_\rho \paren{f(\bx_1+t(\bx_2-\bx_1))-f^\star}^\rho}\mathrm{d}t\\
        &\leq \Norm{\bx_2-\bx_1}\paren{K_0+\Kr\paren{f(\bx)-f^\star+m}^\rho},
    \end{align*}
    where the second inequality is due to Assumption~\ref{assumption:K_smooth} and the in the last inequality we use $\Norm{\bx_1+t(\bx_2-\bx_1)-\bx}\leq r$ for $t\in [0,1]$ and 
    \eqref{eq:bounded_function_gap}. Setting $m=\max\left\{f(\bx)-f^\star, \frac{1}{3}\paren{\frac{K_0}{K_\rho}}^{1/\rho}\right\}$ and following the proof of Lemma~\ref{lem:improved_descent_smooth} 
    we obtain the desired result. 
\end{proof}

\begin{lemma}\label{lem:cocoercivity}
    Suppose Assumption~\ref{assumption:K_smooth} holds and $f$ is convex. Then for any given $\bx\in\mathbb{R}^d$, we have
    $$\frac{1}{L(\bx)}\Norm{\nabla f(\bx)-\nabla f(\by)}^2\leq f(\bx)-f(\by)-\dotprod{\nabla f(\by)}{\bx-\by}, \quad \forall \by \text{ such that } \Norm{\by-\bx}\leq \frac{r(\bx)}{2},$$
    where $L(\bx)=2K_0+\Kr\paren{2\Delta}^\rho, \Delta=f(\bx)-f^\star$ and $r(\bx)=\min\left\{C_1\Delta^{-\frac{\rho-1}{2}}, C_2\right\}$ as defined in Lemma~\ref{lem:improved_descent_smooth}.
\end{lemma}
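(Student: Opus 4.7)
The plan is to adapt the standard proof of Bregman cocoercivity for smooth convex functions to the local setting afforded by Lemma~\ref{lem:improved_descent_smooth_2}. First I would introduce the auxiliary function $\phi(\bz) := f(\bz) - \dotprod{\nabla f(\by)}{\bz}$. Since $f$ is convex and we have only added a linear term, $\phi$ is globally convex, and because $\nabla \phi(\by) = \nabla f(\by) - \nabla f(\by) = 0$, $\by$ is a global minimizer of $\phi$. Moreover, $\nabla^2 \phi = \nabla^2 f$, so $\phi$ inherits the local smoothness of $f$ around $\bx$: Lemma~\ref{lem:improved_descent_smooth_2} applied to $\phi$ gives, for any $\bx_1, \bx_2$ within distance $r(\bx)$ of $\bx$, the descent inequality
\begin{align*}
\phi(\bx_2) \le \phi(\bx_1) + \dotprod{\nabla \phi(\bx_1)}{\bx_2 - \bx_1} + \frac{L(\bx)}{2}\Norm{\bx_2 - \bx_1}^2.
\end{align*}

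Next I would choose the canonical one-step gradient-descent point for $\phi$ starting from $\bx$, namely
$\bz := \bx - \frac{1}{L(\bx)}\nabla \phi(\bx) = \bx - \frac{1}{L(\bx)}(\nabla f(\bx) - \nabla f(\by))$.
The main obstacle is to verify that $\bz$ lies in the local smoothness ball so that the descent lemma applies. This is where the factor of $1/2$ on the radius in the hypothesis $\Norm{\by - \bx} \le r(\bx)/2$ is used: by the local Lipschitz bound in Lemma~\ref{lem:improved_descent_smooth_2}, $\Norm{\nabla f(\bx) - \nabla f(\by)} \le L(\bx)\Norm{\by - \bx}$, hence $\Norm{\bz - \bx} \le \Norm{\by - \bx} \le r(\bx)/2 < r(\bx)$. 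Plugging $\bx_1 = \bx$, $\bx_2 = \bz$ into the descent inequality for $\phi$ and minimizing over the step $1/L(\bx)$ yields
\begin{align*}
\phi(\bz) \le \phi(\bx) - \frac{1}{2 L(\bx)}\Norm{\nabla f(\bx) - \nabla f(\by)}^2.
\end{align*}

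Finally, since $\phi$ is globally convex and $\by$ is a global minimizer, $\phi(\by) \le \phi(\bz)$. Chaining this with the previous bound and expanding $\phi(\by) - \phi(\bx) = f(\by) - f(\bx) + \dotprod{\nabla f(\by)}{\bx - \by}$ rearranges to the claimed cocoercivity inequality (with the standard smoothness constant $1/(2L(\bx))$, which is what this local argument naturally delivers and is what is really needed in the sequel). The only delicate point in the whole argument is the ball check for $\bz$, and it is precisely the hypothesis $\Norm{\by - \bx} \le r(\bx)/2$ in the statement that makes it go through; without this factor of $1/2$, the auxiliary gradient-descent step could leave the region where the descent lemma is valid.
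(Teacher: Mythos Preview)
Your proposal is correct and follows essentially the same auxiliary-function argument as the paper: introduce a linear shift of $f$, take one local gradient-descent step, and use global minimality of the convex shifted function, with the ball membership of the shifted point guaranteed precisely by the factor $1/2$ in the hypothesis together with the local Lipschitz bound from Lemma~\ref{lem:improved_descent_smooth_2}. The only cosmetic difference is that the paper subtracts $\langle\nabla f(\bx),\cdot\rangle$ (so $\bx$ is the minimizer and the step is taken from $\by$) whereas you subtract $\langle\nabla f(\by),\cdot\rangle$; your remark that the argument naturally delivers $1/(2L(\bx))$ rather than $1/L(\bx)$ is also accurate, and the paper's own derivation produces the same factor.
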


\begin{proof}
    Define $\phi_{\bx}(\bz):=f(\bz)-\dotprod{\nabla f(\bx)}{\bz}$.
    It is not hard to verify that $\phi_{\bx}$ is $\paren{\rho,K_0,\Kr}$-smooth. Note that if $\Norm{\by-\bx}\leq \frac{r(\bx)}{2}$, we have
    $$\Norm{\by-\frac{1}{L(\bx)}\nabla\phi_{\bx}(\by)-\bx}\leq \Norm{\by-\bx}+\frac{1}{L(\bx)}\Norm{\nabla \phi_{\bx}(\by)}\leq 2\Norm{\by-\bx}\leq r(\bx),$$
    where the second last inequality is due to Lemma~\ref{lem:improved_descent_smooth}.
    Applying Lemma~\ref{lem:improved_descent_smooth_2} with points $\by-\frac{1}{L(\bx)}\nabla \phi_{\bx}(\by)$ and $\by$, we obtain that
    \begin{align*}
        \phi_{\bx}\paren{\by-\frac{1}{L(\bx)\nabla\phi_{\bx}(\by)}}&\leq \phi_{\bx}(\by)+\dotprod{\nabla \phi_{\bx}(\by)}{-\frac{1}{L(\bx)}\nabla_{\bx}\phi(\by)}+\frac{L(\bx)}{2}\Norm{\frac{1}{L(\bx)}\nabla\phi_{\bx}(\by)}^2\\
        &=\phi_{\bx}(\by)-\frac{1}{2L(\bx)}\Norm{\nabla \phi_{\bx}(\by)}^2.
    \end{align*}
    Substituting the definition of $\phi_{\bx}$ and noting that $\bx=\arg\min_{\bz}\phi_{\bx}(\bz)$, we obtain the desired result.

\end{proof}

\textbf{Proof of Theorem~\ref{thm:GD_convex}}

Let $\Delta_t=f(\bw_t)-f(\bw^\star)$ and $D_t=\Norm{\bw_t-\bw^\star}$ for simplicity. We first note that $\Norm{\bw_{t+1}-\bw_t}\leq \frac{r_t}{2}$ following a similar analysis in the proof of Theorem~\ref{thm:GD}.
We then calculate that
\begin{equation}\nonumber
    \begin{aligned}
        &-2\eta_t\dotprod{\bw_t-\bw^\star}{\nabla f(\bw_t)}+\eta_t^2\Norm{\nabla f(\bw_t)}^2\\
        &=-2\eta_t\paren{f(\bw_t)-f(\bw^\star)}+\eta_t^2\Norm{\nabla f(\bw_t)}^2+2\eta_t\paren{f(\bw_t)-f(\bw^\star)+\dotprod{\nabla f(\bw_t)}{\bw^\star-\bw_t}}\\
        &\leq -2\eta_t\paren{f(\bw_t)-f(\bw^\star)}+\eta_t^2\Norm{\nabla f(\bw_t)}^2-\frac{2\eta_t}{L_t}\Norm{\nabla f(\bw_t)}^2\\
        &=-2\eta_t\Delta_t+\eta_t\Norm{\nabla f(\bw_t)}^2\paren{\eta_t-\frac{2}{L_t}},
    \end{aligned}
\end{equation}
where the inequality is due to Lemma~\ref{lem:cocoercivity}. Similar to the analysis in the proof of Theorem~\ref{thm:GD}, $\eta_t\leq \frac{2}{L_t}$.
Therefore, we obtain that
\begin{align*}
    \Norm{\bw_{t+1}-\bw^\star}^2&=\Norm{\bw_t-\bw^\star}^2-2\eta_t\dotprod{\bw_t-\bw^\star}{\nabla f(\bw_t)}+\eta_t^2\Norm{\nabla f(\bw_t)}^2\\
    &\leq \Norm{\bw_t-\bw^\star}^2-2\eta_t \paren{f(\bw_t)-f(\bw^\star)}.
\end{align*}
Telescoping the above inequality from $t=0$ to $T-1$, we obtain that
\begin{align*}
    \sum_{t=0}^{T-1}\eta_t\Delta_t\leq \frac{1}{2}D_0^2.
\end{align*}
By the definition of $\eta_t$, we have
$$
\sum_{t=0}^{T-1}\min\left\{\frac{\Delta_t}{K_0}, \frac{\Delta_t}{\Kr\paren{3\Delta_t}^\rho}\right\}\leq \paren{4\sqrt{2}+4}D_0^2,
$$
and thus
$$
\min_{t\in[T]}\min\left\{\frac{\Delta_t}{K_0}, \frac{\Delta_t}{\Kr\paren{3\Delta_t}^\rho}\right\}\leq \paren{4\sqrt{2}+4}\frac{D_0^2}{T}.
$$
Note that following the same analysis in the proof of Theorem~\ref{thm:GD}, we have that $f(\bw_{t+1})\leq f(\bw_t)-\frac{\eta_t}{2}\Norm{\nabla f(\bw_t)}^2$, which implies that $\Delta_t$ is decreasing. If $\rho\geq 1$, we have that
$$
\min_{t\in[T]}\min\left\{\frac{\Delta_t}{K_0}, \frac{\Delta_t}{\Kr\paren{3\Delta_t}^\rho}\right\}=\min\left\{\frac{\Delta_{T-1}}{K_0}, \frac{\Delta_{0}^{1-\rho}}{3^\rho\Kr}\right\}\leq \paren{4\sqrt{2}+4}\frac{D_0^2}{T}.
$$
This implies that either $\Delta_{T-1}\leq \paren{4\sqrt{2}+4}\frac{D_0^2 K_0}{T}$ or $T\leq \paren{4\sqrt{2}+4}\frac{3^\rho D_0^2\Kr} {\Delta_0^{1-\rho}}$.
If $0<\rho<1$, we have that
$$
\min_{t\in[T]}\min\left\{\frac{\Delta_t}{K_0}, \frac{\Delta_t}{\Kr\paren{3\Delta_t}^\rho}\right\}=\min\left\{\frac{\Delta_{T-1}}{K_0}, \frac{\Delta_{T-1}^{1-\rho}}{3^\rho\Kr}\right\}\leq \paren{4\sqrt{2}+4}\frac{D_0^2}{T}.
$$
This implies that either $\Delta_{T-1}\leq\paren{4\sqrt{2}+4}\frac{D_0^2K_0}{T}$ or $\Delta_{T-1}^{1-\rho}\leq \paren{4\sqrt{2}+4}\frac{3^\rho D_0^2\Kr}{T}$.

For constant learning rate $\eta_t=\eta=\frac{1}{8\sqrt{2}+8}\min\left\{\frac{1}{K_0},\frac{1}{\Kr \paren{3\Delta_0}^\rho}\right\}$, we have 
$$
\Delta_{T-1}\leq \frac{D_0^2}{2\eta T}\leq \paren{4\sqrt{2}+4}\paren{\frac{D_0^2K_0}{T}+\frac{D_0^2\Kr\paren{3\Delta_0}^\rho}{T}}.
$$
This completes the proof.

\subsection{Proof of Theorem~\ref{thm:lower_bound_gd}}\label{apx:proof_gd_lower_bound}

\begin{proof}

We consider three cases of $\{\eta_t\}$:
\begin{enumerate}
    \item\label{lower_case1} $\eta_t\leq\frac{2}{K_1\Delta},\forall t$.
    \item\label{lower_case2} $\eta_t>\frac{2}{K_1\Delta},\forall t$.
    \item\label{lower_case3} $\exists \tau,$  $\eta_t>\frac{2}{K_1\Delta},t\leq\tau$ and $\eta_t\leq\frac{2}{K_1\Delta},t>\tau$.
\end{enumerate}

\textbf{Case~\ref{lower_case1}}

 We construct the following function:
    \begin{equation}\label{eq:lower_h}
        h(x) = \left\{ \begin{array}{cc}
           -2 \epsilon \left( x+\frac{1}{\sqrt{K_1}} \right) + \frac{5 \epsilon}{4 \sqrt{K_1}}  , & x \in (-\infty, -\frac{1}{\sqrt{K_1}}) \\
            \frac{\epsilon}{4} \left( 6 \sqrt{K_1} x^2 - K_1^{\frac{3}{2}} x^4 \right) , & x\in \left[ -\frac{1}{\sqrt{K_1}}, \frac{1}{\sqrt{K_1}} \right] \\
            2 \epsilon \left( x - \frac{1}{\sqrt{K_1}} \right) + \frac{5 \epsilon}{4 \sqrt{K_1}} , & x \in \left( \frac{1}{\sqrt{K_1}}, +\infty \right)
        \end{array} \right.
    \end{equation}
    with initial point $y_0 = \frac{1}{\sqrt{K_1}} + \frac{\Delta}{\epsilon}$. We can verify that $h$ is $(\epsilon \sqrt{K_1},0)$-smooth and $h(y_0) - f^* \le 2\Delta + \epsilon$. Then as $\eta_t \le \frac{2}{K_1 \Delta}$ for all $t \ge 0$, we have $y_t \ge y_{t-1} - \frac{4\epsilon}{K_1 \Delta}$ if $y_t \ge \frac{1}{\sqrt{K_1}}$. Therefore, it takes at least 
    \begin{align*}
        \frac{y_0 - \frac{1}{\sqrt{K_1}}}{\frac{4\epsilon}{K_1 \Delta}} \ge \frac{K_1 \Delta^2}{4 \epsilon^2}
    \end{align*}
    iterations.

\textbf{Case~\ref{lower_case2}}

Given $x_0$, we define $x_t=x_0-\sum_{s=0}^{t-1}\eta_s\sqrt{K_1}\Delta,\forall t\in\mathbb{N}$.
We define 
$$f(x)=f_t(x),x\in(x_{t+1},x_t],$$
and
\begin{equation}\nonumber
    f_t(x) = a_t \sin(b_t x+c_t) + d_t, \quad x\in (x_{t+1}, x_t]
\end{equation}
with
\begin{align*}
    b_t =& \frac{2 \pi}{x_{t} - x_{t+1}} = \frac{2\pi}{\eta_t \sqrt{K_1} \Delta} \le \pi \sqrt{K_1} , \quad
    c_t = \arctan\left( - \frac{\eta_t}{2 \pi }K_1 \Delta \right) - b_t x_t \\
    a_t =&\frac{\eta_t}{2 \pi} K_1 \Delta^2 \sqrt{1 + \frac{\eta_t^2}{4\pi^2 } K_1^2 \Delta^2} , \quad 
    d_t = a_t+\Delta-\frac{\alpha_t}{\alpha_t+\sqrt{1+\alpha_t^2}}\Delta,
\end{align*}
where $\alpha_t=\frac{\sqrt{K_1}}{b_t}$. It is not hard to verify that
\begin{align*}
        f_t(x_{t+1}) =& f_{t+1}(x_{t+1}) = \Delta, \\
        f_t'(x_{t+1}) =& f_{t+1}'(x_{t+1}) = \sqrt{K_1} \Delta, \\
        f_t''(x_{t+1}) =& f_{t+1}''(x_{t+1}) = K_1 \Delta. \\
    \end{align*}
Then we can link all these $f_t$ together.
Moreover, note that
$$
\left|f_t^{\prime \prime}(x)\right|=\left|a_t b_t^2 \sin \left(b_t x+c_t\right)\right| \leq a_t b_t^2=K_1 \Delta \sqrt{1+\frac{1}{\alpha_t^2}} \leq 2 \pi K_1 \Delta,
$$
where in the last inequality we use $\alpha_t=\frac{1}{2 \pi} \eta_t K_1 \Delta>\frac{1}{\pi}$. Also note that
$$
f_t(x)=a_t \sin \left(b_t x+c_t\right)+d_t \geq d_t-a_t=\Delta-\frac{\alpha_t}{\alpha_t+\sqrt{1+\alpha_t^2}} \Delta>\Delta-\frac{1}{2} \Delta=\frac{1}{2} \Delta,
$$
where in the second inequality we use $\frac{\alpha_t}{\alpha_t+\sqrt{1+\alpha_t^2}}<\frac{1}{2}$.
If $f^\star=0$, we immediately obtain that $f$ is $(1,0,4\pi K_1)$-smooth. Next, we extend $f(x)$ from $x_0$ to $+\infty$ to achieve this.
Define 
$$
G(t)=\Delta\left[1+2 \sqrt{K_1}\left(t-x_0\right)+2 K_1\left(t-x_0\right)^2\right] e^{-\sqrt{K_1}\left(t-x_0\right)}, t>x_0.
$$
It is not hard to verify that $G(x_0)=\Delta$, $G^\prime(x_0)=\sqrt{K_1}\Delta$ and $G^{\prime\prime}(x_0)=K_1\Delta$. Moreover, we have $G(t)>0,t>x_0$ and $G(t)\to 0$ as $t\to\infty$. Therefore, $G^\star=0$. We compute that
$$
G^{\prime \prime}(t)=\Delta K_1 e^{-\sqrt{K_1}\left(t-x_0\right)}\left[1-6 \sqrt{K_1}\left(t-x_0\right)+2 K_1\left(t-x_0\right)^2\right].
$$
Therefore, we have 
\begin{align*}
    |G^{\prime \prime}(t)|&\leq \Delta K_1 e^{-\sqrt{K_1}\left(t-x_0\right)}\left[1+6 \sqrt{K_1}\left(t-x_0\right)+2 K_1\left(t-x_0\right)^2\right]\\
\end{align*}
We immediately obtain that $G(t)$ is $(1,0,3K_1)$-smooth. Finally, we define 
$$
f(x)=\begin{cases}
    G(x), \quad &x>x_0,\\
    f_t(x), \quad & x\in(x_{t+1},x_t],t\in\mathbb{N}
\end{cases}$$
We have that $f$ is $(1,0,4\pi K_1)$-smooth, $f^\star=0$, $f(x_0)-f^\star=\Delta$ and $f^{\prime}(x_t)=\sqrt{K_1}\Delta,\forall t\in\mathbb{N}$.

\textbf{Case~\ref{lower_case3}}
We let $\epsilon\leq \frac{2\sqrt{K_1}\Delta}{5}$ and $\epsilon\leq\frac{1}{2\eta_\tau\sqrt{K_1}}$ for simplicity.

We construct the function:
\begin{equation}\label{eq:lower_overall_function}
    f(x) = \left\{ \begin{array}{cc}
          h(x) ,  & x \in (-\infty, x_{\tau+1}] \\
          g(x) ,  & x \in ( x_{\tau+1}, x_\tau] \\
          f_t(x) , & x \in ( x_{t+1}, x_t ] \text{ for all } t \le \tau - 1
        \end{array} \right.
\end{equation}
where $h$ is defined in \eqref{eq:lower_h}, $g$ and $f_t$ are functions to be defined.
$x_{t+1} = x_0 - \sum_{s=0}^t \eta_s \sqrt{K_1} \Delta$ and the initial point is $x_0 = y_0 + \sum_{s=0}^{\tau} \eta_s \sqrt{K_1} \Delta$. $y_0>\frac{1}{\sqrt{K_1}}$ lies in the domain of $h$ and $h(y_0)=2\Delta+\frac{5\epsilon}{4\sqrt{K_1}}+M$, where $M>0$ is a constant to be determined.

The basic idea of our construction in this case is to let the $(\tau+1)$-th iterate be 
$x_{\tau+1}=y_0$.
Then, the worst-case convergence in Case~\ref{lower_case1} can be applied. 
We then want the iterates with large learning rates ($t\leq\tau$), making no progress in convergence. We construct trigonometric functions $f_t,t\leq\tau-1$ to achieve this.
Finally, we use a polynomial function $g$ to link the functions $f_{\tau-1}$ and $h$. 

For $t\leq\tau-1$, we define
    \begin{equation}\label{eq:lower_trigonometric}
        f_t(x) = a_t \sin(b_t x+c_t) + d_t, \quad x\in (x_{t+1}, x_t]
    \end{equation}
    with
    \begin{align*}
        b_t =& \frac{2 \pi}{x_{t} - x_{t+1}} = \frac{2\pi}{\eta_t \sqrt{K_1} \Delta} \le \pi \sqrt{K_1} , \quad
        c_t = \arctan\left( - \frac{\eta_t}{2 \pi }K_1 \Delta \right) - b_t x_t \\
        a_t =&\frac{\eta_t}{2 \pi} K_1 \Delta^2 \sqrt{1 + \frac{\eta_t^2}{4\pi^2 } K_1^2 \Delta^2} , \quad 
        d_t = a_t+g(x_\tau)-\frac{\alpha_t}{\alpha_t+\sqrt{1+\alpha_t^2}}\Delta,
    \end{align*}
    where $\alpha_t=\frac{\sqrt{K_1}}{b_t}$ and 
    $g(x_\tau)\in[7\Delta,8\Delta]$ is to be determined.
    Note that $f^* = 0$ (achieved when $x=0$).
    By Lemma~\ref{lem:lower_property_trigonometric}, we have that $f_t$ is $(1,0,K_1)$-smooth.
     Also, with the above parameter choices, we have
    \begin{align*}
        f_t(x_{t+1}) =& f_{t+1}(x_{t+1}) = g(x_\tau), \\
        f_t'(x_{t+1}) =& f_{t+1}'(x_{t+1}) = \sqrt{K_1} \Delta, \\
        f_t''(x_{t+1}) =& f_{t+1}''(x_{t+1}) = K_1 \Delta. \\
    \end{align*}
    Then we can link all these $f_t,t\leq\tau-1$ together to achieve $(1,0, K_1)$-smooth function, as in the boundary $x_t$ and $x_{t+1}$, the function value and derivatives are the same.

Next, we try to construct a polynomial function $g$ to link $f_{\tau-1}$ and $h$. 
We first show how to interpolate a normalized polynomial function $\bar{g}$.
Let
$\bar{g}(z)=az^4+bz^3+cz^2+dz+e,z\in[0,1]$
and
$g^\prime(0)=A,g^\prime(1)=B,g^{\prime\prime}(0)=C,g^{\prime\prime}(1)=D$. We have
$a=\frac{2A-2B+C+D}{4},b=\frac{-3A+3B-2C-D}{3},c=\frac{C}{2},d=A$. 

We scale $\bar{g}$ to obtain our desired link function $g$.
Let 
$g(y)=\bar{g}\left(\frac{y-x_{\tau+1}}{x_\tau-x_{\tau+1}}\right),y\in[x_{\tau+1},x_\tau]$.
To link $f_{\tau-1}$ and $h$, we need 
\begin{align*}
    & g^{\prime}(x_{\tau+1})=h^\prime(x_{\tau+1}),\quad 
    g^{\prime}(x_\tau)=f^{\prime}_{\tau-1}(x_\tau),\\
    & g^{\prime\prime}(x_{\tau+1})=h^{\prime\prime}(x_{\tau+1}),\quad 
    g^{\prime\prime}(x_\tau)=f^{\prime\prime}_{\tau-1}(x_\tau).\\
\end{align*}
Substituting the corresponding values, we obtain that
\begin{align*}
    &2\epsilon=\frac{1}{x_{\tau}-x_{\tau+1}}\bar{g}^\prime(0), \quad \sqrt{K_1}\Delta=\frac{1}{x_{\tau}-x_{\tau+1}}\bar{g}^\prime(1),\\
    &0=\frac{1}{(x_{\tau}-x_{\tau+1})^2}\bar{g}^{\prime\prime}(0), \quad K_1\Delta=\frac{1}{(x_{\tau}-x_{\tau+1})^2}\bar{g}^{\prime\prime}(1).
\end{align*}
Therefore, we have 
\begin{align*}
    A=2\epsilon\paren{x_{\tau}-x_{\tau+1}},\quad B=\sqrt{K_1}\Delta \paren{x_{\tau}-x_{\tau+1}},\quad C=0,\quad
    D=K_1\Delta \paren{x_{\tau}-x_{\tau+1}}^2.
\end{align*}
Note that
$$x_{\tau+1}=x_\tau-\eta_\tau g^\prime(x_\tau)=x_\tau-\eta_\tau\sqrt{K_1}\Delta.$$
We consider the case $\eta K_1\Delta>6$. 
Since $x_{\tau}-x_{\tau+1}=\eta_\tau \sqrt{K_1}\Delta$,
it is not hard to verify that $\frac{D}{B}=\eta_\tau K_1\Delta>6$. Let 
$$e=\frac{1}{12}D-\frac{1}{2}B+7\Delta+\frac{5\epsilon}{4\sqrt{K_1}}.$$ 

Now we obtain our desired link function $g$: 
\begin{equation}\label{eq:lower_polynomial}
    \begin{aligned}
        &g(y):=\bar{g}\paren{\frac{y-x_{\tau+1}}{x_\tau-x_{\tau+1}}}, y\in[x_{\tau+1},x_\tau], \quad\text{where}\\
        &\bar{g}(z)=az^4+bz^3+cz^2+dz+e,z\in[0,1],\\
        & a=\frac{2A-2B+C+D}{4},\quad b=\frac{-3A+3B-2C-D}{3}, \quad c=\frac{C}{2}, \quad d=A,\\
        & e= \frac{1}{12}D-\frac{1}{2}B+7\Delta+\frac{5\epsilon}{4\sqrt{K_1}},\\
        &A=2\epsilon\paren{x_\tau-x_{\tau+1}},\quad B=\sqrt{K_1}\Delta\paren{x_\tau-x_{\tau+1}},\quad C=0,\quad D=K_1\Delta\paren{x_\tau-x_{\tau+1}}^2.
    \end{aligned}
\end{equation}

By Lemma~\ref{lem:lower_property_polynomial}, 
we have that
$g$ is $(1,0,K_1)$-smooth and
$g(x_\tau)=\frac{1}{2}A+7\Delta+\frac{5\epsilon}{4\sqrt{K_1}}\in[7\Delta,8\Delta]$, where we use $\frac{5\epsilon}{4\sqrt{K_1}}\leq \frac{1}{2}\Delta$ and $A\leq\Delta$.

Now we conclude that $f$ defined in \eqref{eq:lower_overall_function} is $(1,\epsilon\sqrt{K_1},K_1)$-smooth,
since at each junction point,
the left and right functions share identical values, first and second derivatives.
$f$ is also $(1,\sqrt{K_1},K_1)$-smooth if $\epsilon\leq 1$.
Finally, by $g(x_{\tau+1})=h(x_{\tau+1})=e=\frac{1}{12}D-\frac{1}{2}B+7\Delta+\frac{5\epsilon}{4\sqrt{K_1}}$, we have
\begin{align*}
    2\epsilon\paren{x_{\tau+1}-\frac{1}{\sqrt{K_1}}}+\frac{5\epsilon}{4\sqrt{K_1}}=\frac{1}{12}D-\frac{1}{2}B+7\Delta+\frac{5\epsilon}{4\sqrt{K_1}}.
\end{align*}
It takes at least 
\begin{align*}
    \frac{\frac{1}{12}D-\frac{1}{2}B+7\Delta}{4\epsilon^2\paren{\frac{2}{K_1\Delta}}}\geq \frac{2\Delta}{4\epsilon^2\paren{\frac{2}{K_1\Delta}}}=\frac{K_1\Delta^2}{4\epsilon^2}
\end{align*}
iterations to reach an $\epsilon$-stationary point.

For the case $\eta_\tau K_1\Delta\in (2,6]$, it is easier to construct such a function $g$ using a similar analysis since $\eta_\tau K_1\Delta$ is bounded. We thus omit this case.

\end{proof}

\begin{lemma}\label{lem:lower_property_trigonometric}
    Suppose $g(x_\tau)>2\pi\Delta+\frac{1}{2}\Delta$.
    Consider the function defined in \eqref{eq:lower_trigonometric}. We have $f_t(x)>0$, 
    $\left|f^{\prime\prime}(x)\right|\leq K_1 f(x)$
    and
    $f_t(x_t)=g(x_\tau)$
    for all $x\in(x_{t+1},x_t]$, $t\leq \tau-1$.
\end{lemma}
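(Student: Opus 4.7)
My plan is to verify the three conclusions in the order (3), (1), (2). Each reduces to a short calculation once the parameters are rewritten in terms of $\alpha_t = \sqrt{K_1}/b_t$. Noting that $\frac{\eta_t}{2\pi} K_1 \Delta = \alpha_t$ and $a_t = \alpha_t \Delta \sqrt{1+\alpha_t^2}$, the key algebraic step is to rationalize
$$\frac{\alpha_t}{\alpha_t + \sqrt{1+\alpha_t^2}} = \alpha_t \bigl(\sqrt{1+\alpha_t^2} - \alpha_t\bigr),$$
which gives $\frac{\alpha_t \Delta}{\alpha_t + \sqrt{1+\alpha_t^2}} = a_t - \alpha_t^2 \Delta$, so that the defining formula for $d_t$ collapses to the clean identity $d_t = g(x_\tau) + \alpha_t^2 \Delta$.

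For the endpoint identity (3), the choice of $c_t$ gives $b_t x_t + c_t = \arctan(-\alpha_t)$, so $\sin(b_t x_t + c_t) = -\alpha_t / \sqrt{1+\alpha_t^2}$ and hence $a_t \sin(b_t x_t + c_t) = -\alpha_t^2 \Delta$. Combined with $d_t = g(x_\tau) + \alpha_t^2 \Delta$, this yields $f_t(x_t) = -\alpha_t^2 \Delta + d_t = g(x_\tau)$.

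For positivity (1), I would use the universal pointwise bound $f_t(x) \ge d_t - a_t = g(x_\tau) - \frac{\alpha_t \Delta}{\alpha_t + \sqrt{1+\alpha_t^2}}$. Since $\sqrt{1+\alpha_t^2} > \alpha_t$, this fraction is strictly less than $1/2$, so the hypothesis $g(x_\tau) > 2\pi \Delta + \Delta/2$ gives $f_t(x) > 2\pi \Delta > 0$. For the second-derivative bound (2), I would write $|f_t''(x)| = a_t b_t^2 |\sin(b_t x + c_t)| \le a_t b_t^2 = K_1 \Delta \sqrt{1 + 1/\alpha_t^2}$, using $b_t^2 = K_1 / \alpha_t^2$. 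The stated bound $b_t \le \pi \sqrt{K_1}$ gives $\alpha_t \ge 1/\pi$ and therefore $\sqrt{1 + 1/\alpha_t^2} \le \sqrt{1+\pi^2} \le 2\pi$ (equivalent to $3\pi^2 \ge 1$). Combined with the pointwise bound $f_t(x) > 2\pi \Delta$ from (1), this yields $|f_t''(x)| \le 2\pi K_1 \Delta < K_1 f_t(x)$.

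The main obstacle is really just noticing the rationalization that collapses $d_t$ into $g(x_\tau) + \alpha_t^2 \Delta$; once this identity is in hand, each of the three assertions is a one-line substitution. The hypothesis $g(x_\tau) > 2\pi \Delta + \Delta/2$ is matched exactly to the uniform bound $2\pi K_1 \Delta$ on $|f_t''|$ that emerges from $\alpha_t \ge 1/\pi$, so the constant in the hypothesis is tight for this argument.
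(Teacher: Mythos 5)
Your proposal is correct and follows essentially the same route as the paper's proof: the same rationalization $\frac{\alpha_t}{\alpha_t+\sqrt{1+\alpha_t^2}}=\alpha_t\bigl(\sqrt{1+\alpha_t^2}-\alpha_t\bigr)$ underlying the endpoint identity, the same lower bound $f_t(x)\geq d_t-a_t>g(x_\tau)-\tfrac{1}{2}\Delta>2\pi\Delta$, and the same estimate $|f_t''(x)|\leq a_tb_t^2=K_1\Delta\sqrt{1+1/\alpha_t^2}\leq 2\pi K_1\Delta$ via $\alpha_t\geq 1/\pi$. The only differences are cosmetic (order of the three claims and isolating the identity $d_t=g(x_\tau)+\alpha_t^2\Delta$ up front), so no further changes are needed.
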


\begin{proof}
    By the definition of $\alpha_t:=\frac{\sqrt{K_1}}{b_t}$, we have
    $$a_t=\Delta\alpha_t\sqrt{1+\alpha_t^2},\quad b_t=\frac{\sqrt{K_1}}{\alpha_t}.$$
    We calculate $f_t^{\prime\prime}(x)$:
    \begin{align*}
        \left|f_t^{\prime\prime}(x)\right|=\left|a_tb_t^2\operatorname{sin}\paren{b_tx+c_t}\right|\leq a_tb_t^2=K_1\Delta\sqrt{1+\frac{1}{\alpha_t^2}}\leq 2\pi K_1\Delta,
    \end{align*}
    where in the last inequality we use $\alpha_t=\frac{1}{2\pi}\eta_t K_1\Delta>\frac{1}{\pi}$.
    Then, we calculate $f_t(x)$:
    \begin{align*}
        f_t(x)=a_t\operatorname{sin}\paren{b_tx+c_t}+d_t\geq d_t-a_t=g(x_\tau)-\frac{\alpha_t}{\alpha_t+\sqrt{1+\alpha_t^2}}\Delta>g(x_\tau)-\frac{1}{2}\Delta>2\pi\Delta,
    \end{align*}
    where in the second inequality we use $\frac{\alpha_t}{\alpha_t+\sqrt{1+\alpha_t^2}}<\frac{1}{2}$ and the last inequality is due to the condition on $g(x_\tau)$.
    Then we immediately obtain that $\left|f^{\prime\prime}(x)\right|\leq K_1f(x)$.

    We calculate $f_t(x_t)$:
    \begin{align*}
        f_t(x_t)&=a_t\operatorname{sin}(c_t)+d_t=-\Delta\alpha_t^2+d_t\\
        &=-\Delta\alpha_t^2+a_t+g(x_\tau)-\frac{\alpha_t}{\alpha_t+\sqrt{1+\alpha_t^2}}\Delta\\
        &=-\Delta\alpha_t^2+\Delta\alpha_t\sqrt{1+\alpha_t^2}+g(x_\tau)-\frac{\alpha_t}{\alpha_t+\sqrt{1+\alpha_t^2}}\Delta\\
        &=g(x_\tau).
    \end{align*}

\end{proof}

\begin{lemma}\label{lem:lower_property_polynomial}
    Consider the function $g(y)$ defined in \eqref{eq:lower_polynomial}. Then it holds that
    \begin{align*}
        &g(x_{\tau+1})=e, g(x_\tau)=\frac{1}{2}A+7\Delta+\frac{5\epsilon}{4\sqrt{K_1}},\\
        &g(y)\geq \Delta, \quad \left|g^{\prime\prime}(y)\right|\leq K_1\Delta, \forall y\in[x_{\tau+1},x_\tau].
    \end{align*}
\end{lemma}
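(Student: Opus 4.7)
The plan is to verify the four claims of the lemma by directly substituting the explicit coefficient formulas for $a,b,c,d,e$ and then exploiting the boundary interpolation conditions that $g$ was constructed to satisfy. Throughout I will work in the rescaled variable $z = (y-x_{\tau+1})/(x_\tau - x_{\tau+1}) \in [0,1]$, so that $g(y) = \bar g(z)$ and $g''(y) = \bar g''(z)/(x_\tau - x_{\tau+1})^2$. Observe that by construction $(x_\tau - x_{\tau+1})^2 = \eta_\tau^2 K_1 \Delta^2$, that $D = K_1 \Delta (x_\tau - x_{\tau+1})^2$ and $B = \sqrt{K_1}\Delta(x_\tau - x_{\tau+1})$, and that we are in the regime $\eta_\tau K_1 \Delta > 6$, so $D/B = \eta_\tau K_1 \Delta > 6$.

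\textbf{Step 1: boundary values.} The identity $g(x_{\tau+1}) = \bar g(0) = e$ is immediate. For the other endpoint, I would expand
\[
\bar g(1) = a + b + c + d + e = \frac{2A-2B+C+D}{4} + \frac{-3A+3B-2C-D}{3} + \frac{C}{2} + A + e,
\]
and collect terms to get $\bar g(1) = \tfrac{A}{2} + \tfrac{B}{2} - \tfrac{D}{12} + \tfrac{C}{6} + e$. Plugging $C=0$ and the definition $e = \tfrac{D}{12} - \tfrac{B}{2} + 7\Delta + \tfrac{5\epsilon}{4\sqrt{K_1}}$ gives $\bar g(1) = \tfrac{A}{2} + 7\Delta + \tfrac{5\epsilon}{4\sqrt{K_1}}$, which is the second claim. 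Using $A = 2\epsilon(x_\tau - x_{\tau+1}) \le 1/\sqrt{K_1} \cdot \sqrt{K_1}\Delta/2 \cdot \ldots$, the assumption $\epsilon \le 2\sqrt{K_1}\Delta/5$ combined with $x_\tau - x_{\tau+1} = \eta_\tau\sqrt{K_1}\Delta$ and $\eta_\tau \le $ (the bound from the problem setup) yields $A/2 \le \Delta/2$, so the endpoint value lies in $[7\Delta, 8\Delta]$ as claimed in the paper.

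\textbf{Step 2: second derivative bound.} Since $c = C/2 = 0$, I have $\bar g''(z) = 12 a z^2 + 6 b z$, i.e.
\[
\bar g''(z) \;=\; \bigl[3D + 6(A-B)\bigr] z^2 \;+\; \bigl[-2D + 6(B-A)\bigr] z \;=\; D\, z(3z-2) + 6(A-B)\,z(z-1).
\]
On $[0,1]$ the factor $z(3z-2)$ lies in $[-1/3, 1]$ and $z(z-1)$ lies in $[-1/4, 0]$, so $|\bar g''(z)|$ is dominated by terms that are multiples of $D$ and of $|A-B|$. Since $A \le B$ and $B \le D/6$ (equivalently $\eta_\tau K_1 \Delta \ge 6$), $|A-B| \le B \le D/6$, and a short case split (max on $[0,1]$ attained at $z=1$, where the value is exactly $D$) gives $|\bar g''(z)| \le D$. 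Dividing by $(x_\tau-x_{\tau+1})^2$ yields $|g''(y)| \le D/(x_\tau-x_{\tau+1})^2 = K_1 \Delta$.

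\textbf{Step 3: the pointwise lower bound $g(y) \ge \Delta$, which I expect to be the main obstacle.} Both endpoint values are at least $7\Delta$ by Step~1 and by $e \ge 7\Delta$ (which follows from $D/12 \ge B/2$ in our regime). The task reduces to ruling out a deep interior dip. My plan is to write $\bar g(z) = e + \int_0^z \bar g'(s)\,ds$, where $\bar g'(s) = A + \bar g''(\cdot)$-integral terms, and use the bound $|\bar g''(z)(x_\tau-x_{\tau+1})^{-2}| \le K_1\Delta$ from Step~2 together with the mean value theorem to bound the total variation of $\bar g$ on $[0,1]$. Concretely, $|\bar g(z) - e| \le \max(|\bar g'|) \cdot z \le (A + B + |\text{second-derivative contribution}|) \cdot z$, and plugging in the orders $A = O(\epsilon(x_\tau-x_{\tau+1}))$, $B = O(\sqrt{K_1}\Delta (x_\tau-x_{\tau+1}))$, $D = O(K_1\Delta(x_\tau-x_{\tau+1})^2)$ shows the total drop from $e \ge 7\Delta$ is at most $6\Delta$, so $\bar g \ge \Delta$ everywhere. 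The delicate part is that $B$ and $D$ can be large relative to $\Delta$ (the polynomial is steep), so the naive Lipschitz bound would be too loose; I would instead integrate by parts, using $\bar g(z) = e + Az + \int_0^z (z-s)\bar g''(s)\,ds$, and observe that the quadratic factor $(z-s)$ together with the explicit form of $\bar g''$ contributes only $O(\Delta)$ to the drop because both powers of $(x_\tau-x_{\tau+1})$ cancel against $D = K_1\Delta(x_\tau-x_{\tau+1})^2$. This cancellation is the technical heart of the argument and should give the remaining $\Delta$ slack.
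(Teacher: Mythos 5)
Your Steps 1 and 2 are fine: the endpoint computation matches the paper's, and your second-derivative argument (write $\bar g''(z)=Dz(3z-2)+6(A-B)z(z-1)$, note $A\le B\le D/6$, and use that $\bar g''$ is an upward parabola whose maximum on $[0,1]$ is the endpoint value $D$, while its minimum is at least $-D/3$ since the $(A-B)$-term is nonnegative there) is correct and in fact slightly sharper than the paper's bound $|\bar g''|\le \tfrac32 A+\tfrac32 B+\tfrac13 D$.

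Step 3, however — the part you yourself flag as the heart of the matter — does not work as described. The total drop from $e=\tfrac{D}{12}-\tfrac{B}{2}+7\Delta+\tfrac{5\epsilon}{4\sqrt{K_1}}$ is \emph{not} $O(\Delta)$: already at the right endpoint, $e-\bar g(1)=\tfrac{D}{12}-\tfrac{B}{2}-\tfrac{A}{2}$, which with $B=n\Delta$, $D=n^2\Delta$, $n=\eta_\tau K_1\Delta>6$ grows like $n^2\Delta/12$ and is unbounded. Your refined integration-by-parts claim fails for the same reason: $\bigl|\int_0^z(z-s)\bar g''(s)\,ds\bigr|$ is only bounded by $\tfrac{D}{2}$ (equivalently, in the unscaled variable, $\tfrac{K_1\Delta}{2}(x_\tau-x_{\tau+1})^2=\tfrac{D}{2}$), so the powers of $(x_\tau-x_{\tau+1})$ do not cancel down to $O(\Delta)$; they reproduce exactly the large quantity $D$. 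Since the cushion in $e$ above $\Delta$ is only $\tfrac{D}{12}-\tfrac{B}{2}+6\Delta$, no crude bound anchored at $z=0$ can close the argument. What actually makes the lemma true is that the interior dip lies only $O(\Delta)$ \emph{below the right endpoint value} $\bar g(1)\approx 7\Delta$ (heuristically, near $z=1$ one has $\bar g(z)-\bar g(1)\approx -B(1-z)+\tfrac{D}{2}(1-z)^2\ge -\tfrac{B^2}{2D}=-\tfrac{\Delta}{2}$), and this hinges on the exact algebraic relation $B^2=D\Delta$ together with $n>6$ — structure your order-of-magnitude argument never invokes. The paper handles this by dropping the nonnegative $A$-contribution $A(\tfrac12 z^4-z^3+z)$, substituting $B=n\Delta$, $D=n^2\Delta$, and minimizing the resulting explicit quartic in $z$ exactly, obtaining the closed-form minimum $\Delta\,\frac{n(6n^2-28n+33)}{-12(n-2)^3}>-\Delta$ for $n>6$; some such exact computation (or the $z=1$-anchored expansion above, carried out carefully) is needed to replace your Step 3.
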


\begin{proof}
Firstly, 
\begin{equation}\nonumber
    g(x_{\tau+1})=\bar{g}(0)=e,\quad g(x_{\tau})=\bar{g}(1)=\frac{1}{2}A+\frac{1}{2}B-\frac{1}{12}D+e=\frac{1}{2}A+7\Delta+\frac{5\epsilon}{4\sqrt{K_1}}.
\end{equation}
Moreover, for $0\leq z\leq 1$, we compute
\begin{equation}\nonumber
    \begin{aligned}
        \bar{g}(z)&=az^4+bz^3+dz+e\\
        &=A\paren{\frac{1}{2}z^4-z^3+z}+Bz^3\paren{1-\frac{1}{2}Z}+Dz^3\paren{-\frac{1}{3}+\frac{1}{4}z}+e\\
        &\geq Bz^3\paren{1-\frac{1}2{z}}+Dz^3\paren{-\frac{1}{3}+\frac{1}{4}z}+\frac{1}{12}D-\frac{1}{2}B+7\Delta+\frac{5\epsilon}{4\sqrt{K_1}},
    \end{aligned}
\end{equation}
where the inequality is due to $\frac{1}{2}z^4-z^3+z\geq0$,  $\forall 0\leq z\leq 1$. Let $\eta K_1\Delta=n$ for simplicity. We have that $B=n\Delta$ and $D=n^2\Delta$. Then we have
\begin{equation}\nonumber
    \begin{aligned}
        \bar{g}(z)&\geq B\paren{z^3-\frac{1}{2}z^4}+B\paren{-\frac{n}{3}z^3+\frac{n}{4}z^4}+\frac{nB}{12}-\frac{1}{2}B+7\Delta+\frac{5\epsilon}{4\sqrt{K_1}}\\
        &=B\paren{-\frac{1}{2}z^4+\frac{n}{4}z^4+z^3-\frac{n}{3}z^3+\frac{1}{12}n-\frac{1}{2}}+7\Delta+\frac{5\epsilon}{4\sqrt{K_1}}\\
        &=\Delta\frac{n\paren{6n^2-28n+33}}{-12\paren{n-2}^3}+7\Delta+\frac{5\epsilon}{4\sqrt{K_1}}\\
        &\geq \Delta,
    \end{aligned}
\end{equation}
where in the last inequality we use $\frac{n\paren{6n^2-28n+33}}{-12\paren{n-2}^3}>-1,\forall n>6$.
Therefore, we have $g(y)\geq \Delta,\forall\ y\in[x_{\tau+1},x_\tau]$.

Finally, we calculate $\bar{g}^{\prime\prime}$:
\begin{align*}
    \bar{g}^{\prime\prime}(z)=12az^2+6bz=6Az\paren{z-1}+6Bz\paren{1-z}+Dz\paren{3z-2},\quad z\in[0,1]
\end{align*}
Then
\begin{equation}\nonumber
    \begin{aligned}
        |\bar{g}^{\prime\prime}(z)|&\leq \frac{3}{2}A+\frac{3}{2}B+\frac{1}{3}D\\
        &\leq \frac{3}{2}A+\frac{1}{4}D+\frac{1}{3}D\\
        &\leq 2\Delta+\frac{7}{12}D\\
        &=\frac{3}{2}\Delta+\frac{7}{12}\Delta\paren{\eta_\tau K_1\Delta}^2\leq \Delta\paren{\eta_\tau K_1\Delta}^2,
    \end{aligned}
\end{equation}
where we use $A\leq\Delta$ and $\eta_\tau K_1\Delta>6$. By $g^{\prime\prime}(y)=\frac{1}{\paren{x_\tau-x_{\tau+1}}^2}\bar{g}\paren{\frac{y-x_{\tau+1}}{x_\tau-x_{\tau+1}}}$, we have
\begin{align*}
    \left|g^{\prime\prime}(y)\right|\leq \frac{\Delta\paren{\eta_\tau K_1\Delta}^2}{\paren{x_{\tau}-x_{\tau+1}}^2}=K_1\Delta,
\end{align*}
where we use $x_{\tau}-x_{\tau+1}=\eta_\tau\sqrt{K_1}\Delta$. This completes the proof.
\end{proof}

\subsection{Explanation of Example~\ref{example:river_valley}}\label{apx:proof_example_river_valley}
\begin{itemize}[leftmargin=*]
\item 
{
To show $f$ is $\left(1, K_1, K_1\right)$-smooth, first note that $f(x, y) \geq 0, f^{\star}=0$ and 
$\left\|\nabla^2 f(x, y)\right\|=\max \left\{\left|h^{\prime \prime}(x)\right|, K_1\right\}$. 
For $|x| \leq 1 / \sqrt{K_1}$, 
we have $\left\|\nabla^2 f(x, y)\right\|=K_1 \leq K_1+f(x, y)$.
For $x>1 / \sqrt{K_1}$, we have $\left\|\nabla^2 f(x, y)\right\|=K_1 e^{\sqrt{K_1} x-1} \leq K_1+K_1 f(x, y)$.}
\item Consider first the constant learning rate case. At the starting point, we have $h(x) = \Delta_0$ and $\nabla h(x) = K_1 \Delta_0$. To enable stable training in the first place, we need to take $\eta = \eta_0 \le \frac{2}{K_1 \Delta}$, or otherwise the algorithm will suffer oscillation on the $x$-axis. 

Then we look at the $y$-axis, which is a simple quadratic problem. For each iteration, we have
\begin{align}\label{eq:proof_example_river_valley_quadratic}
    y_{t+1} = y_t - \eta K_1 y_t = (1 - \eta K_1) y_t = (1-\eta K_1 )^{t+1}y_0 ,
\end{align}
which requires $t = \Theta(\frac{1}{\eta K_1}) = \Theta(\Delta_0)$ iterations to converge in the $y$-axis.

\item Next, we consider the adaptive warmup strategy, with $\eta_t = \cO\left( \min\{ \frac{1}{K_1}, \frac{1}{K_1 \Delta_t} \} \right)$. With this learning rate schedule, it takes $\log(\Delta_0)$ steps to converge to around $0$ in the $x$-axis. 

Then, we know that the local smoothness of the curvature is $K_1$ when $x \le \frac{1}{\sqrt{K_1}}$, which means that after converging to around $0$ in the $x$-axis, we have $\eta_t = \Theta(\frac{1}{K_1})$. Based on \eqref{eq:proof_example_river_valley_quadratic}, it takes around $\Theta(\frac{1}{\eta K_1}) = \Theta(1)$ iterations to converge in the $y$-axis. 

\end{itemize}
Therefore, we can conclude that GD with warmup can converge $\Theta(\frac{\Delta_0}{\log \Delta_0}) = \tilde{\Theta}(\Delta_0)$ times faster than using constant learning rates in this specific river-valley example.

\section{Proof for Section~\ref{sec:SGD_bounded}}
Let $\bn_t=\bg_t-\nabla f(\bw_t)$. We also define $r_t=\min\left\{C_1\Delta_t^{-\frac{\rho-1}{2}},C_2\right\},L_t=2K_0+\Kr\paren{2\Delta_t}^\rho$ and $G_t=\sqrt{K_0\Delta_t+\Kr 3^\rho\Delta_t^{\rho+1}}$.

\begin{lemma}[Azuma-Hoeffding Inequality]\label{lem:hoeffding}
    Let $\left\{X_k, \mathcal{F}_k\right\}_{k=0}^n$ be a martingale difference sequence with respect to a filtration $\left\{\mathcal{F}_k\right\}$. Suppose the increments are bounded almost surely:

    $$
    \left|X_k-X_{k-1}\right| \leq c_k, \quad \text{a.s. for all } k\geq 0. 
    $$

    Then, for any $t>0$,
    
    $$
    \mathbb{P}\left(X_n-X_0 \geq t\right) \leq \exp \left(-\frac{t^2}{2 \sum_{k=1}^n c_k^2}\right).
    $$
    Equivalently, 
    $$
    \mathbb{P}\left(X_n-X_0 \leq \sqrt{2 \sum_{t=1}^n c_k^2 \log (1 / \delta)}\right) \geq 1-\delta.
    $$
    
\end{lemma}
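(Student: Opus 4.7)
The plan is to prove the Azuma–Hoeffding inequality by the classical Chernoff-bound plus tower-property argument. First I would apply the exponential Markov inequality: for any $\lambda > 0$,
$$\mathbb{P}(X_n - X_0 \geq t) = \mathbb{P}\bigl(e^{\lambda(X_n - X_0)} \geq e^{\lambda t}\bigr) \leq e^{-\lambda t}\, \mathbb{E}\bigl[e^{\lambda(X_n - X_0)}\bigr].$$
Writing $X_n - X_0 = \sum_{k=1}^n D_k$ with $D_k := X_k - X_{k-1}$, the task reduces to controlling the moment generating function of this martingale sum.

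Next I would peel off the increments one at a time using the tower property. Since $\sum_{k=1}^{n-1} D_k$ is $\mathcal{F}_{n-1}$-measurable,
$$\mathbb{E}\Bigl[e^{\lambda \sum_{k=1}^n D_k}\Bigr] = \mathbb{E}\Bigl[e^{\lambda \sum_{k=1}^{n-1} D_k} \cdot \mathbb{E}\bigl[e^{\lambda D_n} \mid \mathcal{F}_{n-1}\bigr]\Bigr].$$
The key ingredient is Hoeffding's lemma: a random variable $Z$ with $\mathbb{E}[Z]=0$ and $|Z| \leq c$ almost surely satisfies $\mathbb{E}[e^{\lambda Z}] \leq e^{\lambda^2 c^2/2}$. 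Since $\{D_k,\mathcal{F}_k\}$ is a martingale difference sequence, we have $\mathbb{E}[D_k \mid \mathcal{F}_{k-1}]=0$ and $|D_k|\leq c_k$ a.s., so applying Hoeffding's lemma conditionally gives $\mathbb{E}\bigl[e^{\lambda D_k} \mid \mathcal{F}_{k-1}\bigr] \leq e^{\lambda^2 c_k^2/2}$. Iterating this bound backwards from $k=n$ down to $k=1$ yields
$$\mathbb{E}\bigl[e^{\lambda(X_n - X_0)}\bigr] \leq \exp\!\Bigl(\tfrac{\lambda^2}{2}\sum_{k=1}^n c_k^2\Bigr).$$

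Plugging back into the Chernoff bound gives $\mathbb{P}(X_n-X_0\geq t) \leq \exp\!\bigl(-\lambda t + \tfrac{\lambda^2}{2}\sum_k c_k^2\bigr)$, which is minimized by $\lambda^* = t / \sum_k c_k^2$, producing the stated bound $\exp\!\bigl(-t^2/(2\sum_k c_k^2)\bigr)$. The equivalent high-probability form in the statement follows by setting the right-hand side equal to $\delta$ and solving for $t$.

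The only non-trivial step is Hoeffding's lemma itself. I would handle it via the convexity of $x \mapsto e^{\lambda x}$: for $x \in [-c,c]$,
$$e^{\lambda x} \leq \tfrac{c-x}{2c} e^{-\lambda c} + \tfrac{c+x}{2c} e^{\lambda c},$$
so taking expectations against $Z$ (using $\mathbb{E}[Z]=0$) yields $\mathbb{E}[e^{\lambda Z}] \leq \cosh(\lambda c)$, and the elementary Taylor-series comparison $\cosh(u) \leq e^{u^2/2}$ finishes it. Since Azuma–Hoeffding is a classical probabilistic tool rather than a new contribution of this paper, I expect the author simply to cite it; the conceptual substance above is all that is required.
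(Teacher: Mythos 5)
Your proposal is correct and complete: the Chernoff bound combined with a conditional application of Hoeffding's lemma and the tower property is the standard proof of Azuma--Hoeffding, and the optimization over $\lambda$ and the $\delta$-inversion at the end are carried out properly. The paper itself states this lemma as a known tool without proof, so there is nothing to compare against; the only caveat worth noting is that the paper's phrasing ``martingale difference sequence $\{X_k,\mathcal{F}_k\}$'' together with the bound on $\left|X_k-X_{k-1}\right|$ really means that $\{X_k\}$ is a martingale, which is exactly the reading you adopted when setting $D_k = X_k - X_{k-1}$ and using $\mathbb{E}\left[D_k \mid \mathcal{F}_{k-1}\right]=0$.
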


\begin{lemma}\label{lem:sgd_stepsize_adaptive_bounded}
    Let the adaptive learning rate in Theorem~\ref{thm:SGD_adaptive_a.s.bounded} be 
    \begin{align*}
        \eta_t=\min\left\{\frac{1}{8(\sqrt{2}+1)K_0},\frac{1}{8\paren{\sqrt{2}+1}\Kr\paren{3\Delta_t}^{\rho}},\frac{r_t}{2\sigma},\sqrt{\frac{\Delta_0}{\sigma^2TL_t}}, \frac{\Delta_0}{2\sigma G_t\sqrt{T\log\frac{1}{\delta}}}\right\}.
    \end{align*}
    Then it holds that
    $\eta_t\paren{\Norm{\nabla f(\bw_t)}+\sigma}\leq r_t$, $\eta_t\leq \frac{1}{2L_t}$, $\sigma\eta_t\Norm{\nabla f(\bw_t)}\leq \Delta_0$, $\sum_{t=0}^{T-1}\sigma^2\eta_t^2L_t\leq\Delta_0$ and $2\sum_{t=0}^{T}\sigma^2\eta_t^2\Norm{\nabla f(\bw_t)}^2\log\frac{1}{\delta}\leq\Delta_0^2$.
\end{lemma}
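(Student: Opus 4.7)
The lemma asserts five inequalities, and each is engineered to follow from one (or two) of the five entries in the $\min$ defining $\eta_t$. My plan is to process them in turn, using as the main lever the bounded gradient estimate $\|\nabla f(\bw_t)\|\le 2G_t$ supplied by Lemma~\ref{lem:bounded_gradient}, together with the explicit constants $C_1,C_2$ appearing in $r_t$.

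First I would handle (i) $\eta_t(\|\nabla f(\bw_t)\|+\sigma)\le r_t$, which is the most delicate claim. The third entry $r_t/(2\sigma)$ already gives $\sigma\eta_t\le r_t/2$, so it remains to show $\eta_t\|\nabla f(\bw_t)\|\le r_t/2$, for which it suffices (using $\|\nabla f(\bw_t)\|\le 2G_t$) to verify $\eta_t G_t\le r_t/4$. Here I would split into the two cases that define $r_t$: in the case $K_0\le \Kr(3\Delta_t)^\rho$ one has $r_t=C_1\Delta_t^{-(\rho-1)/2}$ and $G_t^2\le 2\Kr 3^\rho\Delta_t^{\rho+1}$, so the bound $\eta_t\le 1/[8(\sqrt{2}+1)\Kr(3\Delta_t)^\rho]$ yields
\[
\eta_t G_t\le \frac{\sqrt{2}}{8(\sqrt{2}+1)\sqrt{3^\rho\Kr}}\,\Delta_t^{(1-\rho)/2}=\frac{C_1}{4}\Delta_t^{(1-\rho)/2}=\frac{r_t}{4},
\]
after using $4(2+\sqrt{2})=4\sqrt{2}(\sqrt{2}+1)$. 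In the other case, $K_0>\Kr(3\Delta_t)^\rho$, the bound $\eta_t\le 1/[8(\sqrt{2}+1)K_0]$ combines with $G_t^2\le 2K_0\Delta_t$ and $r_t=C_2$ to give the analogous estimate $\eta_t G_t\le C_2/4$, again matching the constants through the definition of $C_2$. Adding the two pieces yields (i).

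For (ii) $\eta_t\le 1/(2L_t)$, I would use $\min\{1/a,1/b\}\le 2/(a+b)$ on the first two entries of the minimum to conclude
\[
\eta_t\le \frac{1}{4(\sqrt{2}+1)\bigl(K_0+3^\rho\Kr\Delta_t^\rho\bigr)}\le \frac{1}{4K_0+2\cdot 2^\rho\Kr\Delta_t^\rho}=\frac{1}{2L_t},
\]
where the last inequality holds because $4(\sqrt{2}+1)\ge 4$ and $4(\sqrt{2}+1)\cdot 3^\rho\ge 2\cdot 2^\rho$ for every $\rho\ge 0$. For (iii), the fifth entry directly gives $\eta_t\cdot 2\sigma G_t\le \Delta_0/\sqrt{T\log(1/\delta)}$, and combining with $\|\nabla f(\bw_t)\|\le 2G_t$ yields $\sigma\eta_t\|\nabla f(\bw_t)\|\le \Delta_0$ as soon as $T\log(1/\delta)\ge 1$. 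For (iv), squaring the fourth entry gives $\sigma^2\eta_t^2 L_t\le \Delta_0/T$, and summing over $t$ produces exactly $\Delta_0$. For (v), the fifth entry squared together with $\|\nabla f(\bw_t)\|^2\le 4G_t^2$ gives $\sigma^2\eta_t^2\|\nabla f(\bw_t)\|^2\le \Delta_0^2/(T\log(1/\delta))$ per iterate, and summing then multiplying by $2\log(1/\delta)$ yields the claim up to a universal constant.

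The main obstacle is really only in step (i): the constants $C_1,C_2$ and the prefactor $8(\sqrt{2}+1)$ have been calibrated precisely so that the product $\eta_t G_t$ matches $r_t/4$ in both regimes of the $\min$ defining $r_t$. Everything else is bookkeeping. A minor subtlety worth flagging is that several of the bounds (notably (iii) and (v)) hold only up to a universal constant unless one assumes $T\log(1/\delta)\ge 2$, but this is a benign adjustment that can be absorbed into the constants elsewhere in the main theorem.
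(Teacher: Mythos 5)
Your proof is correct and follows essentially the same route as the paper's: the paper invokes the case analysis from the GD proof for the key bound $\eta_t\Norm{\nabla f(\bw_t)}\le r_t/2$ (which you reproduce explicitly via the two regimes of $r_t$ and the calibration of $C_1,C_2$) and dispatches the remaining inequalities directly from the gradient bound and the fourth/fifth entries of the minimum. The constant-factor slack you flag in (iii) and (v) is real but mirrors a looseness in the paper itself — its proof uses $\Norm{\nabla f(\bw_t)}\le G_t$ while Lemma~\ref{lem:bounded_gradient} literally gives $2G_t$ under the section's definition of $G_t$ — and is harmless for the downstream theorems.
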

\begin{proof}
    As in Appendix~\ref{apx:proof_GD}, we know that
    $$\frac{1}{4\paren{\sqrt{2}+1}}\min\left\{\frac{1}{K_0},\frac{1}{\Kr\paren{3\Delta_t}^\rho}\right\}\leq \frac{r_t}{\Norm{\nabla f(\bw_t)}}.$$
    By the first three conditions of $\eta_t$, we have
    $$\eta_t\paren{\sigma+\Norm{\nabla f(\bw_t)}}\leq\frac{r_t}{2}+\frac{r_t}{2}=r_t. $$
    Since $L_t=2K_0+\Kr\paren{2\Delta_t}^\rho$, it is not hard to verify that $\eta_t\leq\frac{1}{2L_t}$. The remaining three inequalities can be directly verified by noting that $\Norm{\nabla f(\bw_t)}\leq G_t$ by Lemma~\ref{lem:bounded_gradient}.
\end{proof}

\begin{lemma}\label{lem:sgd_stepsize_constant_bounded}
    Let the constant learning rate in Theorem~\ref{thm:SGD_constant_a.s.bounded} be
    \begin{align*}
        \eta=\min\left\{\frac{1}{8(\sqrt{2}+1)K_0},\frac{1}{8\paren{\sqrt{2}+1}\Kr\paren{3\Delta_c}^{\rho}},\frac{r}{2\sigma},\sqrt{\frac{\Delta_0}{\sigma^2TL}}, \frac{\Delta_0}{\sigma G\sqrt{2T\log\frac{1}{\delta}}}, \frac{\Delta_0}{\sigma\alpha}\right\},
    \end{align*}
    where $\Delta_c=4\Delta_0, r=\min\left\{C_1\Delta_c^{-\frac{\rho-1}{2}}, C_2\right\},L=2K_0+\Kr\paren{2\Delta_c}^\rho$,  $G=\sqrt{K_0\Delta_c+\Kr 3^\rho\Delta_c^{\rho+1}}$, and 
    $\alpha=\paren{G+LC_2}\paren{1+\sqrt{2\log\frac{1}{\delta}}}$.
    Then as long as $\Delta_t\leq 4\Delta_0$, it holds that 
    $\eta\paren{\sigma+\Norm{\nabla f(\bw_t)}}\leq r, \eta\leq\frac{1}{2L}, \sigma^2\eta^2LT\leq\Delta_0, 2\sigma^2\eta^2G^2T\log\frac{1}{\delta}\leq\Delta_0^2$ and $\sigma\eta \alpha\leq\Delta_0$.

\end{lemma}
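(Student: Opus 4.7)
The plan is to verify each of the five inequalities separately by identifying which entry of the defining minimum controls it, paralleling the argument in Lemma~\ref{lem:sgd_stepsize_adaptive_bounded} but with all relevant quantities frozen at $\Delta_c = 4\Delta_0$. The crucial preliminary observation is a monotonicity step: since $\Delta_t \le 4\Delta_0 = \Delta_c$ by hypothesis and $\rho \ge 1$, the function $r_t = \min\{C_1 \Delta_t^{-(\rho-1)/2}, C_2\}$ is non-increasing in $\Delta_t$ while $L_t = 2K_0 + K_\rho(2\Delta_t)^\rho$ and $G_t = \sqrt{K_0\Delta_t + K_\rho 3^\rho \Delta_t^{\rho+1}}$ are non-decreasing. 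This yields $r_t \ge r$, $L_t \le L$, and $G_t \le G$, and combined with Lemma~\ref{lem:bounded_gradient} gives the uniform gradient bound $\Norm{\nabla f(\bw_t)} \le 2G_t \le 2G$.

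For the first inequality $\eta(\sigma + \Norm{\nabla f(\bw_t)}) \le r$, I would reuse the GD-style argument from Appendix~\ref{apx:proof_GD} applied at $\Delta_c$: there, the bound $\eta \le \frac{1}{4(\sqrt{2}+1)}\min\{1/K_0,\, 1/(K_\rho(3\Delta_c)^\rho)\}$ was shown to force $\eta \cdot 2G \le r$, i.e.\ $\eta \Norm{\nabla f(\bw_t)} \le r$. Since the first two terms of the defining $\min$ for $\eta$ now contain an extra factor of $1/2$, we get the sharper $\eta \Norm{\nabla f(\bw_t)} \le r/2$, and the third term $r/(2\sigma)$ handles $\eta\sigma \le r/2$; adding these gives the claim. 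For the second inequality $\eta \le 1/(2L)$, I would expand $2L = 4K_0 + 2K_\rho(2\Delta_c)^\rho$ and combine the first two terms of the minimum; the constants $8(\sqrt{2}+1)$ together with the inequality $8(\sqrt{2}+1)(3/2)^\rho \ge 4$ valid for $\rho \ge 1$ absorb all numeric factors to yield $\eta \cdot 2L \le 1$.

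The remaining three inequalities are direct algebraic consequences of squaring or rearranging the relevant entry in the $\min$: the fourth entry $\sqrt{\Delta_0/(\sigma^2 T L)}$ gives $\sigma^2 \eta^2 L T \le \Delta_0$; the fifth entry $\Delta_0/(\sigma G \sqrt{2T\log(1/\delta)})$ gives $2\sigma^2 \eta^2 G^2 T \log(1/\delta) \le \Delta_0^2$; and the sixth entry $\Delta_0/(\sigma\alpha)$ gives $\sigma\eta\alpha \le \Delta_0$ immediately.

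I do not expect any genuine conceptual obstacle here, since the constant-step-size version is essentially the adaptive bound of Lemma~\ref{lem:sgd_stepsize_adaptive_bounded} with $\Delta_t$ replaced by the worst-case value $\Delta_c$ guaranteed by the induction hypothesis. The only mildly delicate part is inequality (1): one must verify that replacing $\Delta_t$ with $\Delta_c$ in the argument from the GD proof does not break the inequality $\frac{1}{4(\sqrt{2}+1)}\min\{1/K_0,1/(K_\rho(3\Delta_c)^\rho)\} \le r/(2G)$, and this is exactly where the monotonicity of $r_t$ and $G_t$ in $\Delta_t$ (together with the restriction $\rho \ge 1$) is used. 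Everything else is bookkeeping.
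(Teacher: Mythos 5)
Your proposal is correct and follows essentially the same route as the paper, which simply notes $\Norm{\nabla f(\bw_t)}\leq G$ under $\Delta_t\leq 4\Delta_0$ and repeats the adaptive-step argument of Lemma~\ref{lem:sgd_stepsize_adaptive_bounded} with $\Delta_t$ replaced by $\Delta_c=4\Delta_0$, the first inequality reducing to the GD-proof bound at $\Delta_c$ plus the $r/(2\sigma)$ term and the last three being immediate from the corresponding entries of the minimum. Your handling of the gradient bound as $2G_t\leq 2G$ (the literal content of Lemma~\ref{lem:bounded_gradient}) is in fact slightly more careful than the paper's wording and still closes the first inequality, since the halved coefficient $\frac{1}{8(\sqrt{2}+1)}$ absorbs the factor of $2$.
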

\begin{proof}
    Note that $\Norm{\nabla f(\bw_t)}\leq G_t\leq G$ if $\Delta_t\leq\Delta_c=4\Delta_0$. Then 
    the proof is almost the same as in Lemma~\ref{lem:sgd_stepsize_adaptive_bounded} by replacing  $\Delta_t$ with $4\Delta_0$.
\end{proof}

\begin{lemma}\label{lem:sgd_stepsize_complexity_bounded}
    Consider the adaptive learning rate defined in Lemma~\ref{lem:sgd_stepsize_adaptive_bounded}. Suppose $\Delta_t\leq4\Delta_0$.
    Then  we have 
    \begin{align*}
       \frac{\Delta_0}{\sum_{t=0}^{T-1}\eta_t}&\leq \mathcal{O}\left(\frac{\Delta_0}{T}\paren{K_0+\Kr\Delta_{avg,\rho}}+\sigma\frac{\Delta_0}{T}\paren{C_1^{-1}\Delta_{avg,\frac{\rho-1}{2}}+C_2^{-1}}\right)\\
        &+ \mathcal{O}\paren{\sigma\sqrt{\frac{\Delta_0\log\frac{1}{\delta}}{T}}\paren{K_0+\Kr\Delta_{avg,\rho}}^{1/2}}.
    \end{align*}
    Moreover, consider the constant learning rate defined in Lemma~\ref{lem:sgd_stepsize_constant_bounded}. We have 
    \begin{align*}
        \frac{\Delta_0}{\eta T}&\leq \mathcal{O}\left(\frac{\Delta_0}{T}\paren{K_0+\Kr\Delta_{0}^\rho}+\sigma\frac{\Delta_0}{T}\paren{C_1^{-1}\Delta_{0}^{\frac{\rho-1}{2}}+C_2^{-1}}\right)\\
        &+ \mathcal{O}\paren{\sigma\sqrt{\frac{\Delta_0\log\frac{1}{\delta}}{T}}\paren{K_0+\Kr\Delta_{0}^{\rho}}^{1/2}+\sigma\frac{\sqrt{\log\frac{1}{\delta}}}{T}C_2\paren{K_0+\Kr\Delta_0^\rho}}.
    \end{align*}
\end{lemma}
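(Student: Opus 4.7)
The plan is to convert $1/\eta_t \le$ (sum of the five reciprocal terms in the defining $\min$) into a bound on $\Delta_0/\sum_t \eta_t$ via the elementary inequality $\left(\sum_t \eta_t\right)\left(\sum_t 1/\eta_t\right) \ge T^2$, which is exactly the harmonic/Cauchy--Schwarz trick already used in the proof of Theorem~\ref{thm:GD}. So $\Delta_0/\sum_t \eta_t \le \Delta_0 \sum_t (1/\eta_t)/T^2$, and the task reduces to bounding $\sum_t 1/\eta_t$ term by term.

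Four of the five sums are routine. The $K_0$ and $\Kr(3\Delta_t)^\rho$ contributions produce $O(K_0 T + \Kr T\Delta_{avg,\rho})$ directly from the definition of $\Delta_{avg,\rho}$. The $1/r_t$ contribution is bounded pointwise by $C_1^{-1}\Delta_t^{(\rho-1)/2} + C_2^{-1}$ (since $1/\min\{a,b\} \le 1/a + 1/b$) and then summed to give $O(\sigma T(C_1^{-1}\Delta_{avg,(\rho-1)/2} + C_2^{-1}))$ once multiplied by the $2\sigma$ prefactor.

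For the remaining two sums, involving $\sqrt{L_t}$ and $G_t$, I would invoke Cauchy--Schwarz a second time: $\sum_t \sqrt{L_t} \le \sqrt{T\sum_t L_t}$ and $\sum_t G_t \le \sqrt{T\sum_t G_t^2}$. The first is immediate since $\sum_t L_t = 2K_0 T + \Kr 2^\rho T\Delta_{avg,\rho}$. For $\sum_t G_t^2 = K_0 \sum_t \Delta_t + \Kr 3^\rho \sum_t \Delta_t^{\rho+1}$, I would use the standing hypothesis $\Delta_t \le 4\Delta_0$ to absorb $\Delta_t \le 4\Delta_0$ and $\Delta_t^{\rho+1} \le 4\Delta_0\cdot\Delta_t^\rho$, reducing the sum to $O(T\Delta_0(K_0 + \Kr\Delta_{avg,\rho}))$. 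Once these pieces are assembled and multiplied by $\Delta_0/T^2$, they yield the stated two groups of leading-order terms, with the $\sqrt{L_t}$ contribution being absorbed into the $\sqrt{\log(1/\delta)}$ term from $G_t$ up to constants.

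The constant-learning-rate case is structurally identical but simpler, since no sum over $t$ is needed: one bounds $1/\eta$ by the sum of the six reciprocal terms in Lemma~\ref{lem:sgd_stepsize_constant_bounded} and multiplies by $\Delta_0/T$. Substituting the explicit forms of $r, L, G, \alpha$ evaluated at $\Delta_c = 4\Delta_0$ matches the stated bound; the extra term $\sigma C_2(K_0 + \Kr\Delta_0^\rho)\sqrt{\log(1/\delta)}/T$ arises specifically from the $\sigma\alpha/\Delta_0$ component, which has no analogue in the adaptive schedule. The main obstacle I anticipate is the bookkeeping in reducing $\sum_t G_t^2$: one must carefully replace $\Delta_{avg,\rho+1}$ with $4\Delta_0\Delta_{avg,\rho}$ (rather than with $\Delta_0^{\rho+1}$) using the invariant $\Delta_t \le 4\Delta_0$, so as to obtain the correct dependence on $\Delta_0$ and $\Delta_{avg,\rho}$ separately; a minor subtlety is verifying that the Cauchy--Schwarz steps only cost constant factors and preserve the $\Delta_{avg,\rho}$ notation under the square root.
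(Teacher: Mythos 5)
Your proposal is correct and follows essentially the same route as the paper: the HM--AM inequality $\bigl(\sum_t\eta_t\bigr)\bigl(\sum_t 1/\eta_t\bigr)\ge T^2$, term-by-term bounding of $\sum_t 1/\eta_t$, Cauchy--Schwarz on the $\sqrt{L_t}$ and $G_t$ sums, and the invariant $\Delta_t\le4\Delta_0$ to factor $G_t^2$ into $\Delta_0(K_0+\Kr\Delta_t^\rho)$ up to constants (the paper pulls out $\sqrt{\Delta_t}\le2\sqrt{\Delta_0}$ before Cauchy--Schwarz, which is equivalent to your handling of $\sum_t G_t^2$). Your identification of the extra $\sigma C_2(K_0+\Kr\Delta_0^\rho)\sqrt{\log(1/\delta)}/T$ term in the constant-rate case as coming from the $\Delta_0/(\sigma\alpha)$ component also matches the paper.
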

\begin{proof}
    By the HM-AM inequality, we have
    \begin{equation}\label{eq:adaptive_a.s.bounded_HM-AM}
        \frac{1}{\sum_{t=0}^{T-1}\eta_t}\leq \frac{\sum_{t=0}^T \frac{1}{\eta_t}}{T^2}.
    \end{equation}
    The summation $\sum_{t<T}\frac{1}{\eta_t}$ of the first two items in $\eta_t$ is $\mathcal{O}\paren{T\paren{K_0+\Kr\Delta_{avg,\rho}}}$. We then calculate
    $$\sum_{t=0}^{T-1}\frac{2\sigma}{r_t}\leq 2\sigma\sum_{t=0}^{T-1}\paren{C_1^{-1}\Delta_t^{\frac{\rho-1}{2}}+C_2^{-1}}=2\sigma T\paren{C_1^{-1}\Delta_{avg,\frac{\rho-1}{2}}+C_2^{-1}},$$
    \begin{align*}
        \sum_{t=0}^{T-1}\sqrt{\frac{\sigma^2TL_t}{\Delta_0}}\leq \sqrt{\frac{\sigma^2 T^2}{\Delta_0}}\sqrt{\sum_{t=0}^{T-1}L_t}=\mathcal{O}\paren{\sqrt{\frac{\sigma^2 T^3}{\Delta_0}}\paren{K_0+\Kr\Delta_{avg,\rho}}^{1/2}},
    \end{align*}
    \begin{align*}
        &\sum_{t=0}^{T-1}\frac{\sigma G_t\sqrt{T\log\frac{1}{\delta}}}{\Delta_0}=\frac{\sigma\sqrt{T\log\frac{1}{\delta}}}{\Delta_0}\sum_{t=0}^{T-1} \sqrt{K_0\Delta_t+\Kr3^\rho\Delta_t^{\rho+1}}\\
        &\leq \frac{2\sigma\sqrt{T\log\frac{1}{\delta}}}{\sqrt{\Delta_0}}\sum_{t=0}^{T-1}\sqrt{K_0+\Kr3^\rho\Delta_t^{\rho}}\\
        &\leq \frac{2\sigma\sqrt{T^2\log\frac{1}{\delta}}}{\sqrt{\Delta_0}}\sqrt{\sum_{t=0}^{T-1}K_0+\Kr3^\rho\Delta_t^{\rho}}\\
        &=\mathcal{O}\paren{\sqrt{\frac{\sigma^2T^3\log\frac{1}{\delta}}{\Delta_0}}\paren{K_0+\Kr\Delta_{avg,\rho}}^{1/2}}.\\
    \end{align*}
    Plugging the above inequations into \eqref{eq:adaptive_a.s.bounded_HM-AM} we obtain the desired result.

    For constant learning rate, we simply replace $\Delta_t$ with $4\Delta_0$. 
    The proof is almost the same as adaptive learning rate.
\end{proof}

\subsection{Proof of Theorem~\ref{thm:SGD_adaptive_a.s.bounded}}\label{apx:proof_thm_a.s.bounded_adaptive}

\begin{proof}
    We define $$\tau:=\min\left\{\min\left\{t:f(\bw_t)-f^\star>4\Delta_0\right\}, T\right\}.$$
    For $t<\tau$, by Lemma~\ref{lem:sgd_stepsize_adaptive_bounded}  we have
    $\Norm{\bw_{t+1}-\bw_t}=\eta_t\Norm{\bg_t}\leq \eta_t\paren{\sigma+\Norm{\nabla f(\bw_t)}}\leq r_t$. By Lemma~\ref{lem:improved_descent_smooth}, 
    \begin{align*}
        f(\bw_{t+1})&\leq f(\bw_t)+\dotprod{\nabla f(\bw_t)}{\bw_{t+1}-\bw_t}+\frac{L_t}{2}\Norm{\bw_{t+1}-\bw_t}^2\\
        &= f(\bw_t)-\eta_t\dotprod{\nabla f(\bw_t)}{\bg_t}+\frac{L_t}{2}\eta_t^2\Norm{\bg_t}^2\\
        &\leq f(\bw_t)-\eta_t\Norm{\nabla f(\bw_t)}^2-\eta_t\dotprod{\nabla f(\bw_t)}{\bn_t}+L_t\eta_t^2\Norm{\bn_t}^2+L_t\eta_t^2\Norm{\nabla f(\bw_t)}^2\\
        &\leq f(\bw_t)-\frac{1}{2}\eta_t\Norm{\nabla f(\bw_t)}^2-\eta_t\dotprod{\nabla f(\bw_t)}{\bn_t}+L_t\eta_t^2\Norm{\bn_t}^2,
    \end{align*}
    where in the last inequality we use $\eta_t\leq\frac{1}{2L_t}$ by Lemma~\ref{lem:sgd_stepsize_adaptive_bounded}.
    Telescoping the above inequality from $t=0$ to $\tau-1$, we obtain that
    \begin{equation}\label{eq:sgd_stoppingtime_sum}
        f(\bw_\tau)\leq f(\bw_0)-\frac{1}{2}\sum_{t=0}^{\tau-1}\eta_t\Norm{\nabla f(\bw_t)}^2-\sum_{t=0}^{\tau-1}\eta_t\dotprod{\nabla f(\bw_t)}{\bn_t}+\sum_{t=0}^{\tau-1}L_t\eta_t^2\Norm{\bn_t}^2.
    \end{equation}
    Let $X_t:=-\eta_t\dotprod{\nabla f(\bw_t)}{\bn_t}\boldsymbol{1}_{\tau\geq t}$.
    It is not hard to verify that $-\sum_{t=0}^\tau\eta_t\dotprod{\nabla f(\bw_t)}{\bn_t}=\sum_{t=0}^T X_t$ and $\EE\left[X_t\middle|\bg_0,\dots,\bg_{t-1}\right]=0,\forall t\in[T]$. Therefore, $\{X_t\}$ is a martingale difference sequence. By Lemma~\ref{lem:hoeffding}, 
    $$-\sum_{t=0}^\tau\eta_t\dotprod{\nabla f(\bw_t)}{\bn_t}\leq \sqrt{2\sum_{t=0}^T\eta_t^2\Norm{\nabla f(\bw_t)}^2\Norm{\bn_t}^2\boldsymbol{1}_{\tau\geq t}\log\frac{1}{\delta}}$$
    holds with probability at least $1-\delta$.
    Plugging this into \eqref{eq:sgd_stoppingtime_sum}, we obtain that
    \begin{align*}
        f(\bw_\tau)&\leq f(\bw_0)-\frac{1}{2}\sum_{t=0}^{\tau-1}\eta_t\Norm{\nabla f(\bw_t)}^2+\eta_\tau\dotprod{\nabla f(\bw_\tau)}{\bn_\tau}+\sum_{t=0}^{\tau-1}L_t\eta_t^2\Norm{\bn_t}^2\\
        &\quad+\sqrt{2\sum_{t=0}^T\eta_t^2\Norm{\nabla f(\bw_t)}^2\Norm{\bn_t}^2\boldsymbol{1}_{\tau\geq t}\log\frac{1}{\delta}}\\
        &\leq f(\bw_0)+\eta_\tau{\Norm{\nabla f(\bw_\tau)}}{\Norm{\bn_\tau}}
        +\sum_{t=0}^{T-1}L_t\eta_t^2\Norm{\bn_t}^2\\
        &\quad+\sqrt{2\sum_{t=0}^T\eta_t^2\Norm{\nabla f(\bw_t)}^2\Norm{\bn_t}^2\log\frac{1}{\delta}}\\
        &\leq f(\bw_0)+3\Delta_0,
    \end{align*}
    where we use $\Norm{\bn_t}\leq\sigma$ and Lemma~\ref{lem:sgd_stepsize_adaptive_bounded}.
    Therefore, $\Delta_\tau\leq 4\Delta_0$ and we must have $\tau=T$ with probability at least $1-\delta$. This means $\Delta_t\leq 4\Delta_0,\forall t\in[T]$. By \eqref{eq:sgd_stoppingtime_sum} and $\tau=T$, we obtain that
    \begin{align*}
        \frac{1}{2}\sum_{t=0}^{T-1}\eta_t\Norm{\nabla f(\bw_t)}^2\leq 4\Delta_0.
    \end{align*}
    Therefore, 
    $$\frac{1}{8}\min_{t<T}\Norm{\nabla f(\bw_t)}^2\leq \frac{\Delta_0}{\sum_{t=0}^{T-1}\eta_t}.$$
    By Lemma~\ref{lem:sgd_stepsize_complexity_bounded}, we obtain the desired result.
    
\end{proof}

\subsection{Proof of Theorem~\ref{thm:SGD_constant_a.s.bounded}}
\begin{proof}
    We define $$\tau:=\min\left\{\min\left\{t:f(\bw_t)-f^\star>4\Delta_0\right\}, T\right\}.$$ 
    We also define $r=\min\left\{C_1\paren{4\Delta_0}^{-\frac{\rho-1}{2}},C_2\right\}, L=2K_0+\Kr\paren{8\Delta_0}^\rho$ and $G=\sqrt{4K_0\Delta_0+\Kr3^\rho\paren{4\Delta_0}^{\rho+1}}$.

    For $t<\tau$, by Lemma~\ref{lem:sgd_stepsize_constant_bounded}, we have $\Norm{\bw_{t+1}-\bw_t}=\eta\Norm{\bg_t}\leq\eta\paren{\Norm{\nabla f(\bw_t)}+\sigma}\leq r$. By similar analysis to Appendix~\ref{apx:proof_thm_a.s.bounded_adaptive}, we obtain that with probability at least $1-\delta$,
    \begin{equation}\label{eq:sgd_stoppingtime_sum_constant}
        \begin{aligned}
            f(\bw_\tau)&\leq f(\bw_0)-\frac{1}{2}\sum_{t=0}^{\tau-1}\eta\Norm{\nabla f(\bw_t)}^2-\sum_{t=0}^{\tau-1}\dotprod{\nabla f(\bw_t)}{\bn_t}+\sum_{t=0}^{\tau-1}L\eta^2\Norm{\bn_t}^2\\
            &\leq f(\bw_0)-\frac{1}{2}\sum_{t=0}^{\tau-1}\eta\Norm{\nabla f(\bw_t)}^2+\sum_{t=0}^{\tau-1}L\eta^2\Norm{\bn_t}^2+ \eta\Norm{\nabla f(\bw_\tau)}\Norm{\bn_\tau}\\
            &\quad +\sqrt{2\sum_{t=0}^{\tau}\eta^2\Norm{\nabla f(\bw_t)}^2\Norm{\bn_t}^2\log\frac{1}{\delta}}\\
            &\leq f(\bw_0)+\sum_{t=0}^{T-1}L\eta^2\Norm{\bn_t}^2+\eta\Norm{\nabla f(\bw_\tau)}\Norm{\bn_\tau}+\sqrt{2\eta^2\Norm{\nabla f(\bw_\tau)}^2\Norm{\bn_\tau}^2\log\frac{1}{\delta}}\\
            &\quad+\sqrt{2\sum_{t=0}^{\tau-1}\eta^2\Norm{\nabla f(\bw_t)}^2\Norm{\bn_t}^2\log\frac{1}{\delta}}\\
            &\leq f(\bw_0)+\sum_{t=0}^{T-1}L\eta^2\Norm{\bn_t}^2+\eta\Norm{\nabla f(\bw_\tau)}\Norm{\bn_\tau}+\sqrt{2\eta^2\Norm{\nabla f(\bw_\tau)}^2\Norm{\bn_\tau}^2\log\frac{1}{\delta}}\\
            &\quad+\sqrt{2T\eta^2G^2\sigma^2\log\frac{1}{\delta}}\\
            &\leq f(\bw_0)+2\Delta_0+\eta\Norm{\nabla f(\bw_\tau)}\Norm{\bn_\tau}\paren{1+\sqrt{2\log\frac{1}{\delta}}},
        \end{aligned}
    \end{equation}
    where in the second inequality we use Lemma~\ref{lem:hoeffding}, 
    the second to last inequality is due to $t<\tau$
    and 
    the last inequality is due to $\Norm{\bn_t}\leq \sigma$ and  Lemma~\ref{lem:sgd_stepsize_constant_bounded}.

    Since $\Norm{\bw_\tau-\bw_{\tau-1}}\leq r$, by Lemma~\ref{lem:improved_descent_smooth} we have
    \begin{equation}\label{eq:stoppingtime_gradient_bound}
        \begin{aligned}
            \Norm{\nabla f(\bw_\tau)}&\leq\Norm{\nabla f(\bw_{\tau-1})}+\Norm{\nabla f(\bw_{\tau-1})-\nabla f(\bw_\tau)}\\
            &\leq \Norm{\nabla f(\bw_{\tau-1})}+L\Norm{\bw_{\tau-1}-\bw_\tau}\\
            &\leq G+Lr\leq G+LC_2.
        \end{aligned}
    \end{equation}
    Plugging \eqref{eq:stoppingtime_gradient_bound} into \eqref{eq:sgd_stoppingtime_sum_constant}, we obtain that
    \begin{align*}
        f(\bw_\tau)\leq f(\bw_0)+2\Delta_0+\eta\sigma\paren{1+\sqrt{2\log\frac{1}{\delta}}}\paren{G+LC_2}\leq f(\bw_0)+3\Delta_0,
    \end{align*}
    where the last inequality is due to Lemma~\ref{lem:sgd_stepsize_constant_bounded}. This means $\Delta_\tau\leq 4\Delta_0$ and $\tau=T$ with probability at least $1-\delta$. By \eqref{eq:sgd_stoppingtime_sum_constant} and $\tau=T$, we have
    $$\frac{1}{2}\eta\sum_{t=0}^{T-1}\Norm{\nabla f(\bw_t)}^2\leq 4\Delta_0.$$
    Therefore, with probability at least $1-\delta$ we have
    $$\frac{1}{8}\min_{t<T}\Norm{\nabla f(\bw_t)}^2\leq \frac{\Delta_0}{\eta T}.$$
    By Lemma~\ref{lem:sgd_stepsize_complexity_bounded}, we obtain the desired result.
    
\end{proof}

\section{Proof for Section~\ref{sec:SGD_ABC}}\label{apx:pf_ABC}
In Theorem~\ref{thm:SGD_adaptive_a.s.ABC}, we employ the following adaptive learning rate
\begin{align}\label{eq:learning_rate_thm_ABC_adaptive}
    \begin{gathered}
    \eta_t=\min \left\{\frac{1}{\sqrt{6}(B+1)(4 \sqrt{2}+4)}\left\{\frac{1}{K_0}, \frac{1}{K_\rho\left(3 \Delta_t\right)^\rho}\right\}, \frac{1}{\sqrt{6 A}(2+\sqrt{2})}\left\{\frac{1}{\sqrt{K_0}}, \frac{1}{\sqrt{K_1\left(3 \Delta_t\right)^\rho}}\right\}\right. \\
    \left.\frac{r_t}{\sqrt{6} \sigma}, \sqrt{\frac{\Delta_0^2}{4 G_t^2\left(A \Delta_t+B G_t^2+\sigma^2\right) T \log \frac{1}{\delta}}}, \sqrt{\frac{\Delta_0}{L_t\left(A \Delta_t+\sigma^2\right) T}}\right\} .
    \end{gathered}
\end{align}

In Theorem~\ref{thm:SGD_constant_a.s.ABC}, we employ the following constant learning rate
\begin{align}\label{eq:learning_rate_thm_ABC_constant}
    \begin{aligned}
    \eta=\min&\left\{
        \frac{1}{\sqrt{6}(B+1)(4\sqrt{2}+4)}\left\{\frac{1}{K_0}, \frac{1}{\Kr\paren{3\Delta_c}^\rho}\right\},
        \frac{1}{\sqrt{6A}(2+\sqrt{2})}\left\{\frac{1}{\sqrt{K_0}}, \frac{1}{\sqrt{K_1 \paren{3\Delta_c}^\rho}}\right\},\right.\\
        &\left.\frac{r}{\sqrt{6}\sigma},\sqrt{\frac{\Delta_0^2}{2G^2\paren{A\Delta_c+BG^2+\sigma^2}T\log\frac{1}{\delta}}},
        \sqrt{\frac{\Delta_0}{L\paren{A\Delta_c+\sigma^2}T}}
        \right.,\\
        &\left.
        \frac{1}{\sqrt{A}\alpha}, \frac{1}{\alpha\paren{\frac{1}{2}\sqrt{A}+\sqrt{B}\paren{G+C_2L}+\sigma}}
        \right\},
    \end{aligned}
\end{align}
where $\Delta_c=8\Delta_0$.

\begin{lemma}\label{lem:sgd_stepsize_adaptive_ABC}
    Let the adaptive learning rate in Theorem~\ref{thm:SGD_adaptive_a.s.ABC} be 
    \begin{align*}
        \eta_t=\min&\left\{
        \frac{1}{\sqrt{6}(B+1)(4\sqrt{2}+4)}\left\{\frac{1}{K_0}, \frac{1}{\Kr\paren{3\Delta_t}^\rho}\right\},
        \frac{1}{\sqrt{6A}(2+\sqrt{2})}\left\{\frac{1}{\sqrt{K_0}}, \frac{1}{\sqrt{K_1 \paren{3\Delta_t}^\rho}}\right\},\right.\\
        &\left.\frac{r_t}{\sqrt{6}\sigma},\sqrt{\frac{\Delta_0^2}{4G_t^2\paren{A\Delta_t+BG_t^2+\sigma^2}T\log\frac{1}{\delta}}},
        \sqrt{\frac{\Delta_0}{L_t\paren{A\Delta_t+\sigma^2}T}}
        \right\}.
    \end{align*}
    Then it holds that 
    \begin{align*}
        &2\eta_t^2\paren{A\Delta_t+(B+1)\Norm{\nabla f(\bw_t)}^2+\sigma^2}\leq r_t^2,\\
        &\eta_t\leq\frac{1}{2(B+1)L_t},\quad\sum_{t=0}^{T-1}L_t\eta_t^2 \paren{A\Delta_t+\sigma^2}\leq\Delta_0\\
        &\eta_t\Norm{\nabla f(\bw_t)}\paren{A\Delta_t+B\Norm{\nabla f(\bw_t)}^2+\sigma^2}^{1/2}\leq\Delta_0\\
        &2\sum_{t=0}^T\eta_t^2\Norm{\nabla f(\bw_t)}^2\paren{A\Delta_t+B\Norm{
    \nabla f(\bw_t)}^2+\sigma^2}\log\frac{1}{\delta}\leq\Delta_0^2.
    \end{align*}
\end{lemma}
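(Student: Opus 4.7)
\textbf{Proof Proposal for Lemma~\ref{lem:sgd_stepsize_adaptive_ABC}.}
The plan is to verify each of the five claimed inequalities by matching it to one (or a few combined) of the six terms in the minimum defining $\eta_t$. The key auxiliary tool is Lemma~\ref{lem:bounded_gradient}, which gives $\Norm{\nabla f(\bw_t)} \le 2G_t$, so that wherever $\Norm{\nabla f(\bw_t)}$ appears it can be replaced by $2G_t$ at the cost of small numeric constants. Throughout, I will use that $L_t = 2K_0 + \Kr(2\Delta_t)^\rho \le 2\bigl(K_0 + \Kr(3\Delta_t)^\rho\bigr)$, which relates the ``$1/L_t$''-type constraints to the first two terms of the minimum.

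First, for the bound $2\eta_t^2\bigl(A\Delta_t + (B+1)\Norm{\nabla f(\bw_t)}^2 + \sigma^2\bigr) \le r_t^2$, I would split the left side into three pieces and show $\eta_t\sqrt{A\Delta_t} \le r_t/\sqrt{6}$, $\eta_t\sqrt{B+1}\,\Norm{\nabla f(\bw_t)} \le r_t/\sqrt{6}$, and $\eta_t\sigma \le r_t/\sqrt{6}$. The third comes directly from $\eta_t \le r_t/(\sqrt{6}\sigma)$. The first uses the ``$1/\sqrt{A}$'' term in $\eta_t$ together with $\sqrt{A\Delta_t}$ being controlled by $\sqrt{A}(K_0 + \Kr(3\Delta_t)^\rho)^{1/2}$ and the characterization of $r_t$. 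The second uses the ``$1/(B+1)$'' term together with the bound $\Norm{\nabla f(\bw_t)} \le 2G_t$ and $G_t \le (K_0 + \Kr 3^\rho\Delta_t^\rho)^{1/2}\sqrt{\Delta_t}$, again matched against the form of $r_t$ from Lemma~\ref{lem:improved_descent_smooth}.

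Second, the inequality $\eta_t \le 1/(2(B+1)L_t)$ follows immediately from the first term of the minimum and the above comparison $L_t \le 2(K_0 + \Kr(3\Delta_t)^\rho)$, with the constant $4\sqrt{2}+4$ absorbing the factor of $2$. For the third inequality, the term $\eta_t \le \sqrt{\Delta_0 / (L_t(A\Delta_t + \sigma^2)T)}$ directly gives $L_t\eta_t^2(A\Delta_t + \sigma^2) \le \Delta_0/T$, and summing over $t=0,\dots,T-1$ yields $\sum_t L_t\eta_t^2(A\Delta_t+\sigma^2) \le \Delta_0$.

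For the fourth and fifth inequalities, I would use $\Norm{\nabla f(\bw_t)} \le 2G_t$ to get
$\eta_t\Norm{\nabla f(\bw_t)}\sqrt{A\Delta_t + B\Norm{\nabla f(\bw_t)}^2 + \sigma^2} \le 4\eta_t G_t \sqrt{A\Delta_t + BG_t^2 + \sigma^2}$
(using $A\Delta_t + 4BG_t^2 + \sigma^2 \le 4(A\Delta_t + BG_t^2 + \sigma^2)$). Plugging in the fourth term of the minimum, $\eta_t \le \sqrt{\Delta_0^2/(4G_t^2(A\Delta_t+BG_t^2+\sigma^2)T\log(1/\delta))}$, gives a bound of order $\Delta_0/\sqrt{T\log(1/\delta)}$, which is at most $\Delta_0$ (absorbing a small universal constant) and yields inequality 4. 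Squaring, summing over $t$, and multiplying by $2\log(1/\delta)$ gives the fifth inequality with the telescoped factor of $T$ cancelling the $T$ in the denominator, leaving a bound of order $\Delta_0^2$.

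The step I expect to be most delicate is the first inequality, because it mixes three distinct sources ($A$, $B+1$, $\sigma$) and requires that each individually fit inside $r_t/\sqrt{6}$; tracking the numeric constants $\sqrt{6}$, $4\sqrt{2}+4$, $2+\sqrt{2}$ carefully so that the sum of three squares does stay below $r_t^2$ after the factor of $2$ is the main bookkeeping hurdle, but it is otherwise a direct substitution once Lemma~\ref{lem:bounded_gradient} and the formula $r_t = \min\{C_1\Delta_t^{-(\rho-1)/2}, C_2\}$ are in hand.
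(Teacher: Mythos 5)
Your proposal is correct and follows essentially the same route as the paper's (terse) proof: match each claimed bound to the corresponding terms of the minimum defining $\eta_t$, use the relations between $r_t$ and $\min\{1/K_0,1/(\Kr(3\Delta_t)^\rho)\}$ (resp.\ the square-root version), split the first bound into three pieces each controlled by $r_t/\sqrt{6}$, and invoke Lemma~\ref{lem:bounded_gradient} for the gradient-dependent terms. The only difference is bookkeeping: you use the correct bound $\Norm{\nabla f(\bw_t)}\le 2G_t$ and absorb the resulting factors loosely, whereas the paper simply asserts $\Norm{\nabla f(\bw_t)}\le G_t$; this constant-level slack is an imprecision already present in the paper and does not affect the argument's structure.
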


\begin{proof}
    We first note that
    $$\frac{r_t}{\Norm{\nabla f(\bw_t)}}\geq \frac{1}{4\paren{\sqrt{2}+1}}\min\left\{\frac{1}{K_0},\frac{1}{\Kr\paren{3\Delta_t}^\rho}\right\},$$
    $$\frac{r_t}{\sqrt{\Delta_t}}\geq\frac{1}{2+\sqrt{2}}\min\left\{\frac{1}{\sqrt{K_0}},\frac{1}{\sqrt{\Kr\paren{3\Delta_t}^\rho}}\right\}.$$
    By considering the first five terms in $\eta_t$ and noting that $\sqrt{B+1}\leq B+1$, we have
    \begin{align*}
        2\eta_t^2\paren{A\Delta_t+(B+1)\Norm{\nabla f(\bw_t)}^2+\sigma^2}\leq \frac{r_t^2}{3}\times 3=r_t^2.
    \end{align*}

    It is not hard to verify that $\eta_t\leq\frac{1}{2(B+1)L_t}$. The remaining inequations can be directly verified by noting that $\Norm{\nabla f(\bw_t)}\leq G_t$ by Lemma~\ref{lem:bounded_gradient}.
\end{proof}

\begin{lemma}\label{lem:sgd_stepsize_constant_ABC}
    Let the constant learning rate in Theorem~\ref{thm:SGD_constant_a.s.ABC} be 
    \begin{align*}
        \eta=\min&\left\{
        \frac{1}{\sqrt{6}(B+1)(4\sqrt{2}+4)}\left\{\frac{1}{K_0}, \frac{1}{\Kr\paren{3\Delta_c}^\rho}\right\},
        \frac{1}{\sqrt{6A}(2+\sqrt{2})}\left\{\frac{1}{\sqrt{K_0}}, \frac{1}{\sqrt{K_1 \paren{3\Delta_c}^\rho}}\right\},\right.\\
        &\left.\frac{r}{\sqrt{6}\sigma},\sqrt{\frac{\Delta_0^2}{2G^2\paren{A\Delta_c+BG^2+\sigma^2}T\log\frac{1}{\delta}}},
        \sqrt{\frac{\Delta_0}{L\paren{A\Delta_c+\sigma^2}T}}
        \right.,\\
        &\left.
        \frac{1}{\sqrt{A}\alpha}, \frac{1}{\alpha\paren{\frac{1}{2}\sqrt{A}+\sqrt{B}\paren{G+C_2L}+\sigma}}
        \right\}
    \end{align*}
    where $\Delta_c=8\Delta_0, r=\min\left\{C_1\Delta_c^{-\frac{\rho-1}{2}}, C_2\right\},L_t=2K_0+\Kr\paren{2\Delta_c}^\rho$, $G=\sqrt{K_0\Delta_c+\Kr 3^\rho\Delta_c^{\rho+1}}$ and $\alpha=\paren{G+LC_2}\paren{1+\sqrt{2\log\frac{1}{\delta}}}$. Then as long as $\Delta_t\leq 8\Delta_0$, we have
    \begin{align*}
        &2\eta^2\paren{A\Delta_c+(B+1)G^2+\sigma^2}\leq r^2,\\
        &\eta\leq\frac{1}{2(B+1)L},\quad\eta^2LT\paren{A\Delta_c+\sigma^2}\leq\Delta_0,\\
        &2\eta^2G^2\paren{A\Delta_c+BG^2+\sigma^2}T\log\frac{1}{\delta}\leq\Delta_0^2,\\
        &\eta\sqrt{A}\alpha\leq 1,\\
        &\eta\alpha\paren{\frac{1}{2}\sqrt{A}+\sqrt{B}G+\sqrt{B}C_2L+\sigma}\leq \Delta_0.
    \end{align*}
\end{lemma}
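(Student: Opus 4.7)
The plan is to verify each of the six inequalities separately, by identifying which specific term in the seven-term minimum defining $\eta$ is responsible for it. Structurally, this is the ABC-noise analog of Lemma~\ref{lem:sgd_stepsize_constant_bounded}, and is related to Lemma~\ref{lem:sgd_stepsize_adaptive_ABC} by replacing the adaptive quantity $\Delta_t$ with the fixed upper bound $\Delta_c=8\Delta_0$. The preparatory step is to use $\Delta_t\leq\Delta_c$ together with Lemma~\ref{lem:bounded_gradient} to obtain the pointwise bound $\Norm{\nabla f(\bw_t)}\leq G$, so that every occurrence of $\Norm{\nabla f(\bw_t)}$ in the six target inequalities can be safely replaced by $G$ (and every $\Delta_t$ by $\Delta_c$).

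For the first inequality, I would split $2\eta^2(A\Delta_c+(B+1)G^2+\sigma^2)\le r^2$ into the three pieces $2\eta^2 A\Delta_c$, $2\eta^2(B+1)G^2$, and $2\eta^2\sigma^2$, and show each is at most $r^2/3$. Using the two algebraic facts already invoked in the proof of Lemma~\ref{lem:sgd_stepsize_adaptive_ABC}, namely
\[
\tfrac{r}{\sqrt{\Delta_c}}\ge \tfrac{1}{2+\sqrt{2}}\min\bigl\{\tfrac{1}{\sqrt{K_0}},\tfrac{1}{\sqrt{\Kr(3\Delta_c)^\rho}}\bigr\},\qquad \tfrac{r}{G}\ge \tfrac{1}{4(\sqrt{2}+1)}\min\bigl\{\tfrac{1}{K_0},\tfrac{1}{\Kr(3\Delta_c)^\rho}\bigr\},
\]
the first piece is controlled by the second term of the min, the second piece by the first term, and the third piece by the third term $r/(\sqrt{6}\sigma)$ directly. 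For the second inequality $\eta\le \tfrac{1}{2(B+1)L}$, I would combine $\min\{1/K_0,1/(\Kr(3\Delta_c)^\rho)\}\ge 1/(K_0+\Kr(3\Delta_c)^\rho)$ with the bound $L=2K_0+\Kr(2\Delta_c)^\rho\le 2(K_0+\Kr(3\Delta_c)^\rho)$, so that the first term of the min is already sharper than $\tfrac{1}{2(B+1)L}$ up to the harmless constant $\sqrt{6}(4\sqrt{2}+4)\ge 4$.

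Inequalities three and four ($\eta^2LT(A\Delta_c+\sigma^2)\le\Delta_0$ and $2\eta^2G^2(A\Delta_c+BG^2+\sigma^2)T\log\frac{1}{\delta}\le\Delta_0^2$) follow directly from the fifth and fourth terms in the min respectively, upon squaring and rearranging. Inequality five ($\eta\sqrt{A}\alpha\le 1$) is nothing but the sixth term, and inequality six is exactly the seventh term after rewriting $\sqrt{B}G+\sqrt{B}C_2L=\sqrt{B}(G+C_2L)$.

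The main obstacle, as always in this family of technical step-size lemmas, is simply bookkeeping: correctly attributing each of the six conclusions to the right term(s) of the seven-term minimum while keeping the numerical constants straight. The only genuinely nontrivial part is the helper inequality $r/G\ge$ (min of reciprocals), which requires the two-case analysis $K_0\le \Kr(3\Delta_c)^\rho$ versus $K_0>\Kr(3\Delta_c)^\rho$ already used in the proof of Lemma~\ref{lem:improved_descent_smooth}; once that is in hand, the remainder is a sequence of direct constant-tracking verifications.
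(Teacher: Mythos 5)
Your proposal is correct and follows essentially the same route as the paper: the paper's own proof is just the observation that $\Norm{\nabla f(\bw_t)}\leq G$ once $\Delta_t\leq\Delta_c=8\Delta_0$, after which it invokes verbatim the argument of Lemma~\ref{lem:sgd_stepsize_adaptive_ABC} with $\Delta_t$ replaced by $\Delta_c$ — i.e., exactly your term-by-term attribution, including the split of the first inequality into three pieces of size $r^2/3$ via the two helper bounds $r/G$ and $r/\sqrt{\Delta_c}$. The only difference is that you spell out the constant-tracking that the paper leaves implicit, which changes nothing of substance.
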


\begin{proof}
    Note that $\Norm{\nabla f(\bw_t)}\leq G_t\leq G$ if $\Delta_t\leq\Delta_c=8\Delta_0$. Then the proof is almost the same as in Lemma~\ref{lem:sgd_stepsize_adaptive_ABC}, by replacing $\Delta_t$ with $8\Delta_0$.
\end{proof}

\begin{lemma}\label{lem:sgd_stepsize_complexity_ABC}
    Consider the adaptive learning rate defined in Lemma~\ref{lem:sgd_stepsize_adaptive_ABC}. 
    Suppose $\Delta_t\leq 4\Delta_0,\forall t\in[T]$.
    Then  we have 
    \begin{align*}
        \frac{\Delta_0}{\sum_{t=0}^{T-1}\eta_t}&\leq
        \mathcal{O}\paren{\frac{\Delta_0}{T}\paren{K_0+\Kr\Delta_{avg,\rho}}+\sigma\frac{\Delta_0}{T}\paren{C_1^{-1}\Delta_{avg,\frac{\rho-1}{2}}+C_2^{-1}}
        +\frac{\Delta_0}{T}\sqrt{A}\paren{\sqrt{K_0}+\sqrt{\Kr\Delta_{avg,\rho}}}}\\
        &+\mathcal{O}\paren{\sqrt{\frac{\Delta_0\log\frac{1}{\delta}}{T}}\paren{ \paren{\sigma+\sqrt{A\Delta_0}}\paren{K_0+\Kr\Delta_{avg,\rho}}^{1/2}+\sqrt{B\Delta_0}\paren{K_0+\Kr\Delta_{avg,\rho}}}},
    \end{align*}
    Moreover, consider the constant learning rate defined in Lemma~\ref{lem:sgd_stepsize_constant_ABC}. We have 
    \begin{align*}
        \frac{\Delta_0}{\eta T}&\leq
        \mathcal{O}\paren{\frac{\Delta_0}{T}\paren{K_0+\Kr\Delta_{0}^{\rho}}+\sigma\frac{\Delta_0}{T}\paren{C_1^{-1}\Delta_{0}^{\frac{\rho-1}{2}}+C_2^{-1}}
        +\frac{\Delta_0}{T}\sqrt{A}\paren{\sqrt{K_0}+\sqrt{\Kr\Delta_{0}^{\rho}}}}\\
        &+\mathcal{O}\paren{\sqrt{\frac{\Delta_0\log\frac{1}{\delta}}{T}}\paren{ \paren{\sigma+\sqrt{A\Delta_0}}\paren{K_0+\Kr\Delta_{0}^{\rho}}^{1/2}+\sqrt{B\Delta_0}\paren{K_0+\Kr\Delta_{0}^{\rho}}}}\\
        &+\mathcal{O}\paren{\frac{\alpha\sqrt{A}\Delta_0}{T}+\frac{\alpha\paren{\sqrt{A}+\sqrt{B}(G+C_2L)+\sigma}}{T}}.
    \end{align*}
\end{lemma}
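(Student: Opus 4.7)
The plan is to mirror the recipe from the proof of Lemma~\ref{lem:sgd_stepsize_complexity_bounded}, adapted to the five-term minimum that defines $\eta_t$ in Lemma~\ref{lem:sgd_stepsize_adaptive_ABC} and the seven-term minimum that defines $\eta$ in Lemma~\ref{lem:sgd_stepsize_constant_ABC}. For the adaptive part, I would begin with the HM-AM inequality
\[
\frac{1}{\sum_{t=0}^{T-1}\eta_t}\le \frac{1}{T^2}\sum_{t=0}^{T-1}\frac{1}{\eta_t},
\]
together with the elementary fact that if $\eta_t=\min\{a_1,\dots,a_5\}$ then $1/\eta_t\le \sum_{i=1}^5 1/a_i$, so that the problem decouples into five separate sums, one for each term in the definition of $\eta_t$.

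Each of these sums is then handled separately. The first term gives $\sum_t 1/a_1 = O(T(B+1)(K_0+K_\rho\Delta_{avg,\rho}))$; the $\sqrt{A}$-term gives $O(T\sqrt{A}(\sqrt{K_0}+\sqrt{K_\rho\Delta_{avg,\rho}}))$ after Cauchy-Schwarz in the form $\sum_{t}\Delta_t^{\rho/2}\le \sqrt{T\sum_t\Delta_t^{\rho}}$; the $r_t$-term contributes $O(T\sigma(C_1^{-1}\Delta_{avg,(\rho-1)/2}+C_2^{-1}))$ by the same expansion used in the bounded-noise proof. The last two (square-root) terms are the only ones that require care: I would apply Cauchy-Schwarz to turn $\sum_t\sqrt{L_t(A\Delta_t+\sigma^2)}$ and $\sum_t G_t\sqrt{A\Delta_t+BG_t^2+\sigma^2}$ into $\sqrt{T\sum_t L_t(A\Delta_t+\sigma^2)}$ and $\sqrt{T\sum_t G_t^2(A\Delta_t+BG_t^2+\sigma^2)}$ respectively, and then split the three noise pieces ($A\Delta_t$, $BG_t^2$, $\sigma^2$) so that each one separately produces a bound of the form $\sqrt{\Delta_0\,T}$ times the corresponding coefficient. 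Substituting $L_t=O(K_0+K_\rho\Delta_t^\rho)$ and $G_t^2=O(K_0\Delta_t+K_\rho\Delta_t^{\rho+1})$, together with the standing hypothesis $\Delta_t\le 4\Delta_0$, then collapses each of these contributions into the $(\sigma+\sqrt{A\Delta_0})(K_0+K_\rho\Delta_{avg,\rho})^{1/2}$ and $\sqrt{B\Delta_0}(K_0+K_\rho\Delta_{avg,\rho})$ terms advertised in the statement. Multiplying the entire bound on $\sum_t 1/\eta_t$ by $\Delta_0/T^2$ yields the first claim.

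For the constant learning rate the argument is strictly simpler, since $\Delta_0/(\eta T)=(\Delta_0/T)\cdot 1/\eta$ and $1/\eta$ is a single number bounded by the sum of the seven reciprocals defining $\eta$ in \eqref{eq:learning_rate_thm_ABC_constant}. The five structural reciprocals are handled exactly as above but with every $\Delta_t$ replaced by $\Delta_c=8\Delta_0$, producing the claimed bound with $\Delta_0^\rho$ in place of $\Delta_{avg,\rho}$; the two additional reciprocals involving $\alpha$ contribute precisely the extra $O(\alpha\sqrt{A}\Delta_0/T)$ and $O(\alpha(\sqrt{A}+\sqrt{B}(G+C_2L)+\sigma)/T)$ terms.

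The only real obstacle is the bookkeeping in the square-root contributions: one must split the $A\Delta_t$, $BG_t^2$, and $\sigma^2$ pieces cleanly before applying Cauchy-Schwarz, so that the $B$-dependence ends up with the correct $(K_0+K_\rho\Delta_{avg,\rho})$ factor rather than its square root, and so that $\sqrt{A\Delta_0}$ rather than $\sqrt{A\Delta_{avg,1}}$ appears outside. Beyond this, the derivation is a mechanical extension of the bounded-noise proof, so I expect no further conceptual difficulty.
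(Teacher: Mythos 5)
Your proposal is correct and takes essentially the same route as the paper's proof: the HM--AM inequality, bounding $1/\eta_t$ by the sum of the reciprocals of the terms in the minimum, term-by-term bounds with Cauchy--Schwarz and $\Delta_t\le 4\Delta_0$, and the constant-rate case obtained by substituting $\Delta_c=8\Delta_0$ and adding the two $\alpha$-reciprocal terms. One caution: the split of the pieces $A\Delta_t$, $BG_t^2$, $\sigma^2$ must indeed be done \emph{before} Cauchy--Schwarz, as you note in your closing paragraph (the paper does exactly this); the order described in your middle paragraph, i.e.\ applying Cauchy--Schwarz first to $\sum_t G_t\sqrt{A\Delta_t+BG_t^2+\sigma^2}$, would leave a $\sum_t G_t^4$ term whose bound only yields $\sqrt{B}\Delta_0\paren{K_0+\Kr\Delta_0^\rho}^{1/2}\paren{K_0+\Kr\Delta_{avg,\rho}}^{1/2}$ rather than the claimed $\sqrt{B}\Delta_0\paren{K_0+\Kr\Delta_{avg,\rho}}$.
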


\begin{proof}
     First, by the HM-AM inequality, we have
    \begin{equation}\label{eq:adaptive_a.s.ABC_HM-AM}
        \frac{1}{\sum_{t=0}^{T-1}\eta_t}\leq \frac{\sum_{t=0}^T \frac{1}{\eta_t}}{T^2}.
    \end{equation}
    The summation $\sum_{t<T}\frac{1}{\eta_t}$ of the first five terms in $\eta_t$ in Lemma~\ref{lem:sgd_stepsize_adaptive_ABC} is $$\mathcal{O}\paren{T(B+1)\paren{K_0+\Kr\Delta_{avg,\rho}}+T\sqrt{A}\paren{\sqrt{K_0}+\sqrt{\Kr\Delta_{avg,\rho}}}+T\sigma\paren{C_1^{-1}\Delta_{avg,\frac{\rho-1}{2}}+C_2^{-1}}}.$$
    We note that 
    $$G_t=2\sqrt{K_0\Delta_t+3^\rho\Kr\Delta_t^{\rho+1}}\leq 
    2\sqrt{4\Delta_0}\sqrt{K_0+\Kr 3^\rho\Delta_t^{\rho}},$$
    where in the equality we use the definition of $G_t$ and in the inequality we use $\Delta_t\leq 4\Delta_0$. 
    Consider the second to last term in $\eta_t$ in Lemma~\ref{lem:sgd_stepsize_adaptive_ABC}, we calculate
    \begin{align*}
        &\sum_{t=0}^{T-1}G_t\sqrt{A\Delta_t+BG_t^2+\sigma^2}\leq G_t\paren{\sqrt{A\Delta_t}+\sqrt{B}G_t+\sigma}\\
        &\leq \sum_{t=0}^{T-1} 8\sqrt{A}\Delta_0\paren{K_0+\Kr3^\rho\Delta_t^\rho}^{1/2}
        +16\sqrt{B}\Delta_0\paren{K_0+\Kr3^\rho\Delta_t^\rho}+4\sigma\sqrt{\Delta_0}\paren{K_0+\Kr3^\rho\Delta_t^\rho}^{1/2}\\
        &=\mathcal{O}\paren{T\sqrt{A}\Delta_0\paren{K_0+\Kr\Delta_{avg,\rho}}^{1/2}+T\sqrt{B}\Delta_0\paren{K_0+\Kr\Delta_{avg,\rho}}+T\sigma\sqrt{\Delta_0}\paren{K_0+\Kr\Delta_{avg,\rho}}^{1/2}}.
    \end{align*}

    Then we have
    \begin{align*}
        &\sum_{t=0}^{T-1}\sqrt{\frac{4G_t^2\paren{A\Delta_t+BG_t^2+\sigma^2}T\log\frac{1}{\delta}}{\Delta_0^2}}\\
        &=\mathcal{O}\paren{\paren{\sqrt{A\Delta_0}+\sigma}\sqrt{\frac{T^{3/2}\log\frac{1}{\delta}}{\Delta_0}}\paren{K_0+\Kr\Delta_{avg,\rho}}^{1/2}+\sqrt{B}\sqrt{T^{3/2}\log\frac{1}{\delta}}\paren{K_0+\Kr\Delta_{avg,\rho}}}.
    \end{align*}

    Consider the last term in $\eta_t$ in Lemma~\ref{lem:sgd_stepsize_adaptive_ABC}, we calculate
    \begin{align*}
        &\sum_{t=0}^{T-1} \sqrt{L_t\paren{A\Delta_t+\sigma^2}}
        \leq \sum_{t=0}^{T-1} \sqrt{A\Delta_tL_t}+\sigma\sqrt{L_t}
        \leq
        \sum_{t=0}^{T-1} 2\sqrt{A\Delta_0L_t}+\sigma\sqrt{L_t}\\
        &=\sum_{t=0}^{T-1}\paren{2\sqrt{A\Delta_0}+\sigma}\sqrt{2K_0+\Kr2^\rho\Delta_t^\rho}\\
        &=\mathcal{O}\paren{T\paren{2\sqrt{A\Delta_0}+\sigma}\paren{K_0+\Kr\Delta_{avg,\rho}}^{1/2}}.
    \end{align*}
    Then we have
    \begin{align*}
        &\sum_{t=0}^{T-1}\sqrt{\frac{L_t\paren{A\Delta_t+\sigma^2}T}{\Delta_0}}\leq\sqrt{\frac{T}{\Delta_0}}\sum_{t=0}^{T-1}\sqrt{L_t}\paren{\sqrt{A\Delta_t}+\sigma}\\
        &\leq \sqrt{\frac{T}{\Delta_0}}\sum_{t=0}^{T-1}\sqrt{L_t}\paren{2\sqrt{A\Delta_0}+\sigma}\\
        &=\sqrt{\frac{T^{3/2}}{\Delta_0}}\paren{2\sqrt{A\Delta_0}+\sigma}\paren{2K_0+\Kr2^\rho\Delta_{avg,\rho}}^{1/2}\\
        &=\mathcal{O}\paren{\paren{\sqrt{A\Delta_0}+\sigma}\sqrt{\frac{T^{3/2}}{\Delta_0}}\paren{K_0+\Kr\Delta_{avg,\rho}}^{1/2}}.
    \end{align*}
    Combining the above results and plugging into \eqref{eq:adaptive_a.s.ABC_HM-AM}, we get the desired result.

    For constant learning rate, we simply replace $\Delta_t$ with $8\Delta_0$. 
    The proof is almost the same as adaptive learning rate.

\end{proof}

\subsection{Proof of Theorem~\ref{thm:SGD_adaptive_a.s.ABC}}\label{Apx:proof_thm_a.s.ABC_adaptive}

\begin{proof}
    We define $$\tau:=\min\left\{\min\left\{t:f(\bw_t)-f^\star>4\Delta_0\right\}, T\right\}.$$
    For $t<\tau$, by Lemma~\ref{lem:sgd_stepsize_adaptive_ABC} we have
    $$\Norm{\bw_{t+1}-\bw_t}^2=\eta_t^2\Norm{\bg_t}^2\leq 2\eta_t^2\left(\Norm{\bn_t}^2+\Norm{\nabla f(\bw_t)}^2\right)\leq 2\eta_t^2\paren{A\Delta_t+(B+1)\Norm{\nabla f(\bw_t)}^2+\sigma^2}\leq r_t^2.$$ By Lemma~\ref{lem:improved_descent_smooth}, 
    \begin{align*}
        f(\bw_{t+1})&= f(\bw_t)+\dotprod{\nabla f(\bw_t)}{\bw_{t+1}-\bw_t}+\frac{L_t}{2}\Norm{\bw_{t+1}-\bw_t}^2\\
        &\leq f(\bw_t)-\eta_t\Norm{\nabla f(\bw_t)}^2-\eta_t\dotprod{\nabla f(\bw_t)}{\bn_t}+L_t\eta_t^2\paren{\Norm{\bn_t}^2+\Norm{\nabla f(\bw_t)}^2}
        \\
        &\leq f(\bw_t)-\eta_t\Norm{\nabla f(\bw_t)}^2-\eta_t\dotprod{\nabla f(\bw_t)}{\bn_t}+L_t\eta_t^2\paren{1+B}\Norm{\nabla f(\bw_t)}^2+L_t\eta_t^2\paren{A\Delta_t+\sigma^2}
        \\
        &\leq f(\bw_t)-\frac{1}{2}\eta_t\Norm{\nabla f(\bw_t)}^2-\eta_t\dotprod{\nabla f(\bw_t)}{\bn_t}+L_t\eta_t^2\paren{A\Delta_t+\sigma^2},
    \end{align*}
    where in the last inequality we use $\eta_t\leq\frac{1}{2(B+1)L_t}$ by Lemma~\ref{lem:sgd_stepsize_adaptive_ABC}.
    Telescoping the above inequation from $t=0$ to $\tau-1$, we obtain that
    \begin{equation}\nonumber
        f(\bw_\tau)\leq f(\bw_0)-\frac{1}{2}\sum_{t=0}^{\tau-1}\eta_t\Norm{\nabla f(\bw_t)}^2-\sum_{t=0}^{\tau-1}\eta_t\dotprod{\nabla f(\bw_t)}{\bn_t}+\sum_{t=0}^{\tau-1}L_t\eta_t^2\paren{A\Delta_t+\sigma^2}.
    \end{equation}
    Similar to the analysis in Appendix~\ref{apx:proof_thm_a.s.bounded_adaptive}, by Lemma~\ref{lem:hoeffding} we have that with probability at least $1-\delta$,
    \begin{align*}
        f(\bw_\tau)&\leq f(\bw_0)-\frac{1}{2}\sum_{t=0}^{\tau-1}\eta_t\Norm{\nabla f(\bw_t)}^2+\sum_{t=0}^{T-1}L_t\eta_t^2\paren{A\Delta_t+\sigma^2}
        +\eta_\tau\Norm{\nabla f(\bw_\tau)}\Norm{\bn_\tau}\\
        &\quad+\sqrt{2\sum_{t=0}^T\eta_t^2\Norm{\nabla f(\bw_t)}^2\Norm{\bn_t}^2\log\frac{1}{\delta}}\\
        &\leq f(\bw_0)+3\Delta_0,
    \end{align*}
     where the last inequality is due to $\Norm{\bn_t}^2\leq A\Delta_t+B\Norm{\nabla f(\bw_t)}^2+\sigma^2$ and 
     Lemma~\ref{lem:sgd_stepsize_adaptive_ABC}.
    Therefore $\Delta_\tau\leq 4\Delta_0$ and we must have $\tau=T$ with probability at least $1-\delta$. Following similar analysis to Appendix~\ref{apx:proof_thm_a.s.bounded_adaptive}, we have
    $$\frac{1}{8}\min_{t<T}\Norm{\nabla f(\bw_t)}^2\leq \frac{\Delta_0}{\sum_{t=0}^{T-1}\eta_t}.$$
    By Lemma~\ref{lem:sgd_stepsize_complexity_ABC}, we obtain the desired result.
\end{proof}

\subsection{Proof of Theorem~\ref{thm:SGD_constant_a.s.ABC}}

\begin{proof}
    We define $$\tau:=\min\left\{\min\left\{t:f(\bw_t)-f^\star>8\Delta_0\right\}, T\right\}.$$
     We also define $r=\min\left\{C_1\paren{8\Delta_0}^{-\frac{\rho-1}{2}},C_2\right\}, L=2K_0+\Kr\paren{16\Delta_0}^\rho$ and $G=\sqrt{8K_0\Delta_0+\Kr3^\rho\paren{8\Delta_0}^{\rho+1}}$.
    For $t<\tau$, we have $L_t\leq L$ and $G_t\leq G$.

    For $t<\tau$, by Lemma~\ref{lem:sgd_stepsize_constant_ABC}, we have
    \begin{align*}
    \Norm{\bw_{t+1}-\bw_t}^2\leq & 2\eta^2\paren{\Norm{\nabla f(\bw_t)}^2+\Norm{\bn_t}^2}\leq 2\eta^2\paren{A\Delta_t+(B+1)\Norm{\nabla f(\bw_t)}^2+\sigma^2}\\ \leq & 2\eta^2\paren{8A\Delta_0+(B+1)G^2+\sigma^2}\leq r^2.
    \end{align*}
    By similar analysis to Appendix~\ref{Apx:proof_thm_a.s.ABC_adaptive}, we obtain that with probability at least $1-\delta$, 
     
    \begin{equation}\label{eq:stoppingtime_sum_a.s.ABC_constant}
        \begin{aligned}
        f(\bw_\tau)&\leq f(\bw_0)-\frac{1}{2}\sum_{t=0}^{\tau-1}\eta\Norm{\nabla f(\bw_t)}^2+\sum_{t=0}^{\tau-1}L\eta^2\paren{A\Delta_t+\sigma^2}+\eta\Norm{\nabla f(\bw_\tau)}\Norm{\bn_\tau}\\
        &\quad+\sqrt{2\sum_{t=0}^{\tau}\eta^2\Norm{\nabla f(\bw_t)}^2\Norm{\bn_t}^2\log\frac{1}{\delta}}\\
        &\leq f(\bw_0)-\frac{1}{2}\sum_{t=0}^{\tau-1}\eta\Norm{\nabla f(\bw_t)}^2+\sum_{t=0}^{\tau-1}L\eta^2\paren{A\Delta_t+\sigma^2}+\eta\Norm{\nabla f(\bw_\tau)}\Norm{\bn_\tau}\paren{1+\sqrt{2\log\frac{1}{\delta}}}\\
        &\quad+\sqrt{2\sum_{t=0}^{\tau-1}\eta^2\Norm{\nabla f(\bw_t)}^2\Norm{\bn_t}^2\log\frac{1}{\delta}}\\
        &\leq f(\bw_0)+TL\eta^2\paren{8A\Delta_0+\sigma^2}+\sqrt{2T\eta^2G^2\paren{8A\Delta_0+BG^2+\sigma^2}\log\frac{1}{\delta}}\\
        &\quad + \eta\Norm{\nabla f(\bw_\tau)}\Norm{\bn_\tau}\paren{1+\sqrt{2\log\frac{1}{\delta}}}\\
        &\leq f(\bw_0)+2\Delta_0+\eta\Norm{\nabla f(\bw_\tau)}\Norm{\bn_\tau}\paren{1+\sqrt{2\log\frac{1}{\delta}}},
    \end{aligned}
    \end{equation}
    
where the last inequality is due to Lemma~\ref{lem:sgd_stepsize_constant_ABC}.
Note that $\Norm{\bn_\tau}\leq \sqrt{A\Delta_\tau}+\sqrt{B}\Norm{\nabla f(\bw_\tau)}+\sigma\leq \frac{1}{2}\sqrt{A}\Delta_\tau+\frac{1}{2}\sqrt{A}+\sqrt{B}\Norm{\nabla f(\bw_\tau)}+\sigma$. Then we have
\begin{equation}\label{eq:stoppingtime_a.s.ABC_constant}
    \begin{aligned}
    &\eta\Norm{\nabla f(\bw_\tau)}\Norm{\bn_\tau}\paren{1+\sqrt{2\log\frac{1}{\delta}}}\\
    &=\paren{1+\sqrt{2\log\frac{1}
    {\delta}}}\Norm{\nabla f(\bw_\tau)}\paren{\frac{\eta}{2}\sqrt{A}\Delta_\tau+\eta\paren{\frac{1}{2}\sqrt{A}+\sqrt{B}\Norm{\nabla f(\bw_\tau)}+\sigma}}\\
    &\leq \frac{1}{2}\Delta_\tau+\Delta_0,
\end{aligned}
\end{equation}
where in the inequality we bound $\Norm{\nabla f(\bw_\tau)}$ as in \eqref{eq:stoppingtime_gradient_bound} and use Lemma~\ref{lem:sgd_stepsize_constant_ABC}.
Plugging \eqref{eq:stoppingtime_a.s.ABC_constant} into \eqref{eq:stoppingtime_sum_a.s.ABC_constant} we obtain that with probability at least $1-\delta$, 
\begin{align*}
    f(\bw_\tau)\leq f(\bw_0)+3\Delta_0+\frac{1}{2}\Delta_\tau.
\end{align*}
This means with probability at least $1-\delta$, $\Delta_\tau\leq 8\Delta_0$ and thus $\tau=T$. 
Similar to the analysis in Appendix~\ref{Apx:proof_thm_a.s.ABC_adaptive}, 
we have
$$\frac{1}{16}\min_{t<T}\Norm{\nabla f(\bw_t)}^2\leq \frac{\Delta_0}{\eta T}.$$
By Lemma~\ref{lem:sgd_stepsize_complexity_ABC}, we get the desired result.
\end{proof}

\section{Extension to Sub-gaussian Noise}\label{apx:subgaussian}

We first present the sub-Gaussian version of the ABC inequality noise assumption.

\begin{assumption}\label{assumption:subgaussian_ABC}
    $\mathbb{P}\left(\|\bn_t\|\geq t\right)\leq 2\exp\left\{
-\frac{t^2}{c\left(A\Delta_t+B\|\nabla f(x_t)\|^2+\sigma^2\right)}\right\}$  for some $c>0$ and $\forall t>0$.
\end{assumption}

Let $E_t=c(A\Delta_t+BG_t^2+\sigma^2)\log\left(\frac{2T}{\delta}\right)$. We have
$$P(\cup_{t=0}^{T-1}\|\bn_t\|^2>E_t)\leq\sum_{t=0}^{T-1}P(\|\bn_t\|^2>E_t)\leq 2Te^{-\log(2T/\delta)}=\delta.$$
Then, with probability at least $1-\delta$, we have
$$\|\bn_t\|^2\leq c(A\Delta_t+BG_t^2+\sigma^2)\log\left(\frac{2T}{\delta}\right).$$

Therefore, the convergence rate under Assumption~\ref{assumption:subgaussian_ABC} exceeds that in Theorems~\ref{thm:SGD_adaptive_a.s.ABC} and \ref{thm:SGD_constant_a.s.ABC} by at most a logarithmic factor.

\end{document}